%%%%%%%% ICML 2024 EXAMPLE LATEX SUBMISSION FILE %%%%%%%%%%%%%%%%%

\documentclass{article}

% Recommended, but optional, packages for figures and better typesetting:
\usepackage{microtype}
\usepackage{graphicx}
\usepackage{subcaption}
\usepackage{booktabs} % for professional tables

% hyperref makes hyperlinks in the resulting PDF.
% If your build breaks (sometimes temporarily if a hyperlink spans a page)
% please comment out the following usepackage line and replace
% \usepackage{icml2024} with \usepackage[nohyperref]{icml2024} above.

% Attempt to make hyperref and algorithmic work together better:

% Use the following line for the initial blind version submitted for review:
% \usepackage{icml2025}

% If accepted, instead use the following line for the camera-ready submission:
\usepackage[accepted]{icml2025}

\usepackage{hyperref}
\PassOptionsToPackage{hyphens}{url}\usepackage{hyperref}
\usepackage{xurl}

% For theorems and such
\usepackage{amsmath}
\usepackage{amssymb}
\usepackage{mathtools}
\usepackage{amsthm}
% for indicator function
\usepackage{bbm}
% \usepackage[ruled]{algorithm2e}

% if you use cleveref..
\usepackage[capitalize,noabbrev]{cleveref}

%%%%%%%%%%%%%%%%%%%%%%%%%%%%%%%%
% THEOREMS
%%%%%%%%%%%%%%%%%%%%%%%%%%%%%%%%
\theoremstyle{plain}
\newtheorem{theorem}{Theorem}[section]
\newtheorem{proposition}[theorem]{Proposition}
\newtheorem{lemma}[theorem]{Lemma}
\newtheorem{corollary}[theorem]{Corollary}
\theoremstyle{definition}
\newtheorem{definition}[theorem]{Definition}

\theoremstyle{remark}

% Todonotes is useful during development; simply uncomment the next line
%    and comment out the line below the next line to turn off comments
%\usepackage[disable,textsize=tiny]{todonotes}
% \usepackage[textsize=tiny]{todonotes}

% Custom Imports & Abbriviations
\usepackage{enumitem}
\usepackage{mathrsfs}
% Used for commenting
% \usepackage[on]{editing}
%\usepackage[off]{editing}
\usepackage[accept]{editing}
\usepackage{xcolor}
\usepackage{wrapfig}
\usepackage{multirow}

\usepackage{booktabs}
\usepackage{siunitx}

\sisetup{
  detect-weight=true,
  group-digits=false,
  table-align-text-post=false
}

\theoremstyle{definition}
\newtheorem{example}{Example}

\newcommand{\interv}[1]{\operatorname{do}({#1})}

\Crefname{equation}{Eq.}{Eqs.}
\Crefname{align}{Eq.}{Eqs.}
\Crefname{figure}{Fig.}{Figs.}
\Crefname{tabular}{Tab.}{Tabs.}
\Crefname{theorem}{Thm.}{Thms.}
\Crefname{lemma}{Lem.}{Lems.}
\Crefname{proposition}{Prop.}{Props.}
\Crefname{definition}{Def.}{Defs.}
\Crefname{algorithm}{Algo.}{Algs.}
\Crefname{corollary}{Corol.}{Corol.}
\Crefname{section}{Sec.}{Sec.}
\Crefname{appendix}{App.}{Apps.}
\Crefname{prop}{Prop.}{Props.}
\Crefname{task}{Task.}{Tasks.}
\Crefname{setting}{Setting.}{Settings.}
\Crefname{example}{Example}{Expamples}
% \Crefname{experiment}{Experiment}{Experiments}

\newcommand{\I}{\boldsymbol{1}}

\newcommand{\M}{\mathcal{M}}

\newcommand{\doo}{\text{do}}

%semantic

\def\*#1{\boldsymbol{#1}}
\def\1#1{\mathcal{#1}}
\def\2#1{\mathscr{#1}}
\def\3#1{\mathbb{#1}}

% Used for tikz plot
\usepackage{pgfplots}
\DeclareUnicodeCharacter{2212}{−}
\usepgfplotslibrary{groupplots,dateplot}
\usetikzlibrary{patterns,shapes.arrows}
\pgfplotsset{compat=newest}

\usetikzlibrary{positioning,arrows,shapes,shapes.arrows,arrows.meta,trees,shapes.misc,shapes.geometric,decorations.pathreplacing,automata}
\tikzset{%
  >={Latex[width=1.5mm,length=2mm]},
  vertex/.style={draw,circle,inner sep=0mm,semithick,minimum width=4mm},
  point/.style = {circle, draw, inner sep=0.04cm,fill,node contents={}},
  uvertex/.style={outer sep=0},
  bidir/.style={<->,dashed, line width=0.25mm},
  dir/.style={->, line width=0.25mm},
  regime/.style={shape=rectangle,fill=black,inner sep=0pt,minimum size=3pt,draw},
  node distance=1cm,
  font=\scriptsize\sffamily
}

\definecolor{betterred}{RGB}{228,26,28}
\definecolor{betterblue}{RGB}{55,126,184}
\definecolor{bettergreen}{RGB}{77,175,74}
\definecolor{betterpurple}{RGB}{152,78,163}
\definecolor{LightCyan}{rgb}{0.88,1,1}

% https://tex.stackexchange.com/questions/20425/z-level-in-tikz/20426#20426
\pgfdeclarelayer{back}
\pgfsetlayers{back,main}

\makeatletter
\pgfkeys{%
  /tikz/on layer/.code={
    \def\tikz@path@do@at@end{\endpgfonlayer\endgroup\tikz@path@do@at@end}%
    \pgfonlayer{#1}\begingroup%
  }%
}
\makeatother

\makeatletter
\newcommand{\xdashleftrightarrow}[2][]{\ext@arrow 3359\leftrightarrowfill@@{#1}{#2}}
\def\rightarrowfill@@{\arrowfill@@\relax\relbar\rightarrow}
\def\leftarrowfill@@{\arrowfill@@\leftarrow\relbar\relax}
\def\leftrightarrowfill@@{\arrowfill@@\leftarrow\relbar\rightarrow}
\def\arrowfill@@#1#2#3#4{%
  $\m@th\thickmuskip0mu\medmuskip\thickmuskip\thinmuskip\thickmuskip
   \relax#4#1
   \xleaders\hbox{$#4#2$}\hfill
   #3$%
}
\makeatother

% https://tex.stackexchange.com/questions/27171/padded-boundary-of-convex-hull

% The \icmltitle you define below is probably too long as a header.
% Therefore, a short form for the running title is supplied here:
\icmltitlerunning{Automatic Reward Shaping from Confounded Offline Data}
% \icmltitlerunning{Offline-to-Online RL via Reward Shaping}

\begin{document}

\twocolumn[
% \icmltitle{Efficient Automatic Causal Reward Shaping via Confounded Offline Data}
\icmltitle{Automatic Reward Shaping from Confounded Offline Data}
% \icmltitle{\texorpdfstring{Efficient and Robust Offline-to-Online Reinforcement Learning \\ via Reward Shaping}{Efficient and Robust Offline-to-Online Reinforcement Learning via Reward Shaping}}

% It is OKAY to include author information, even for blind
% submissions: the style file will automatically remove it for you
% unless you've provided the [accepted] option to the icml2024
% package.

% List of affiliations: The first argument should be a (short)
% identifier you will use later to specify author affiliations
% Academic affiliations should list Department, University, City, Region, Country
% Industry affiliations should list Company, City, Region, Country

% You can specify symbols, otherwise they are numbered in order.
% Ideally, you should not use this facility. Affiliations will be numbered
% in order of appearance and this is the preferred way.
\icmlsetsymbol{equal}{*}

\begin{icmlauthorlist}
\icmlauthor{Mingxuan Li}{columbia}
\icmlauthor{Junzhe Zhang}{syracuse}
\icmlauthor{Elias Bareinboim}{columbia}
% \icmlauthor{Firstname4 Lastname4}{equal,sch}
% \icmlauthor{Firstname5 Lastname5}{equal,yyy}
% \icmlauthor{Firstname6 Lastname6}{sch,yyy,comp}
% \icmlauthor{Firstname7 Lastname7}{comp}
% %\icmlauthor{}{sch}
% \icmlauthor{Firstname8 Lastname8}{sch}
% \icmlauthor{Firstname8 Lastname8}{yyy,comp}
%\icmlauthor{}{sch}
%\icmlauthor{}{sch}
\end{icmlauthorlist}

\icmlaffiliation{columbia}{Causal AI Lab, Columbia University, New York, USA}
\icmlaffiliation{syracuse}{Department of Electrical Engineering and Computer Science, Syracuse University, New York, USA}

\icmlcorrespondingauthor{Mingxuan Li}{ml@cs.columbia.edu}

% You may provide any keywords that you
% find helpful for describing your paper; these are used to populate
% the "keywords" metadata in the PDF but will not be shown in the document
\icmlkeywords{Machine Learning, ICML}

\vskip 0.3in
]
% this must go after the closing bracket ] following \twocolumn[ ...

% This command actually creates the footnote in the first column
% listing the affiliations and the copyright notice.
% The command takes one argument, which is text to display at the start of the footnote.
% The \icmlEqualContribution command is standard text for equal contribution.
% Remove it (just {}) if you do not need this facility.

\printAffiliationsAndNotice{}  % leave blank if no need to mention equal contribution
% \printAffiliationsAndNotice{\icmlEqualContribution} % otherwise use the standard text.

\begin{abstract}
Reward shaping has been demonstrated to be an effective technique for accelerating the learning process of reinforcement learning (RL) agents. 
While successful in empirical applications, the design of a good shaping function is less well understood in principle and thus often relies on domain expertise and manual design.
To overcome this limitation, we propose a novel automated approach for designing reward functions from offline data, possibly contaminated with the unobserved confounding bias.
% We overcome this limitation by leveraging the abundant offline data, from which we calculate causal state value upper bounds as shaping functions from heterogeneous potentially confounded offline datasets.
% The new class of models provides a more precise measure of environmental complexity than the traditional (PO)MDP framework, thereby broadening the applicability of Markovian RL algorithms. 
% Drawing from recent advancements in confounding robust offline policy evaluation, we use state value bounds as potential functions and 
We propose to use causal state value upper bounds calculated from offline datasets as a conservative optimistic estimation of the optimal state value, which is then used as state potentials in Potential-Based Reward Shaping (PBRS). When applying our shaping function to a model-free learner based on UCB principles, we show that it enjoys a better gap-dependent regret bound than the learner without shaping.
% where only a subset of the state-action pairs retain the same regret of vanilla UCB. At the same time, its complement states can achieve asymptotic improvement in terms of learning performance. 
To the best of our knowledge, this is the first gap-dependent regret bound for PBRS in model-free learning with online exploration. Simulations support the theoretical findings. 

%We verify the quality of the causal shaping function and the effectiveness of the proposed pipeline in various confounded environments, shedding light on how causality can help design shaping rewards automatically.
% reward-shaping functions from offline causal bounds enhance online exploration and convergence in both discrete and continuous complex environments.
% We show that the solubility criterion provides a more precise measure of environmental complexity than the traditional (PO)MDP framework, thereby broadening the applicability of Markovian RL algorithms. Furthermore, leveraging solubility rather than Markovianity opens new possibilities for designing reward-shaping functions while maintaining optimal policies. 
\end{abstract}
\section{Introduction}
\label{sec:intro}
Reward shaping is an effective technique for improving sample efficiency in reinforcement learning. It augments the original environment reward with extra learning signals such that the learner is guided towards high-rewarding states in the environment. Though if not designed properly, there is a risk of misleading the agent towards suboptimal policies~\citep{DBLP:journals/ras/SaksidaRT97,DBLP:conf/icml/RandlovA98}. \textbf{P}otential \textbf{B}ased \textbf{R}eward \textbf{S}haping (PBRS, ~\citet{pbrs}) solves this problem by constructing the shaping functions as the difference between state potentials. In this way, the set of optimal policies after reward shaping is guaranteed to be unchanged compared with the one without shaping. It has been followed in many practical applications since then. 

The core of PBRS is the design of potential functions, which denote the potential of a state or how good a state is. First, people generally rely on domain expertise to compose such potential functions. This dependence on experts' knowledge could make designing potential functions expensive and time-consuming. One may even argue these extensive human efforts are against the promise of AI that domain experts should be free from handcrafting solutions. Moreover, this designing process endures risks of misspecification due to human biases, which in turn could slow down the training substantially or lead to erroneous agents~\citep{pbrs,DBLP:conf/icml/RandlovA98,DBLP:conf/iclr/PanBS22}. An ongoing line of research exists to simultaneously automate the process of learning the shaping function and training the online agent \citep{DBLP:conf/icml/PathakAED17,DBLP:conf/icml/YuanLJZ23,DBLP:conf/iclr/RaileanuR20,DBLP:conf/nips/DevidzeKS22, rsmeta,rewardagent}. However, these methods involve non-trivial optimization procedures relying on prior parametric knowledge over the potential function. The risks of misspecification persist. In other words, designing potential functions is still a significant obstacle when applying PBRS. 

An alternative strategy is to learn the shaping function from previous offline data, possibly collected by different behavior policies or observing human operators interacting with the environment \citep{shapingdemonstration,DBLP:conf/corl/MezghaniSBLA22, DBLP:conf/cikm/ZhangQL024}. In their seminal work, \citet{pbrs} noted that the optimal state value is a good candidate for potential functions, characterizing the ``state potentials'' in the underlying environment. In the field of reinforcement learning, the problem of evaluating optimal state value functions from past offline data has been studied under the rubrics of off-policy learning or batch learning \citep{sutton2018reinforcement,levine2020offline}. Several algorithms and methods have been proposed, including Q-learning \citep{watkins1989learning,watkins-qlearning}, importance sampling \citep{swaminathan2015counterfactual,jiang2015doubly}, and temporal difference \citep{precup2000eligibility,munos2016safe}. These algorithms rely on the critical assumption that there is no unobserved confounding (NUC) in the offline data \citep{murphy2003optimal,murphy2005generalization}. One might be able to enforce this assumption by deliberately controlling the behavior policies generating the data. However, in many practical applications, the NUC condition can be difficult to enforce and, consequently, does not necessarily hold. In these cases, directly applying standard off-policy learning could introduce bias in the estimation, leading to misspecified shaping functions. For example, when some input states to the behavior policy are not fully observed, these states could become unobserved confounders, introducing spurious correlations in the offline data. When NUC is violated, the effects of candidate policies are generally not \emph{identifiable}, i.e., the model assumptions are insufficient to uniquely determine the value function from the offline data, regardless of the sample size \citep{pearl2009causality,zhang2019near}.

This paper addresses the challenges of confounding bias in designing potential functions for reward shaping. We study the problem of constructing reward-shaping functions automatically via confounded offline data from a causal perspective. We focus on the environment of Confounded Markov Decision Processes (CMDPs) where unobserved confounders generally exist at each stage of decision-making \citet{DBLP:conf/clear2/ZhangB22,tennenholtz2020off}. Under CMDPs, we demonstrate the difficulties of manually designing proper reward shaping functions with the presence of unobserved confounding, due to the mismatch in the observed input states.
Then we show how one can extrapolate informative shaping functions from confounded offline data using partial causal identification techniques \citep{Manski1989NonparametricBO}. Finally, we develop novel online reinforcement learning algorithms that can leverage the derived shaping functions to improve the learning performance in terms of the sample complexity. More specifically, our contributions are summarized as follows: 
\begin{enumerate}[label=\textbullet, leftmargin=20pt, topsep=0pt, parsep=0pt, itemsep=1pt]
    % \item We propose a new class of environment models, Confounded MDPs (CMDPs) to unify the offline dataset generation and online learning from causal lens, 
    % that provably expands the applicability of Markovian RL algorithms and we show that PBRS still holds under this broader class of environments;
    % too weak, maybe just MDP with local confounding. new way of offline to online
    % A general theory (with examples) — including algorithms and conditions — on whether and how shaping functions can be combined.
    % \item We propose a modified Q-learning algorithm with offline shaping rewards. We validate the algorithm's efficiency with both theoretical analysis and thorough experiments.
    \item We propose the first theoretically justified (\cref{thm:causal bellman eq}), data-driven method (\cref{alg:potential}) for learning reward shaping functions from confounded offline data.
    \item We introduce a model-free UCB algorithm that can improve performance by leveraging confounded offline data using the derived shaping functions (\cref{alg:q-ucbshaping}).
    \item We derive a novel gap-dependent regret bound for the proposed UCB algorithm. Our analysis reveals how and under what condition (\cref{def:conservative optimism}) the derived shaping functions affect the learning efficiency of future online learners (\cref{thm:main regret}). 
\end{enumerate}
Due to limited space, related work, experiment details and all the proof are provided in \Cref{sec:related work,sec:experiments details,sec:proof details}, respectively.

%the desired property of the extracted potential functions. We show that when state potentials upper bounds the optimal state value, it is guaranteed to improve the learning efficiency of Q-UCB via a gap-dependent regret analysis. The proposed online learner, Q-UCB Shaping, is then evaluated empirically. 
% We then use behavioral state value bounds from confounded offline datasets as potential functions. Though the value bound may not correctly capture the optimal state value, we control its quality via an extra parameter and justify the sample efficiency of the model free learner (Q-UCB~\citep{qucb}). 
% When the offline data provides a good bound of the optimal state value, the online agent will be provably more efficient with our shaping function.  
% \input{sections/2relatedwork}
\textbf{Notations.} We will consistently use capital letters ($V$) to denote random variables, lowercase letters ($v$) for their values, and cursive $\mathcal{V}$ to denote their domains. Fix indices $i, j \in \3N$. We use bold capital letters ($\boldsymbol{V}$) to denote a set of random variables and let $|\boldsymbol{V}|$ denote its cardinality of set $\boldsymbol{V}$. Finally, $\I_{\*Z = \*z}$ is an indicator function that returns $1$ if event $\*Z = \*z$ holds true; otherwise, it returns $0$. 

\section{Challenges of Designing Reward Shaping in the Face of Unobserved Confounders}
\label{sec:challenge of shaping}
%We will focus on the policy evaluation problem of an agent operating in a Markov Decision Process (MDP) \citep{puterman1994markov} over a series of interventions $t = 1, 2, \dots$. For every time step $t$, the agent observes the current state $S_h$, performs an action $\doo(X_h)$, receives a subsequent reward $Y_h$, and moves to the next state $S_{h+1}$. Values of the action $X_h$ are selected by sampling from a stationary policy $\pi(x \mid s)$, which is a function mapping from the domain of the observed state $S_h$ to the probability space over the domain of action $X_h$. Let $\*U_h$ be an unobserved noise independently drawn from an exogenous distribution $P(\*U)$. Values of the reward $Y_h$ and the next state $S_{h+1}$ are, respectively, determined by structural functions $y_h \gets f_Y(s_h, x_h, \*u_h)$ and $s_{h+1} \gets f_S(s_h, x_h, \*u_h)$, taking as input the current state $S_h$, action $X_h$, and latent noise $\*U_h$; values of $S_1$ are drawn from an initial distribution $P(S_1)$. We will consistently use $\1X$, $\1S$, and $\1Y$ to denote the domain of every action $X_h$, state $S_h$, and reward $Y_h$. Like a standard discrete MDP, domains of actions $\1X$ and states $\1S$ are assumed to be finite; rewards are bounded in a real interval $\1Y \triangleq [a, b] \subset \3R$. Naturally, the agent operating in this environment defines an interventional distribution $P_{\pi}$ summarizing the consequences of its actions.

We will focus on a sequential decision-making setting in the Markov Decision Process (MDP, \citet{putermanMarkovDecisionProcesses1994book}) where the agent intervenes on a sequence of actions $X_1, \dots, X_H$ in order to optimize the cumulative return over reward signals $Y_1, \dots, Y_H$; $H \in \3N$ is a finite horizon. 

Standard MDP formalism focuses on the perspective of the learners who could actively intervene in the environment. Consequently, the data collected from randomized experiments is free from the contamination of unobserved confounding bias and is generally assumed away in the model. However, when considering offline data collected by passive observation, the learner may not necessarily have deliberate control over the behavioral policy generating the data. Consequently, this could lead to confounding bias in various decision-making tasks, including off-policy learning \citep{kallus2018confounding,lupessimism,zhang2024eligibility}, and imitation learning \citep{zhang2020causal,sequentialcausal,ruan2024causal}. In this paper, we will consider an extended family of MDPs explicitly modeling the presence of unobserved confounders when generating offline data.

\begin{figure*}[ht]
% \vskip 0.2in
\centering
% \begin{center}
    \begin{subfigure}[b]{.5\linewidth}\centering%(a)
      \begin{tikzpicture}
        \def\outerr{3}
        \def\innerr{2.7}
        \node[vertex] at (0, 0) (S1) {S\textsubscript{1}};
        \node[vertex] at (1, 1) (X1) {X\textsubscript{1}};
        \node[vertex] at (1, -1) (Y1) {Y\textsubscript{1}};
        \node[vertex] at (2, 0) (S2) {S\textsubscript{2}};
        \node[vertex] at (3, 1) (X2) {X\textsubscript{2}};
        \node[vertex] at (3, -1) (Y2) {Y\textsubscript{2}};
        \node[vertex] at (4, 0) (S3) {S\textsubscript{3}};
        \node[vertex] at (5, 1) (X3) {X\textsubscript{3}};
        \node[vertex] at (5, -1) (Y3) {Y\textsubscript{3}};
        \draw[dir] (S1) edge [bend left=0] (Y1);
        \draw[dir] (X1) edge [bend left=0] (Y1);
        \draw[dir] (X2) edge [bend left=0] (Y2);
        \draw[dir] (S1) edge [bend left=0] (S2);
        \draw[dir] (X1) edge [bend left=0] (S2);
        \draw[dir] (S2) edge [bend left=0] (Y2);
        \draw[dir] (S2) edge [bend left=0] (S3);
        \draw[dir] (X2) edge [bend left=0] (S3);
        \draw[dir] (X3) edge [bend left=0] (Y3);
        \draw[dir] (S3) edge [bend left=0] (Y3);
        \draw[dir] (S1) edge [bend left=0] (X1);
        \draw[dir] (S2) edge [bend left=0] (X2);
        \draw[dir] (S3) edge [bend left=0] (X3);
        \draw[dir] (S3) edge [bend left=0] +(1.8,0);
        \draw[bidir] (X1) edge [bend left=36] (S2);
        \draw[bidir] (X2) edge [bend left=36] (S3);
        \draw[bidir] (S2) edge [bend left=36] (Y1);
        \draw[bidir] (S3) edge [bend left=36] (Y2);
        \draw[bidir] (X1) edge [bend left=34] (Y1);
        \draw[bidir] (X2) edge [bend left=34] (Y2);
        \draw[bidir] (X3) edge [bend left=34] (Y3);
        \begin{pgfonlayer}{back}
          \node[circle,fill=betterblue!65,draw=none,minimum size=2*\innerr mm] at (X1) {};
		  \node[circle,fill=betterblue!65,draw=none,minimum size=2*\innerr mm] at (X2) {};
		  \node[circle,fill=betterblue!65,draw=none,minimum size=2*\innerr mm] at (X3) {};
		  \node[circle,fill=betterred!65,draw=none,minimum size=2*\innerr mm] at (Y1) {};
		  \node[circle,fill=betterred!65,draw=none,minimum size=2*\innerr mm] at (Y2) {};
		  \node[circle,fill=betterred!65,draw=none,minimum size=2*\innerr mm] at (Y3) {};
	  \end{pgfonlayer}
      \end{tikzpicture}
      \caption{CMDP, $\mathcal{M}$}
      \label{fig:cmdp offline}
    \end{subfigure}\hfill
    \begin{subfigure}[b]{.5\linewidth}\centering%(b)
      \begin{tikzpicture}            
        \def\outerr{3}
        \def\innerr{2.7}
        \node[vertex] at (0, 0) (S1) {S\textsubscript{1}};
        \node[vertex] at (1, 1) (X1) {X\textsubscript{1}};
        \node[vertex] at (1, -1) (Y1) {Y\textsubscript{1}};
        \node[vertex] at (2, 0) (S2) {S\textsubscript{2}};
        \node[vertex] at (3, 1) (X2) {X\textsubscript{2}};
        \node[vertex] at (3, -1) (Y2) {Y\textsubscript{2}};
        \node[vertex] at (4, 0) (S3) {S\textsubscript{3}};
        \node[vertex] at (5, 1) (X3) {X\textsubscript{3}};
        \node[vertex] at (5, -1) (Y3) {Y\textsubscript{3}};
        \node[regime, label={[shift={(-0.05,-0.05)}] \scriptsize $\pi$ }] (p1) at (0.25, 1) {};
        \node[regime, label={[shift={(-0.05,-0.05)}] \scriptsize $\pi$ }] (p2) at (2.25, 1) {};
        \node[regime, label={[shift={(-0.05,-0.05)}] \scriptsize $\pi$ }] (p3) at (4.25, 1) {};
        \draw[dir] (S1) edge [bend left=0] (Y1);
        \draw[dir] (X1) edge [bend left=0] (Y1);
        \draw[dir] (X2) edge [bend left=0] (Y2);
        \draw[dir] (S1) edge [bend left=0] (S2);
        \draw[dir] (X1) edge [bend left=0] (S2);
        \draw[dir] (S2) edge [bend left=0] (Y2);
        \draw[dir] (S2) edge [bend left=0] (S3);
        \draw[dir] (X2) edge [bend left=0] (S3);
        \draw[dir] (X3) edge [bend left=0] (Y3);
        \draw[dir] (S3) edge [bend left=0] (Y3);
        \draw[dir] (S1) edge [bend left=0] (X1);
        \draw[dir] (S2) edge [bend left=0] (X2);
        \draw[dir] (S3) edge [bend left=0] (X3);
        \draw[dir] (S3) edge [bend left=0] +(1.8,0);
        % \draw[dir] (\sigma1) edge [bend left=0] (X1);
        % \draw[dir] (\sigma2) edge [bend left=0] (X2);
        % \draw[dir] (\sigma3) edge [bend left=0] (X3);
        % \draw[bidir] (X1) edge [bend left=36] (S2);
        % \draw[bidir] (X2) edge [bend left=36] (S3);
        \draw[bidir] (S2) edge [bend left=36] (Y1);
        \draw[bidir] (S3) edge [bend left=36] (Y2);
        % \draw[bidir] (X1) edge [bend left=34] (Y1);
        % \draw[bidir] (X2) edge [bend left=34] (Y2);
        % \draw[bidir] (X3) edge [bend left=34] (Y3);
        \draw[dir] (p1) to (X1);
	    \draw[dir] (p2) to (X2);
	    \draw[dir] (p3) to (X3);
        \begin{pgfonlayer}{back}
          \node[circle,fill=betterblue!65,draw=none,minimum size=2*\innerr mm] at (X1) {};
		  \node[circle,fill=betterblue!65,draw=none,minimum size=2*\innerr mm] at (X2) {};
		  \node[circle,fill=betterblue!65,draw=none,minimum size=2*\innerr mm] at (X3) {};
		  \node[circle,fill=betterred!65,draw=none,minimum size=2*\innerr mm] at (Y1) {};
		  \node[circle,fill=betterred!65,draw=none,minimum size=2*\innerr mm] at (Y2) {};
		  \node[circle,fill=betterred!65,draw=none,minimum size=2*\innerr mm] at (Y3) {};
	  \end{pgfonlayer}
      \end{tikzpicture}
      \caption{CMDP - Online Learning, $\mathcal{M}_\pi$}
      \label{fig:cmdp online}
    \end{subfigure}\hfill
\label{fig:cmdp}
\vspace{-.1in}
\caption{(a) Causal diagram of the CMDP modeling the shaping function designing process; (b) Causal diagram of the CMDP modeling the online learning process under policy $\interv{\pi}$.}
% \end{center}
\end{figure*}
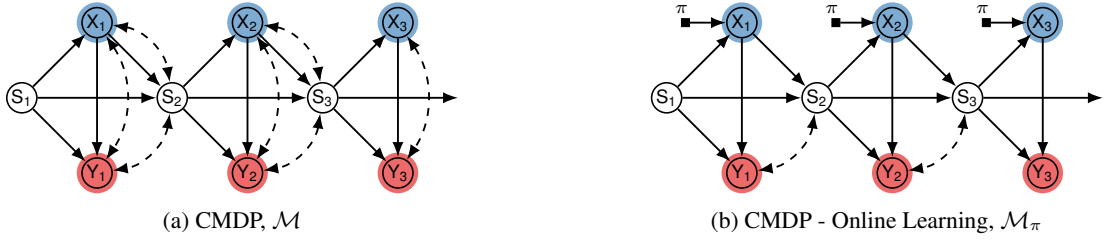

\begin{definition}
\label{def:cmdp}
    A Confounded Markov Decision Process (CMDP) $\mathcal{M}$ is a tuple of $\langle \mathcal{S}, \mathcal{X}, \mathcal{Y}, \mathcal{U}, H, \3F, \3P \rangle$ where,
    \begin{itemize}[label=\textbullet, leftmargin=20pt, topsep=0pt, parsep=0pt, itemsep=1pt]
        \item $\mathcal{S}, \mathcal{X}, \mathcal{Y}$ are, respectively, the space of observed states, actions, and rewards;
        \item $\mathcal{U}$ is the space of unobserved exogenous noise;
        \item $H \in \3N$ is a finite horizon;
        \item $\3F$ is a set consisting of the transition function $\tau_h: \1S \times \1X \times \1U \mapsto \1S$, behavioral policy $\beta_h: \1S \times \1U \mapsto \1X$, and reward function $r_h: \1S \times \1X \times \1U \mapsto \1Y$ for every time step $h = 1, \dots, H$;
        \item $\3P$ is a set of distributions $P_h$ over the unobserved domain $\1U$ for every time step $h = 1, \dots, H$.
    \end{itemize}
\end{definition}
Consider a demonstrator agent interacting with a CMDP. For every time step $h = 1, \dots, H$, the nature draws an exogenous noise $U_h$ from the distribution $P(\1U)$; the demonstrator performs an action $X_h \gets \beta_h(S_h, U_h)$, receives a subsequent reward $Y_h \gets r_h(S_h, X_h, U_h)$, and moves to the next state $S_{h + 1} \gets \tau_h(S_h, X_h, U_h)$. The observed trajectories of the demonstrator (from the learner's perspective) are thus summarized as the observational distribution $P(\bar{\*X}, \bar{\*S}, \bar{\*Y})$.\footnote{We will consistently use $\bar{\*X}, \bar{\*S}, \bar{\*Y}$ to represent sequences $\{X_1, \dots, X_H\}$, $\{S_1, \dots, S_H\}$ and $\{Y_1, \dots, Y_H\}$}
%\begin{equation}
%    \begin{split}
%        P(\bar{\*x}, \bar{\*s}, \bar{\*y}) = P(s_1) \prod_{h=1}^H \bigg ( \int_{u_h}  \I_{s_{h+1} = f_S(s_h, x_h, %u_h)}&\\
%        \I_{x_h = f_X(s_h, u_h)}\I_{y_h = f_S(s_h, x_h, u_h)} P(u_h) \bigg )&
%    \end{split}
%\end{equation}

In the data-generating process described above, for every time step $t$, the exogenous noise $U_h$ becomes an unobserved confounder affecting the action $X_h$, reward $Y_h$, and next state $S_{h+1}$ simultaneously. Therefore, CMDP is also referred to MDP with Unobserved Confounders (MDPUC, \citet{DBLP:conf/clear2/ZhangB22}) and is a subclass of Confounded Partially Observed MDP~\citep{DBLP:conf/icml/ShiUHJ22,DBLP:conf/nips/MiaoQZ22,DBLP:journals/ior/BennettK24} where Markov property holds.
The following example, inspired by human walking dynamics~\citep{OConnor2009DirectiondependentCO,Wang2014SteppingIT}, demonstrates an instance of CMDP.
\begin{figure}[t]
    \centering
    \begin{subfigure}[b]{0.45\linewidth}
        \centering
        \includegraphics[width=\linewidth]{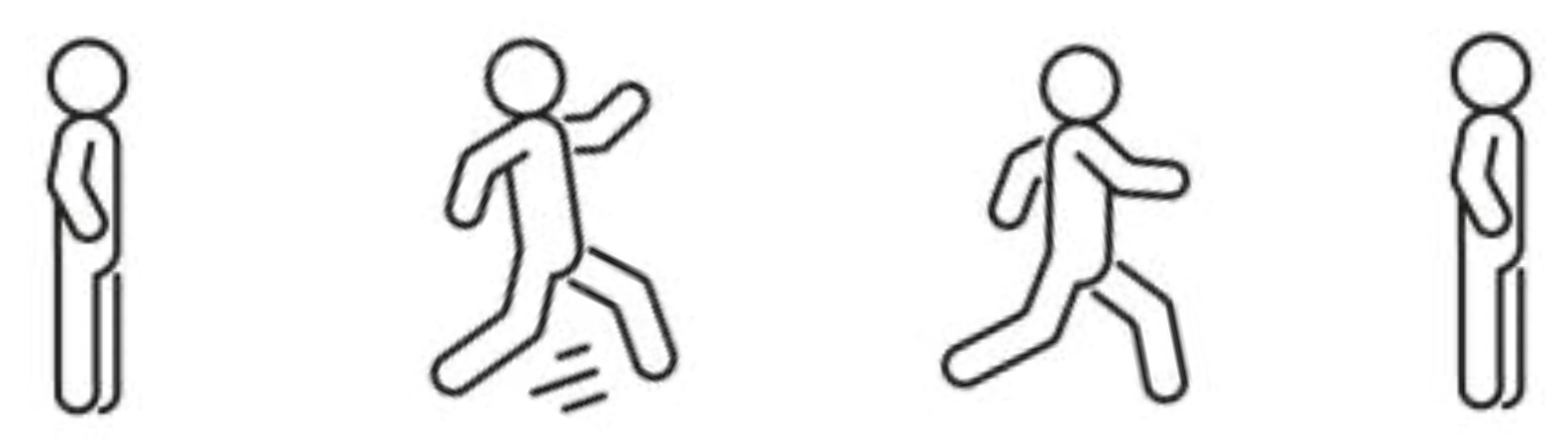}
        \caption{Falling to walk.}
        \label{fig:walk}
    \end{subfigure}
    \hfill
    \begin{subfigure}[b]{0.45\linewidth}
        \centering
        \includegraphics[width=\linewidth]{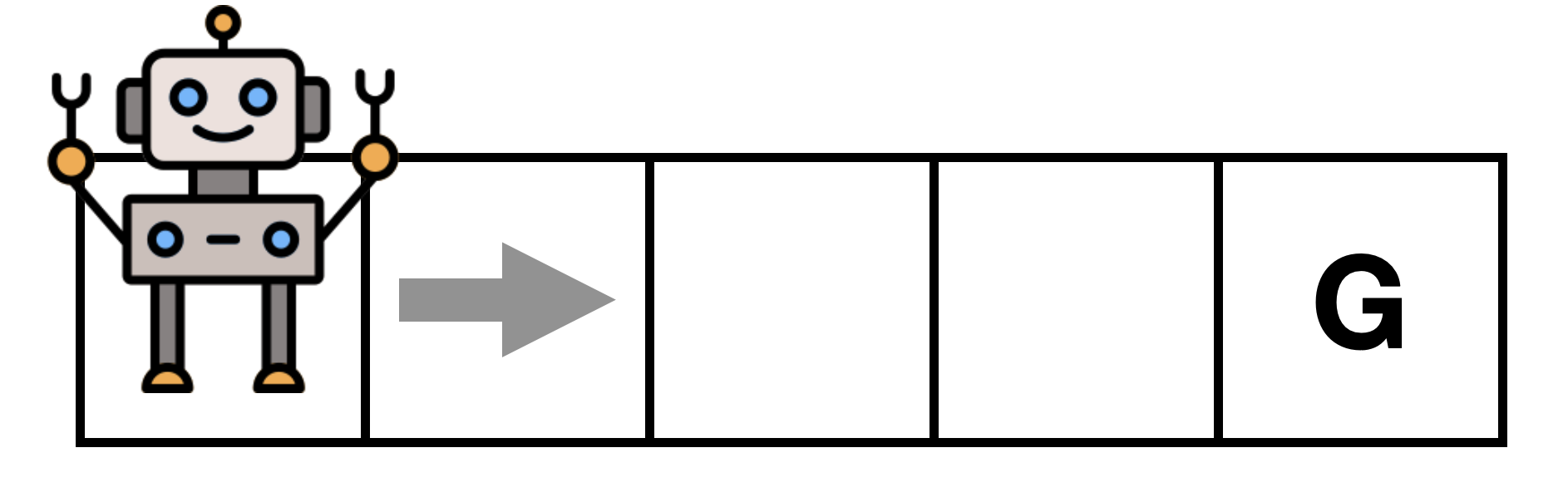}
        \caption{Target task.}
        \label{fig:robot example}
    \end{subfigure}
    \vspace{-.05in}
    \caption{Walking Robot Example.}
    \vspace{-.2in}
\end{figure}
\begin{example}[Walking Robot]
\label{example:robowalk}
    Consider a scenario where a robot learns to walk across a hallway. The agent can take two actions, \texttt{small-step} and \texttt{big-step}, denoted as $X_h = 0$ and $X_h = 1$, respectively. Whether the agent can move forward ($L_{h+1}$) depends on both the step size, $X_h$, and the current body stability status, $F_h$. It's defined as $L_{h+1} \gets L_h + \mathrm{1}[(\neg F_h\wedge (X_h=U_h))\vee(F_h\wedge \neg X_h)]$. The body stability is modeled as $F_{h+1} \gets \neg X_h$ if the body is in stable status $F_h = 1$ or $F_{h+1} \gets \neg X_h \oplus U_h$ if it is not stable $F_h = 0$ and needs to adjust step size accordingly. The required step size to stay stable is modeled as a uniform random noise $U_h$. The agent receives a reward $Y_h = 1$ if it reaches the goal location or it moves forward. It receives a reward $Y_h = -1$ if it cannot stabilize the body accordingly. Formally, $Y_h \gets 1$ if $L_{h+1} = G$ otherwise $Y_h \gets 1[(\neg F_h \wedge X_h=U_h)\vee (F_h\wedge \neg X_h)] - 1[\neg F_h \wedge X_h \oplus U_h]$. %\qedsymbol{}

    In this example, body stability $F_h$ affects whether the robot can step forward. We will consider offline data generated by two demonstrators. When the robot is currently unstable $F_h = 0$, a competent demonstrator following a behavioral policy $\beta^{(1)}_h: X_h \gets U_h$ takes a step in the size of exactly the latent noise $U_h$. As a result, it will always transit from being unstable $F_h = 0$ to a stable status $F_{h+1} = \neg X_h \oplus U_h = \neg U_h \oplus U_h = 1$. On the contrary, an incompetent demonstrator following $\beta^{(2)}_X: X_h \gets \neg U_h$ will always attempt to remain unstable $F_{h+1} = \neg X_h \oplus U_h = U_h \oplus U_h = 0$ even when the opposite is preferable. $\hfill \blacksquare$
     %At each time step, a perfect behavioral agent (human in this case) would make a \texttt{big-step} and transit into instable status $F_{h+1}=0$. Then the agent will decide what step size is required to stabilize the body again and move forward. This process is referred to as ``a controlled falling process"~\cref{fig:walk}. In our example, a step in the size of exactly $X_h = U_h$ should be made to move forward. However, a robot cannot easily decide the step size required to stabilize itself without the human level neurophysiology capability in keeping balance. This is modeled as the observational space mismatch where online learning (interventional) agents are unable to observe $U_h$ but only $F_h, L_h$, corresponding to the causal diagram in~\cref{fig:cmdp online}.
\end{example}
\Cref{fig:cmdp offline} shows the graphical representation (i.e., causal diagram) describing the generative process generating the offline data in CMDPs. More specifically, nodes represent observed variables $X_h, S_h, Y_h$, and arrows represent the functional relationships $\beta_h, \tau_h, r_h$ among them. The exogenous noise $U_h$ is often not explicitly shown. However, bi-directed arrows $X_h \leftrightarrow Y_h$ and $X_h \leftrightarrow S_{h+1}$ indicate the presence of an unobserved confounder (UC) $U_h$ affecting $X_h, Y_h$ and $S_{h+1}$, simultaneously. These bi-directed arrows characterize the spurious correlations among action $X_h$, reward $Y_h$, and state $S_{h+1}$ in the offline data, violating the NUC condition. Such violations could lead to challenges in evaluating value function and reward shaping, which we will discuss in the next section.

% \begin{theorem}[Markov Property of CMDP]
% \label{thm:markov cmdp}
% For any CMDP $\mathcal{M}$, for every time step $h=2,..., H-1$, the following holds, 
% \begin{align}
%     \left( \boldsymbol{V}_{1:h-1} \independent \boldsymbol{V}_{h+1:H} | S_h\right) \text{ in } \mathcal{G},
% \end{align}
% where $\boldsymbol{V}_{i:j} = \boldsymbol{S}_{i:j} \cup \boldsymbol{X}_{i:j} \cup \boldsymbol{Y}_{i:j}$, $i, j=1,...,H$ and $\mathcal{G}$ is the causal diagram of CMDP $\mathcal{M}$.
% \end{theorem}
\subsection{Potential-Based Reward Shaping}
A policy $\pi$ in a CMDP $\1M$ is a sequence of decision rules $\pi_h: \1S \mapsto \1X$, for every step $h = 1, \dots, H$, mapping from state to action. Similarly, $\pi_h(x_h \mid s_h)$ is a stochastic policy mapping from state space $\1S$ to a distribution over action space $\1X$. An intervention $\doo(\pi)$ is an operation that replaces the behavioral policy $\beta_h$ in model $\1M$ with the decision rule $\pi_h$ for every step $h$.
%As a special case, $\doo(x)$ is an atomic intervention setting values of action $X_h$ as a constant $x \in \1X$. 
Let $\1M_{\pi}$ be the submodel induced by intervention $\doo(\pi)$. \Cref{fig:cmdp online} shows the graphical representation of the data generating process in the submodel $\1M_{\pi}$; bi-directed arrows are now removed.

The interventional distribution $P_{\pi}(\bar{\*X}, \bar{\*S}, \bar{\*Y})$ is defined as the joint distribution over observed variables in $\1M_{\pi}$, i.e.,
\begin{equation}
    \begin{split}
        P_{\pi}(\bar{\*x}, \bar{\*s}, \bar{\*y}) = P(s_1) \prod_{h = 1}^{H} \bigg (\pi_h(x_h \mid s_h)&\\
    \1T_h(s_h, x_h, s_{h+1})\1R_h(s_h, x_h, y_h)&\bigg)
    \end{split}
\end{equation}
where the transition distribution $\1T_h$ and the reward distribution $\1R_h$ are given by, for $h = 1, \dots, H$,
\begin{align}
	\1T_h(s, x, s')  &=  \int_{u} \I_{s' = \tau_h(s, x, u)} P_h(u) \\
	\1R_h(s, x, y)  &= \int_{u} \I_{y = r_h(s, x, u)} P_h(u)
\end{align}
For convenience, we write the reward function $\1R_h(s, x)$ as the expected value $\sum_{y} y \1R_h(s, x, y)$. 

For reinforcement learning tasks, the agent's goal is to learn an optimal policy $\pi^*$ maximizing the cumulative reward in CMDP $\1M$, i.e., $\pi^* = \arg\max_{\pi} E_{\pi} \left [\sum_{h = 1}^{H} Y_h \right ]$. In analysis, we also evaluate the state value function $V^{\pi}_h(s) = E_{\pi} \left [\sum_{t = h} Y_t \mid S_h = s\right ]$ following a policy $\pi$ for every step $h = 1, \dots, H$. A state-action value function is defined as $Q^{\pi}_h(s, x) = E_{\pi} \left [\sum_{t = h} Y_t \mid S_h = s, X_h = x\right ]$.

Reward shaping is a popular line of techniques for incorporating domain knowledge during policy learning. Common approaches such as Potential-Based Reward Shaping (PBRS, \citet{pbrs}) add supplemental signals to the reward function so that it would be easier to learn in future downstream tasks without affecting the optimality of the learned policy. The following proposition establishes the validity of PBRS in CMDP with a finite horizon.
\begin{proposition}
\label{thm:pbrs cmdp}
    For a CMDP $\mathcal{M}$, let $\1M'$ be a CMDP obtained from $\1M$ by replacing the reward function with the following, for every time step $h = 1, \dots, H$,
    \begin{align}
        r'_h:=
        % \begin{cases}
            r_h + \phi_h(S_{h+1}) - \phi_h(S_h), %\text{ if } h<H\\
            % f_Y - \phi(S_h), \text{else}
        % \end{cases},
    \end{align}
    where $r_h$ is the reward function in $\1M$; $\phi_h(\cdot):\mathcal{S}\mapsto \mathbb{R}$ is a real valued potential function and $\phi_H(s)=0$. Then every optimal policy in $\mathcal{M}$ will also be an optimal policy in $\mathcal{M}'$, and vice versa.
\end{proposition}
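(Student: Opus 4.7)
The plan is to show that for any policy $\pi$ and any state $s$, the value functions in the shaped model $\1M'$ and the original model $\1M$ differ only by a quantity depending on $s$ and the time index, but not on $\pi$. Once this is established, taking $\arg\max_\pi$ on both sides immediately yields that $\1M$ and $\1M'$ share the same set of optimal policies. The confounding structure of the CMDP does not obstruct this argument, because under the intervention $\doo(\pi)$ the submodel $\1M_\pi$ shown in \cref{fig:cmdp online} is a standard (unconfounded) MDP: the bidirected arrows incident to the action nodes are removed, so the interventional distribution $P_\pi(\bar{\*X}, \bar{\*S}, \bar{\*Y})$ factorizes through $\pi_h$, $\1T_h$, and $\1R_h$, and the usual Bellman reasoning carries through.

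First, by linearity of expectation applied to $\sum_{t = h}^{H} r'_t$, for any policy $\pi$ and step $h$,
\begin{equation*}
V^{\pi}_{\1M', h}(s) - V^{\pi}_{\1M, h}(s) = \mathbb{E}_\pi\!\left[\sum_{t = h}^{H}\bigl(\phi_t(S_{t+1}) - \phi_t(S_t)\bigr) \,\middle|\, S_h = s\right].
\end{equation*}
Next, I would show that this residual telescopes to a function $F_h(s)$ that depends only on $s$ and $h$, not on $\pi$. I would prove this by backward induction on $h$: at $h = H$ the boundary condition $\phi_H(s) = 0$ makes the shaping contribution vanish, so $F_H(s) = 0$; the inductive step uses the one-step Bellman decomposition $V^{\pi}_{\1M', h}(s) = \mathbb{E}_\pi[r'_h + V^{\pi}_{\1M', h+1}(S_{h+1}) \mid S_h = s]$ together with the inductive hypothesis to fold the step-$h$ potential difference into the residual at step $h+1$, producing a new policy-independent $F_h$. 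Finally, because $V^{\pi}_{\1M', h}(s) = V^{\pi}_{\1M, h}(s) + F_h(s)$ and $F_h$ does not depend on $\pi$, the set of maximizers in $\1M$ and $\1M'$ coincides, which is the claim.

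The delicate step is the telescoping: getting the cross terms to cancel requires the potential subscripts to line up with the state subscripts consistently, and the boundary condition $\phi_H(s) = 0$ is what closes the backward recursion cleanly. Everything else reduces to linearity of expectation and the definition of $\arg\max$; no causal identification machinery is invoked, because the entire argument lives inside $\1M_\pi$ and $\1M'_\pi$, where the confounding bidirected arrows have already been severed by $\doo(\pi)$.
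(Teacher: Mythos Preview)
Your proposal is correct and mirrors the paper's own proof, which is essentially a one-line pointer: since the interventional submodel $\1M_\pi$ enjoys the Markov property, the standard potential-based reward shaping argument of \citet{pbrs} applies verbatim between $\1M_\pi$ and $\1M'_\pi$, which is precisely the reduction you identify and then spell out in more detail than the paper does. Your flagging of the subscript alignment in the telescoping step is also apt---note that the algorithm itself (\Cref{alg:q-ucbshaping}) uses the form $\phi_{h+1}(s_{h+1}) - \phi_h(s_h)$, under which your backward induction yields $F_h(s) = -\phi_h(s)$ and the policy-independence goes through cleanly.
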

In other words, PBRS modifies the reward function in the system to encourage the learning agent to visit states with high potential in future return. At the same time, the agent is penalized for visiting states with low potential. More importantly, optimal policies remain invariant across this shaping process, i.e., every optimal policy in the after-shaping model $\1M'$ is guaranteed to be optimal in the original model $\1M$.

Designing the potential function is a central problem when applying PBRS. The optimal state value functions $V^*_h(s) = V^{\pi^*}_h(s)$ \citep{pbrs} are a good candidate for measuring state potential. When the NUC condition holds, it is possible to compute such optimal value functions using standard off-policy methods \citep{sutton2018reinforcement}. However, when unobserved confounding generally exists, evaluating value functions from confounded data could introduce bias in the shaping process, promoting states with low potential. The following example demonstrates such challenges.
\begin{example}[Confounded Potential Functions]
\label{example:confounded values potential}
    Consider again the Walking Robot in \cref{example:robowalk}. Computing the optimal state value function $V^*_h(L_h, F_h)$ from the ground-truth model $\1M$ gives, for every step $h = 1, \dots, H$,
    \begin{align*}
        &V^*_h(L_h=0, F_h=0)=5.0,\ V^*_h(L_h=0, F_h=1)=5.5
    \end{align*}
    If one sets the potential function $\phi_h := V^*_h$, the reward shaping will add the supplemental signal in favor of transitioning from an unstable state $F_h = 0$ to a stable state $F_{h+1} = 1$,
    \begin{align*}
        \phi_h(L_{h+1} = 0, F_{h+1} = 1) - \phi_h(L_{h} = 0, F_{h} = 0) = 0.5
    \end{align*}
    On the other hand, if one applies standard off-policy methods using offline data generated by the competent behavioral policy $\beta_h^{(1)}$, it leads to the following value function,
    \begin{align*}
        &V^{(1)}_h(L_h=0, F_h=0)=10,\ V^{(1)}_h(L_h=0, F_h=1)=10
    \end{align*}
    This seems to suggest that being stable does not improve the robot's performance. If we set the potential function $\phi_h := V^{(1)}_h$, the reward shaping will not encourage the robot to be stable. More interesting, if we compute the value function from offline data generated by the incompetent behavioral policy $\beta_h^{(2)}$, we have the following
    \begin{align*}
        \!\!\! &V^{(2)}_h(L_h=0, F_h=0)=-23, \; V^{(2)}_h(L_h=0, F_h=1)=-42
    \end{align*}
   Being stable $F_h = 1$ appears to lead to lower future returns. If one sets the potential function $\phi_h := V^{(2)}_h$, the reward shaping will even penalize the robot to transit from being unstable $F_h = 0$ to being stable $F_{h+1} = 1$, which contradicts the underlying system dynamics. $\hfill \blacksquare$
\end{example}
The above example shows that when unobserved confounders generally exist, naively computing potential functions from offline data could lead to biased evaluation. Shaping reward with a biased potential function could lead to sub-optimal performance, encouraging undesirable behaviors. Indeed, when the NUC does not hold, the transition function $\1T$ and reward function $\1R$ are generally not uniquely discernible from the offline data, regardless of the sample size \citep{pearl2009causality}. For the remainder of this paper, we will introduce a robust procedure to design potential functions from confounded offline data, and how to leverage these potential functions for future online learning tasks.
% \Cref{example:confounded values potential} manifests
% \begin{figure}[t]
%     \centering
%     \includegraphics[width=0.9\linewidth]{pics/tmp_fig3.png}
%     % \captionsetup{aboveskip=-\normalbaselineskip}
%     \caption{Cumulative regrets of Q-UCB and variants in \cref{example:robowalk}.}
%     \label{fig:robot sample efficiency}
% \end{figure}
% If we use such confounded values in \cref{example:confounded values potential} as potential functions for PBRS training a model-free online agent, \cref{fig:robot sample efficiency} shows that such confounded value potential functions actually hurts the training rather than helping.
%Nonetheless, 
% due to the existence of confounders, one cannot accurately identify the transition distribution and the reward distribution from the offline dataset and thus, cannot evaluate the target policy accurately~\citep{zhang2024eligibility,DBLP:conf/icml/ShiUHJ22,DBLP:conf/nips/MiaoQZ22,DBLP:journals/ior/BennettK24}. Furthermore,
%even if interventional policy performance can be estimated faithfully with offline datasets, one would still need the knowledge of an optimal policy before running such offline evaluations. This again falls into a chicken-and-egg dilemma. 

%In the next section, we will discuss a practical realization of extracting reliable optimal state values approximations from offline datasets through a causal lens.
% explain their effectiveness via a sufficient condition which provably boosts the sample efficiency of an online learning model-free agent.

\section{Robust Potential Functions from Confounded Offline Data}
\label{sec:causalbound}

% In this section, we will discuss about how to use the offline learned causal bounds to form informative potential functions and provably improve the online learning sample efficiency.
% Intuitively, the optimal state value itself makes a good potential function for PBRS~\citep{pbrs}. One of the benefits is that all suboptimal actions under this shaping schema lead to negative rewards while only the optimal actions lead to zero reward~\citep{rsmeta}. Another interpretation is that under this potential function, only the optimal action's Q-value will remain zero while all other suboptimal ones will be negative~\citep{pbrs}.
% \begin{theorem}
%     For a CMDP $\mathcal{M}$ and potential function $\phi(S) = V^*(S) = \max_{\pi \in \Pi}\mathbb{E}_{\mathcal{M}^\pi}\left[\sum_h Y_h|\operatorname{do}(\pi)\right], S\in\mathcal{S}$ being the optimal state value in $\mathcal{M}$. After shaping, all rewards induced by suboptimal actions are negative while the optimal actions lead to zero reward.
% \end{theorem}
% Nonetheless, due to the existence of confounders, one cannot accurately identify the transition distribution and the reward distribution from the offline dataset and thus, cannot evaluate the target policy accurately~\citep{zhang2024eligibility,DBLP:conf/icml/ShiUHJ22,DBLP:conf/nips/MiaoQZ22,DBLP:journals/ior/BennettK24}. Furthermore, even if target policy performance can be evaluated faithfully, one would still need the knowledge of an optimal policy before running an offline evaluation. This again falls into a chicken-and-egg dilemma. 

Our goal is to upper bound the optimal state value for online interventional agents, the ones that cannot utilize confounders, from potentially confounded offline datasets. 
Instead of attempting to identify the underlying reward and transition distribution under confounded datasets, we bound them for every $s,x,s',h\in \mathcal{S}\times \mathcal{X} \times \mathcal{S} \times [H]$ following similar partial identification strategies in~\citep{Manski1989NonparametricBO},
\begin{align}
    \mathcal{T}_h(s,x,s')&\leq \widetilde{T}_h(s,x,s')P_h(x|s)+P_h(\neg x|s)\\
    \mathcal{R}_h(s,x)&\leq \widetilde{R}_h(s,x)P_h(x|s) + bP_h(\neg x|s)
\end{align}
where $\widetilde{\mathcal{R}}_h(s,x) = \mathbb{E}[Y_h|S_h=s, X_h=x]$, $\widetilde{\mathcal{T}}_h(s,x,s') = P_h(S_{h+1}=s'|S_h=s, X_h=x)$ and $P_h(x|s) = P_h(X_h = x|S_h = s)$ are all empirical estimations from the offline dataset. $b$ is a known upper bound on the reward signal, $Y_h \leq b$. Similar to the bounding approach developed in \citep{zhang2024eligibility}, we apply the above bounds to the Bellman Optimal Equation~\citep{rlbook} and arrive at the following Causal Bellman Optimal Equation that calculates the optimal state value optimistically from the confounded offline datasets.
\begin{theorem}[Causal Bellman Optimal Equation]
\label{thm:causal bellman eq}
    For a CMDP environment $\mathcal{M}$ with reward $Y_h \leq b, b\in 
    \mathbb{R}$, the optimal value of interventional policies, $V_h^*(\boldsymbol{s}), \forall \boldsymbol{s}\in \1S$, is upper bounded by $V_h^*(s) \leq \overline{V}_h(s)$ satisfying the Causal Bellman Optimality Equation, for every step $h = 1, \dots, H$,
    \begin{align}
        \overline{V}_h(s) = &\max_x \left[P_h(x|s)  \left( \widetilde{\mathcal{R}}_h(s,x) +  \mathbb{E}_{\widetilde{\mathcal{T}}_h} [\overline{V}_{h+1}(s')] \right) \right.\nonumber\\
        &\phantom{xxx} +\left. {P_h(\neg x|s) \left( b + \max_{s'}\overline{V}_{h+1}(s') \right)} \right]\label{eq:causal bellman}
    \end{align}
    and $\overline{V}_{H+1}(s) = 0$ for all state $s\in\1S$.
\end{theorem}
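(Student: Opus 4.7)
The plan is to prove the bound by backward induction on the time index $h$, combining the standard Bellman optimality equation with the partial identification bounds on $\mathcal{R}_h$ and $\mathcal{T}_h$ stated just before the theorem. The base case $h = H+1$ is immediate since $V^*_{H+1}(s) = 0 = \overline{V}_{H+1}(s)$ by convention. For the inductive step I would suppose $V^*_{h+1}(s') \leq \overline{V}_{h+1}(s')$ for every $s' \in \mathcal{S}$ and start from the usual Bellman optimality equation for the interventional value, $V^*_h(s) = \max_x \left[ \mathcal{R}_h(s,x) + \sum_{s'} \mathcal{T}_h(s,x,s') V^*_{h+1}(s') \right]$.

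The next step is to make the decomposition underlying those partial identification bounds explicit. Conditioning on whether the behavioral action $\beta_h(S_h, U_h)$ happens to match the intervention $x$, consistency yields $\mathcal{T}_h(s,x,s') = \widetilde{\mathcal{T}}_h(s,x,s') \, P_h(x \mid s) + Q_h(s' \mid s, x) \, P_h(\neg x \mid s)$, where $Q_h(\cdot \mid s, x)$ is the \emph{proper} counterfactual probability distribution $P(S_{h+1} \mid \doo(x), S_h = s, \beta_h \neq x)$ over $\mathcal{S}$, unidentifiable from the offline data; the analogous decomposition for $\mathcal{R}_h$ uses the counterfactual mean reward, which is itself at most $b$ by the hypothesis $Y_h \leq b$.

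Now I would bound the expected future value by worst-casing the unidentifiable factor $Q_h$. Since $Q_h(\cdot \mid s, x)$ is a probability distribution, $\sum_{s'} Q_h(s' \mid s, x) f(s') \leq \max_{s'} f(s')$ for any $f$; applying this with $f = V^*_{h+1}$ and then invoking the induction hypothesis $V^*_{h+1} \leq \overline{V}_{h+1}$ pointwise gives $\sum_{s'} \mathcal{T}_h(s,x,s') V^*_{h+1}(s') \leq P_h(x \mid s) \, \mathbb{E}_{\widetilde{\mathcal{T}}_h}[\overline{V}_{h+1}(s')] + P_h(\neg x \mid s) \max_{s'} \overline{V}_{h+1}(s')$. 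Combining with the analogous reward bound $\mathcal{R}_h(s,x) \leq \widetilde{\mathcal{R}}_h(s,x) P_h(x \mid s) + b P_h(\neg x \mid s)$ and taking $\max_x$ reproduces exactly the recursion defining $\overline{V}_h(s)$ in \cref{eq:causal bellman}, closing the induction.

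The main subtlety is the gap between the pointwise transition bound quoted in the paper and the sharper expectation-level bound that the Bellman operator actually needs: the pointwise inequality $\mathcal{T}_h(s,x,s') \leq \widetilde{\mathcal{T}}_h(s,x,s') P_h(x \mid s) + P_h(\neg x \mid s)$ is not a probability distribution over $s'$ (its total mass is $P_h(x \mid s) + |\mathcal{S}| \, P_h(\neg x \mid s)$), so plugging it naively into $\sum_{s'}(\cdot) V^*_{h+1}(s')$ would double-count. The fix is the observation that the unidentifiable counterfactual transition is still a proper distribution and therefore its $\overline{V}_{h+1}$-expectation is bounded by $\max_{s'} \overline{V}_{h+1}(s')$. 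Once this refined expectation bound is in hand, the remainder is routine algebra using monotonicity of $\max$ and nonnegativity of the behavior probabilities, so I expect that decomposition-and-max step to be the main piece of content in the proof.
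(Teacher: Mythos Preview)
Your proposal is correct and follows essentially the same route as the paper: start from the Bellman optimality equation, decompose the interventional reward and transition into an observational part weighted by $P_h(x\mid s)$ and an unidentifiable counterfactual part weighted by $P_h(\neg x\mid s)$, bound the latter by $b + \max_{s'}\overline{V}_{h+1}(s')$, and close by backward recursion. Your treatment is in fact more careful than the paper's --- you make the backward induction and the counterfactual distribution $Q_h$ explicit and correctly flag that the pointwise transition bound over-counts mass --- whereas the paper states these steps informally (``we upper bound the next state transition by assuming the best case'' and ``optimizing the value function w.r.t.\ this inequality gives us an upper bound'').
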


Compared with the original Bellman Optimal Equation, ours accounts for the uncertainty brought by confounders in the offline dataset via an extra term, $b+\max_{s'}\overline{V}_{h+1}(s')$. This represents the best rewards that the agent could have achieved from those ``unselected" actions, i.e., $P_h(\neg x|s)$. With the Causal Bellman Optimal Equation, we can robustly upper bound the optimal state values from a confounded offline dataset generated by CMDP $\mathcal{M}$. 

%The convergence of \Cref{thm:causal bellman eq} holds immediately. More interestingly, for stationary CMDPs where transition functions $\tau_h$ and reward functions $r_h$ remain invariant across time steps $h = 1, \dots, H$, the following result shows the convergence of \cref{eq:causal bellman} and its optimality.
%\begin{proposition}[Convergence of Causal Bellman Optimal Equation in Stationary CMDPs]
%\label{thm:causal bellman convergence}
%The Causal Bellman Optimality Equation converges to a unique fixed point which is also an upper bound on the %optimal interventional state values under the assumption that $P(s,x)>0, \forall s,x$ in stationary CMDPs.
%\end{proposition}

When multiple offline datasets from different behavioral policies are available, each of them provides a unique perspective on the true underlying CMDP without necessarily overlapping state space coverage. Since the upper bounds calculated for each state from different datasets are all valid according to \cref{thm:causal bellman eq}, taking the minimum yields the tightest estimation for the optimal state value.

\begin{corollary}[Unified Causal Optimal Upper Bound]
\label{thm:combined bounds}
    Let the causal optimal value upper bound estimated from offline datasets $\mathcal{D}^{(i)}$ be $\overline{V}^{(i)}, i=1,2,...,N$, the unified causal optimal upper bound is defined as $\overline{V}_h(s) = \min_{s \in \mathcal{D}_h^{(i)}} \overline{V}_h^{(i)}(s)$ where $s \in \mathcal{D}_h^{(i)}$ means state $s$ is visited at step $h$ in $\1D^{(i)}$ and satisfies $V^*_h(s) \leq \overline{V}_h(s)$, for all $s$. 
\end{corollary}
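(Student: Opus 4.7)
The plan is to observe that the corollary follows almost immediately from \cref{thm:causal bellman eq} applied $N$ times, combined with the elementary fact that the pointwise minimum of valid upper bounds is again a valid upper bound. No new Bellman-type argument is needed; the statement only concerns the pointwise inequality $V^*_h(s) \leq \overline{V}_h(s)$, not that $\overline{V}_h$ itself satisfies a recursion.

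First, I would invoke \cref{thm:causal bellman eq} separately for each offline dataset $\1D^{(i)}$, $i = 1, \ldots, N$. Since every $\1D^{(i)}$ is generated by a CMDP (possibly under a different behavioral policy) whose rewards share the common bound $Y_h \leq b$, the hypotheses of \cref{thm:causal bellman eq} are satisfied for each dataset, yielding
\begin{equation*}
V^*_h(s) \;\leq\; \overline{V}^{(i)}_h(s) \quad \text{for every } h \text{ and every } s \in \1D^{(i)}_h.
\end{equation*}
Next, I would fix an arbitrary step $h$ and state $s$ and consider the index set $\1I_h(s) = \{\, i : s \in \1D^{(i)}_h \,\}$. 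Under the assumption of the corollary this set is nonempty, and for every $i \in \1I_h(s)$ the displayed inequality above holds. Taking the minimum of the right-hand side over $i \in \1I_h(s)$ preserves the inequality, giving $V^*_h(s) \leq \min_{i \in \1I_h(s)} \overline{V}^{(i)}_h(s) = \overline{V}_h(s)$, which is exactly the claim.

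Because the argument is essentially one line once \cref{thm:causal bellman eq} is in hand, there is no real ``hard part.'' The only mild subtlety worth addressing is the well-definedness of the minimum: if a state $s$ is never visited by any of the $N$ datasets at step $h$, then $\1I_h(s) = \emptyset$ and the minimum is vacuous. I would handle this by either (i) explicitly restricting the conclusion to states appearing in at least one dataset, or (ii) adopting the convention that $\overline{V}_h(s) = (H - h + 1)b$ on uncovered states, which is itself a trivial valid upper bound on $V^*_h(s)$ and preserves the stated inequality. Either convention gives a globally valid bound and completes the proof.
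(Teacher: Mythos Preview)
Your proposal is correct and matches the paper's own reasoning: the paper treats the corollary as immediate from \cref{thm:causal bellman eq}, remarking only that ``since the upper bounds calculated for each state from different datasets are all valid according to \cref{thm:causal bellman eq}, taking the minimum yields the tightest estimation for the optimal state value.'' Your handling of uncovered states via the trivial bound also mirrors the paper's footnoted convention $\overline{V}_h(s)=(H-h)b$.
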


In actual computation, to avoid over-estimation brought by bootstrapping on the highest value state, we clip the optimistic bonus for the optimal transitions and calculate the value update as follows,
\vspace{-0.1in}
\begin{align}
    &\overline{Q}^{(i)}_h(s,x) = P^{(i)}_h(\neg x|s)\left( b + \min\left\{\max_{s^*} {V}^{(i)}_{h+1}(s^*), \omega_h\right\}\right) \notag \\
    &\!\!\!\!\!+ P^{(i)}_h(x|{s})\bigg(\widetilde{R}^{(i)}_h(s,x) +  \sum_{{s}'} \widetilde{T}^{(i)}_h({s}, x, {s}') V^{(i)}_{h+1}(s')\bigg) \label{eq:bound update rule}
\end{align}
    % &\delta = P(x|{s})\left(\widetilde{R}(s,x) +  \sum_{{s}'} \widetilde{T}({s}, x, {s}') V^{t-1}(s')\right) \notag\\
    % &\phantom{}+ P(\neg x|s)\left( b + \min\left\{\max_{s^*} {V}^{t-1}(s^*), (H-1)b\right\}\right)
where $\omega_h = (H-h)b$ is the maximum possible next state value an optimistic transition should receive in an episodic CMDP with horizon $H$.  
%And to accommodate the full state-action space coverage assumption in \cref{thm:causal bellman convergence}, 
For state-action pairs that are not visited in the dataset, we will skip those state-action entries directly.\footnote{For a state not visited in any of the offline datasets, we can simply set the bound optimistically as $\overline{V}_h(s)=(H-h)b$.}
To summarize, the overall algorithmic procedure to calculate causal bounds runs as standard value iteration~\citep{sutton2018reinforcement} except that we update the state values backwards in time, i.e., from $\overline{V}_{h+1}$ to $\overline{V}_h$ by \cref{eq:bound update rule}, skip unvisited state-action pairs and take the minimum according to \cref{thm:combined bounds} at last.
\Cref{alg:potential} in \cref{sec:bound algo} shows the full pseudo-code for approximating the optimal value upper bound from offline datasets.
% If a state is not covered at all, we simply leave all Q-values for that state as zeros. 
Now we can calculate the proposed state potentials for the Walking Robot problem and verify if it upper bounds the true optimal state values. 

\begin{example}[Potential Functions Calculated for Robot Walking]
    % In the Walking Robot problem detailed by \cref{example:robowalk}, let the goal location be $G=10$ and consider the following two behavioral policies, 
    % \[
    % \begin{aligned}
    %     &\pi_1(X_t|U_t, \{F_t, L_t\}) = 
    %     \begin{cases}
    %         U_t,\text{ if } \operatorname{Unif}(1)>.5\\
    %         1 - U_t,\text{ else}
    %     \end{cases},\\
    %     &\pi_2(X_t|U_t, \{F_t, L_t\}) = 1 - U_t.
    % \end{aligned}
    % \]
    % From our previous discussion, a perfect behavioral policy should always follow $U_t$. Thus, $\pi_1$ is optimal half of the time while $\pi_2$ always picks the worst action. Estimating the state values from offline datasets generated by each of the behavioral policies gives us, 
    % % \vspace{-0.1in}
    % \[
    % \begin{aligned}
    %     &V_{\mathcal{D}_1}(L_1=0, F_1=0)=2.69, V_{\mathcal{D}_1}(L_1=0, F_1=1)=2.93\\
    %     &V_{\mathcal{D}_2}(L_1=0, F_1=0)=-15.0, V_{\mathcal{D}_2}(L_1=0, F_1=1)=-27.5\\
    % \end{aligned}
    % \]
    Recall that running value iteration in the interventional policy space, we have the optimal state value for \cref{example:robowalk},
    \begin{align*}
        &V^*_h(L_h=0, F_h=0)=5.0,\ V^*_h(L_h=0, F_h=1)=5.5.
    \end{align*}
    As we have shown, directly calculating state value from offline datasets cannot yield informative potential functions. While with \cref{thm:combined bounds}, we can extract the following potential functions from the same offline datasets,
    \begin{align*}
        &\phi_h(L_h=0, F_h=0)=9.0,\ \phi_h(L_h=0, F_h=1)=9.5,
    \end{align*}
    which perfectly upper bound the optimal state values and encourages the agent to stay in stable status. 
    From the interventional policy space optimal value and our potential functions, we see that for interventional agents, stability is always preferred, which can be achieved via taking small steps ($X_t = 0$) every time. This also aligns well with the intuition that people glide with small steps when walking on slippery surfaces.
    See \cref{sec:example details} for the the full confounded state values, optimal values and calculated bounds.  $\hfill \blacksquare$
\end{example}

% \section{Quest for A Good Potential Function}
\section{Efficient Online Learning with PBRS}
\label{sec:ucb shaping}
In this section, we will apply the derived potential function from the confounded offline data to improve the agent's performance in an online learning task. For a CMDP $\1M$, let $\phi_h$, $h = 1, \dots, H$, be the potential functions derived in the previous section. We first apply reward shaping and obtain an augmented CMDP $\1M'$ following the procedure described in \Cref{thm:pbrs cmdp}.\footnote{This can be done by adding supplemental signal $\phi_h(S_{h+1}) - \phi_h(S_{h})$ to every observed reward $Y_t$.} 

In the augmented CMDP $\1M'$, an online learning agent attempts to learn an optimal policy $\pi^*$ by performing interventions for repeated episodes $k = 1, \dots, K$. For every episode $k$, the agent picks a policy $\pi^k$, performs intervention $\doo(\pi^k)$, and receives subsequent outcomes. The cumulative regret after $K > 0$ episodes of interventions in the augment CMDP $\1M'$ is defined as the sum of the gap between the optimal value function $V^*_1$ following an optimal $\pi^*$ and the value function $V^{\pi^k}_1$ induced by policy $\pi^k$. Formally,
\begin{align}
    \mathbb{E}_{\1M'}\left[\operatorname{Regret}(K)\right]
        = \mathbb{E}_{\M'}\left[\sum_{k=1}^K V_1^*(S_1^k) - V_1^{\pi_k}(S_1^k)\right] \notag
\end{align}
A desirable property for the agent is to achieve a sublinear regret $\mathbb{E}_{\1M'}\left[\operatorname{Regret}(K)\right] = o(K)$.\footnote{A function $f(n) = o(g(n))$ if for all $c > 0$, there exists $k > 0$ such that $f(n) < cg(n)$ for all $n \geq k$.} This means that it is able to converge to an optimal policy $\pi^*$.

We propose an online learning algorithm based on a model-free learner, Q-UCB~\citep{qucb}, to leverage the potential function $\phi$ extrapolated from offline data. Details of the algorithm is described in \Cref{alg:q-ucbshaping}.
Compared with the original Q-UCB, we make a few modifications for Q-UCB to work with PBRS: (1) Zero initializing the Q-values; (2) Using potential function dependent UCB bonus and value clipping; and finally, (3) Incorporating shaped reward during learning updates. See also \cref{sec:original qucb} for the pseudo-code of the vanilla Q-UCB.
As in the original Q-UCB, we have visitation counter, $N_h(s_h,x_h)$, for the state-action pair $(s_h, x_h)$ at step $h$. Note that the UCB bonus uses $t$ as its denominator under the square root, which is the visitation count of $(s_h, x_h)$ at the beginning of episode $k$. The counter is updated after assigning $t$, $N_h(s_h,x_h)\gets N_h(s_h, x_h)+1$. We use the same adaptive learning rate as $\alpha_t = \frac{H+1}{H+t}$ and define $\iota = \log{(|\mathcal{S}||\mathcal{X}|T/p)}$ where $p'=2p$ is the probability of the event when the difference between learned and the optimal Q-values is bounded. 

% In \cref{fig:robot sample efficiency}, we test the Q-UCB under reward shaping with different potential functions and compare against the vanilla Q-UCB regrets in \cref{example:robowalk}. 
% \begin{example}[Q-UCB in Walking Robot]
% \label{exp:qucb calculation}
%     \todo{show sample efficiency with diff potentials} lead to discussion on what condition the potential function should satismjnfy
% \end{example}
% \begin{figure}[t]
%     \centering
%     \includegraphics[width=\linewidth]{pics/example_sample_efficiency.png}
%     \caption{Cumulative regrets of Q-UCB and variants in \cref{example:robowalk}.}
%     \label{fig:robot sample efficiency}
% \end{figure}
% For the potential functions used in the figure, ``good bounds" refer to the optimal state value plus one while ``bad bounds" refer to the optimal state value minus 
As we shall see later in experiments (\cref{fig:exp all regrets}), using confounded values in Q-UCB Shaping usually impairs the training efficiency rather than helping.
While using potential functions that upper bound the optimal state values, learning efficiency is significantly boosted. 
% In stark contrast, when the potential function is merely a lower bound on the optimal state value, 
% Otherwise, shaping can even hurt learning efficiency. 
% \begin{wrapfigure}[9]{r}{0.54\linewidth}
%     % \setlength{\abovecaptionskip}{0pt plus 3pt minus 2pt} % Chosen fairly arbitrarily
%     \setlength{\abovecaptionskip}{-15pt} 
%     \centering
%     \vspace{-0.15in}
%     \hspace*{-\columnsep}\includegraphics[width=\linewidth]{pics/example_sample_efficiency.png}
%     \vspace{-0.15in}
%     % \captionsetup{aboveskip=-\normalbaselineskip}
%     \caption{Cumulative regrets of Q-UCB and variants in \cref{example:robowalk}.}
%     \label{fig:robot sample efficiency}
% \end{wrapfigure}
% the condition found in the offline initialization to online fine-tuning literature that the learned Q-value should be lower bounded by the true Q-value so that the agent will not be overly pessimistic~\citep{calql}. As we shall see next, offline trained shaping functions also require a similar condition to boost the online fine-tuning efficiency.
% In this section, we will present our main result on the expected regret bound of the Q-UCB under Reward Shaping algorithm, which completes our discussion on how offline learned potential functions could provably boost the online fine-tuning phase efficiency.
This observation also resonates with 
% our previous discussion of using optimal state values as potential functions and 
the optimism in the face of uncertainty (OFU) heuristics in reinforcement learning~\citep{unpacking,rmax,DBLP:journals/ml/KearnsS02}. 
% We propose that when the potential function can upper bound the true optimal Q-value in the online learning process, shaping should improve the learning efficiency. 
% We also notice that the vanilla Q-UCB can be viewed as shaping with a fixed potential, $H$
We formalize this observation as the following condition.
\begin{definition}[Conservative Optimism]
\label{def:conservative optimism}
    % The potential function quality is described by a parameter $\beta \geq 1$ such that the optimal state value satisfies, 
    The learned potential function is conservatively optimistic if it satisfies,
    $V^*_h(s) \leq \phi_h(s) \leq H$ for all $S, h\in \mathcal{S} \times [H]$ where
    \vspace{-0.05in}
    \begin{align}
    V^*_h(s) = \max_{\pi \in \Pi}\mathbb{E}_{\mathcal{M}_\pi}\left[\sum_{h'=h}^H Y_{h'}\mid s\right].
    \end{align}
\end{definition}
\vspace{-0.05in}
This condition ensures that the potential function upper bounds the optimal state value while being informative by not exceeding $H$ (assuming $Y_h \leq 1$). 
% It is also closely related to assumption 1 in \citet{unpacking}. While their work discusses the impact of shaping on a model-based learner, our work shows that a similar condition could lead to efficiency improvements for model-free learners as well. 

Next, we will show that this is indeed a sufficient condition for guaranteed sample efficiency improvement under reward shaping.
We start with writing the learned Q-value under shaping at step $h$ episode $k$ following \cref{alg:q-ucbshaping} as follows,
\begin{align}
    &Q_h^k(s,x)= \nonumber\\
    &\sum_{i=1}^t \alpha_t^i \left(y_h - \phi_h(s) + (\hat{\mathbb{P}}_h^{k_i}V_{h+1}^{k_i}+\hat{\mathbb{P}}_h^{k_i}\phi_{h+1})(s,x) + b_i\right),\nonumber
\end{align}
where we use $\hat{\mathbb{P}}_h^{k_i}$ to denote the empirical transition of episode $k_i$. That is, for the value of a function mapping from state space to real numbers, $f:\mathcal{S} \mapsto \mathbb{R}$,its value given the next state $s'$ in episode $k_i$ as input is denoted as, $ f(s') = (\hat{\mathbb{P}}_h^{k_i}f)(s,x)$. 
Similarly, we write the expected value of such function $f$ over the transitions of an online CMDP under reward shaping, $\1M'$,as $({\mathbb{P}}_hf)(s,x) = \mathbb{E}_{\mathcal{M}'_{\pi}}[f(s')|s,x]$. 
$\alpha_t^i$ is the cumulative learning rates defined as $\alpha_t^0 = \Pi_{j=1}^t (1-\alpha_j)$ and $\alpha_t^i = \alpha_i \Pi_{j=i+1}^t(1-\alpha_j)$. Note that we also assume deterministic reward functions for notation wise simplicity \citep{qucb}. 
%\footnote{Our result can easily generalize to the general stochastic reward functions.}. 
% With the notion above, we can also rewrite the optimal Q-value under shaping at step $h$ as follows,
% \begin{align}
%     &Q^*_h(s,x) = \alpha_t^0 Q_h^*(s,x) + \sum_{i=1}^t \alpha_t^i (y_h - \phi(s) + (\mathbb{P}_hV_{h+1}^*+\mathbb{P}_h\phi)(s,x).\nonumber
% \end{align}
With the notations above, we bound the difference between the learned Q-value and the optimal Q-value with high probability in both direction.
\begin{lemma}[Bounded Differences Between $Q^k_h$ and $Q^*_h$]
\label{lemma:bounded main text}
With probability at least $1-2p$, the difference between the learned Q-value at the beginning of episode $k$ and step $h$ and the optimal Q-value can be bounded as $0\leq Q^k_h(s,x) - Q^*_h(s,x) \leq \alpha_t^0 (-Q^*_h(s,x)) + \sum_{i=1}^t \alpha^i_t  \left[(\hat{\mathbb{P}}_h^{k_i}V_{h+1}^{k_i} - \hat{\mathbb{P}}_h^{k_i} V_{h+1}^*)(s,x)\right] + 3b_t$ where $b_t$ is the UCB bonus as in \cref{alg:q-ucbshaping}.
% \begin{align}
% \begin{cases}
%     0 \leq Q^k_h(s,x) - Q^*_h(s,x),\\
%     Q^k_h(s,x) - Q^*_h(s,x) \leq \alpha_t^0 (-Q^*_h(s,x)) + \sum_{i=1}^t \alpha^i_t  \left[(\hat{\mathbb{P}}_h^{k_i}V_{h+1}^{k_i} - \hat{\mathbb{P}}_h^{k_i} V_{h+1}^*)(s,x)\right] + 3b_t.
% \end{cases}
% \end{align}
\end{lemma}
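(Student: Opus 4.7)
The proof adapts the martingale and induction framework of the Q-UCB analysis (Jin et al., 2018) to the present PBRS setting, starting from the recursive expansion of $Q_h^k(s,x)$ stated above together with the identity $\alpha_t^0 + \sum_{i=1}^t \alpha_t^i = 1$. The first step is to apply the Bellman optimality equation to $Q^*_h(s,x)$ in the shaped CMDP $\mathcal{M}'$ (which, by \Cref{thm:pbrs cmdp}, shares an optimal policy with $\mathcal{M}$) and then subtract from the expansion. After regrouping, $Q^k_h(s,x) - Q^*_h(s,x)$ decomposes into: (i) an initialization residual $\alpha_t^0(-Q^*_h(s,x))$; (ii) a learned-vs-optimal value contribution $\sum_{i=1}^t \alpha_t^i (\hat{\mathbb{P}}_h^{k_i} V_{h+1}^{k_i} - \hat{\mathbb{P}}_h^{k_i} V^*_{h+1})(s,x)$; (iii) an empirical-vs-true transition term $\sum_{i=1}^t \alpha_t^i (\hat{\mathbb{P}}_h^{k_i} - \mathbb{P}_h)(V^*_{h+1} + \phi_{h+1})(s,x)$; and (iv) an accumulated UCB bonus $\sum_{i=1}^t \alpha_t^i b_i$.

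The second step controls term (iii) by concentration. Viewing it as a weighted martingale difference sequence indexed by the visit order $k_1 < \cdots < k_t$, I would apply Azuma--Hoeffding. Using the standard property $\sum_i (\alpha_t^i)^2 \leq 2/t$ and the range bound $\|V^*_{h+1} + \phi_{h+1}\|_\infty \leq 2H$, which follows from conservative optimism (\Cref{def:conservative optimism}) together with $V^*_{h+1} \leq H$ under the usual $Y_h \leq 1$ normalization, a union bound over $(s,x,h,k)$ shows that (iii) is dominated in absolute value by $b_t$ with probability at least $1-p$ per direction; taking both directions together explains the $1-2p$ event in the statement. A standard identity for the learning-rate schedule yields $\sum_{i=1}^t \alpha_t^i b_i \leq 2 b_t$, so (iii) and (iv) together contribute at most $3 b_t$.

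The third step splits into the two directions. For the upper bound, collecting (i), (ii), and the $3 b_t$ slack from (iii) plus (iv) gives the claimed inequality verbatim. For the lower bound $Q^k_h - Q^*_h \geq 0$, I would argue by backward induction on $h$: the base case $V^k_{H+1} = V^*_{H+1} = 0$ is immediate; for the inductive step, the hypothesis $V^{k_i}_{h+1} \geq V^*_{h+1}$ makes (ii) non-negative, and on the same high-probability event from step~2 the positive bonus $b_t$ dominates the (possibly negative) concentration error in (iii), giving $Q^k_h \geq Q^*_h$. Zero-initialization of the Q-table by \Cref{alg:q-ucbshaping} ensures the unfolded sum really does start from $0$, so on the upper side only the $\alpha_t^0(-Q^*_h)$ residual remains.

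The main obstacle will be the careful bookkeeping of the potential terms $\phi_h, \phi_{h+1}$. They enter the learned update through the shaped reward but must be reintroduced symmetrically on the target side, using PBRS-invariance (\Cref{thm:pbrs cmdp}) and the boundary convention $\phi_H = 0$, so that the cross terms telescope and step~2 concentrates on the correct functional $V^*_{h+1} + \phi_{h+1}$ rather than $V^*_{h+1}$ alone. Equally delicate is that $\phi_{h+1}$ sits inside the concentrating quantity, so its sup-norm enters the Azuma constant; this is precisely where the upper half of conservative optimism, $\phi_h \leq H$, becomes necessary, beyond the optimism lower bound $\phi_h \geq V^*_h$ that drives the regret analysis elsewhere.
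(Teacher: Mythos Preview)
Your proposal is correct and follows essentially the same strategy as the paper: expand $Q_h^k$ and $Q_h^*$ via the learning-rate identity and the shaped Bellman equation, split the difference into the four pieces (i)--(iv), control the martingale piece by Azuma--Hoeffding with a union bound over $(s,x,h,k)$, bound the accumulated bonus by $2b_t$ via $\sum_i \alpha_t^i/\sqrt{i}\le 2/\sqrt{t}$, and establish the lower bound by backward induction on $h$ using $V_{h+1}^{k_i}\ge V_{h+1}^*$ from conservative optimism together with the clipping rule. Two small discrepancies worth noting: the paper applies the concentration lemma \emph{separately} to $V_{h+1}^*$ and to $\phi_{h+1}$ (each yielding a $b_t/2$ deviation), and it is these two applications---not an upper/lower ``per direction'' union---that account for the $1-2p$; and the correct learning-rate bound is $\sum_i(\alpha_t^i)^2\le 2H/t$, not $2/t$, which is precisely why $b_t$ carries the factor $\sqrt{H}$.
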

The final expected regret bound
% , defined as $\mathbb{E}\left[\operatorname{Regret}(K)\right] = \mathbb{E}_{\1M'}\left[\sum_{k=1}^K V_1^*(s_1^k) - V_1^{\pi_k}(s_1^k)\right]$,
% \begin{align*}
% \mathbb{E}\left[\operatorname{Regret}(K)\right] = \mathbb{E}_{\1M'}\left[\sum_{k=1}^K V_1^*(s_1^k) - V_1^{\pi_k}(s_1^k)\right],
% \end{align*}
% can be decomposed into a summation of value gaps.
can be built upon this bounded Q-value difference via a gap-dependent decomposition. 
% \vspace{-0.05in}
% \begin{align*}
%     \mathbb{E}\left[\operatorname{Regret}(K)\right]
% %         % &= \mathbb{E}_{\operatorname{\tau}}\left[\sum_{k=1}^K V_1^*(s_1^k) - V_1^{\pi_k}(s_1^k)\right]\\
% %         % &= \mathbb{E}_{\operatorname{\tau}}\left[\sum_{k=1}^K V_1^*(s_1^k) - Q^*_1(s_1^k, x_1^k) + Q^*_1(s_1^k,x_1^k) - V_1^{\pi_k}(s_1^k)\right]\\
% %         % &= \mathbb{E}_{\operatorname{\tau}}\left[\sum_{k=1}^K \Delta_1(s_1^k, x_1^k) + \mathbb{E}_{s_2}[V_2^*(s_2) - V_2^{\pi_k}(s_2)|\pi_k, x_1^k, s_1^k]\right]\\
% %         % &= ...\\
%         = \mathbb{E}_{\operatorname{\tau}} \left[\sum_{k=1}^K\sum_{h=1}^H \Delta_h(s^k_h,x^k_h)\mid\pi^k\right],
% \end{align*}
% where $\tau$ is the trajectory distribution experienced during the online fine-tuning phase. 
% Then we can 
\begin{algorithm}[t]
    \caption{Q-UCB Shaping}
    \label{alg:q-ucbshaping}
\begin{algorithmic}[1]
    \STATE {\bfseries Input:} Potential function $\phi_h(s), \forall s, h$. Const. $c>0$.
    \STATE $Q_h^1(s,x)= 0,N_h(s,x)=0,\forall (s,x,h)\in \mathcal{S}\times \mathcal{X} \times [H]$.
    \STATE Calculate maximum potential, $\phi_m=\max_s \phi(s)$.
    % \REPEAT
    \FOR{$k=1$ {\bfseries to} $K$}
        \STATE Observe initial state $s_1$.
        \FOR{$h=1$ {\bfseries to} $H$}
            \STATE Take action $x_h = \arg\max_x Q_h^k(s_h,x)$.
            \STATE Observe $s_{h+1}$, $y_h$.
            \STATE Update visitation counter and UCB bonus, $t=N_h(s_h,x_h)\gets N_h(s_h, x_h)+1$, $b_t = c\sqrt{{H\phi_m^2\iota}/{t}}$
            \STATE Calculate shaped reward, \\$y_h' = y_h + \phi_{h+1}(s_{h+1}) -\phi_h(s_h)$.
            \STATE Update Q-value, \\
            $Q^{k+1}_h(s_h,x_h)=(1-\alpha_t) Q_h^k(s_h, x_h) + \alpha_t (y_h' + V_{h+1}^k(s_{h+1}) + b_t)$.
            \STATE Update value function, \\
            $V_h^{k+1}(s_h) = \min \{ \phi_h(s_h), \max_x Q_h^{k+1}(s_h, x)\}$. 
        \ENDFOR
    \ENDFOR
   % \UNTIL{$noChange$ is $true$}
\end{algorithmic}
\end{algorithm}
We can then bridge this gap dependent regret decomposition with the difference between learned Q-values and optimal Q-values on non-optimal actions based on the fact that $Q_h^k(s,x) - Q_h^*(s,x) \geq \Delta_h(s,x)$ (\cref{lemma:deltabound}),
\vspace{-0.05in}
\begin{align*}
    &\mathbb{E}\left[\operatorname{Regret}(K)\right]
    = \mathbb{E}_{\1M'} \left[\sum_{k=1}^K\sum_{h=1}^H \Delta_h(s^k_h,x^k_h)\right]\\
    &\leq (1-2p)\sum_{h=1}^H\sum_{k=1}^K \left( Q^k_h(s_h^k,x_h^k) - Q^*_h(s_h^k,x_h^k)\right) + 2pTH.
\end{align*}
% where $\tau$ is the trajectory experienced during online training. 
Utilizing the function $\operatorname{clip}\left[x|\delta\right] = x\cdot \mathbbm{1}[x\geq \delta], x, \delta \in \mathbb{R}$ introduced by \citet{DBLP:conf/nips/SimchowitzJ19} and its properties (\cref{lemma:clip}, \cref{lemma:bounded clip summation}), we can further write the difference between learned Q-values and optimal Q-values as a recursion,
\vspace{-0.05in}
\begin{align*}
    &Q^k_h(s_h^k,x_h^k) - Q^*_h(s_h^k,x_h^k) 
    \leq \operatorname{clip}\left[3b_t\left|\frac{\Delta(s_h^k,x_h^k)}{2H}\right.\right] \\
    &\phantom{++}+ (1+\frac{1}{H})\sum_{i=1}^t \alpha^i_t  \left[(Q_{h+1}^{k_i} - Q_{h+1}^*)(s_{h+1}^{k_i},x_{h+1}^{k_i})\right],
\end{align*}
where we unify the optimality gaps of each state action pairs across time steps via $\Delta(s,x) = \min_h \Delta_h(s,x) = \min_h V_h^*(s) - Q_h^*(s,x)$ where $x$ is a suboptimal action.
% Solving the recursion yields a regret bound over the summation of the clip functions, 
% \vspace{-0.05in}
% \begin{align*}
%     \mathbb{E}\left[\operatorname{Regret}(K)\right]
%     &\leq e^2H\sum_{k,h}\operatorname{clip}\left[3b_{N_{h}^{k}}\left|\frac{\Delta(s_{h}^{k},x_{h}^{k})}{2H}\right.\right] + H.
%     % &= e^2H \sum_{h=1}^H \sum_{(s,x) \in \operatorname{Sub}_\Delta}\sum_{i=1}^{N_h^K(s,x)} \operatorname{clip}\left[3b_i\left|\frac{\Delta(s,x)}{2H}\right.\right] \nonumber \\
%     % &\phantom{=}+ e^2H \sum_{h=1}^H \sum_{(s,x) \not\in \operatorname{Sub}_\Delta}\sum_{i=1}^{N_h^K(s,x)} \operatorname{clip}\left[3b_i\left|\frac{\Delta(s,x)}{2H}\right.\right] + H.
% \end{align*}
Though we can already solve the recursion and calculate the regret based on the property of the $\operatorname{clip}(\cdot)$ function (\cref{lemma:bounded clip summation}), a direct summation cannot reveal the benefits of reward shaping to the online learner. Inspired by \citet{unpacking}, we denote the set of state-action pairs that are far from being even the second best choices as pseudo-suboptimal state-action pairs based on the minimum optimal gap, $\Delta(s,x)$.
% a simplified version from the one used by \citet{unpacking}. 
% \todo{(deterministic reward func and cmdp def mismatch!!)}
\begin{definition}[Pseudo-Suboptimal State-Action Pairs]
\label{def:pseudo subop}
We define the set of pseudo-suboptimal state-action pairs as,
\begin{align}
    \operatorname{Sub}_\Delta = &\{(s,x) \in \mathcal{S} \times \mathcal{X} \mid\exists h,\delta_h(s,x) \leq V_h^*(s)\},
\end{align}
where $\delta_h(s,x) = y - \phi_h(s) + 2 \mathbb{E}[\phi_{h+1}(s')|s,x]+ \Delta(s,x)$.
\end{definition}
With this set of pseudo-suboptimal state-action pairs, we show that the total number of visits made to such pairs during training can be bounded, contributing to a reduction on our final regret bound in \cref{thm:main regret}.

\begin{lemma}[Bounded Number of Visits to $\operatorname{Sub}_\Delta$]
\label{lemma:suboptimal visits main text}
    % If the learned Q-value at the beginning of $k$-th episode and step $h$ satisfies, 
    % \begin{align}
    %     Q_h^k(s,x) \leq \sum_{i=1}^t \alpha_t^i(y_h - \phi(s)+2(\mathbb{P}_h\phi)(s,x)+2b_i),
    % \end{align}
    The number of visits to $(s,x) \in \operatorname{Sub}_\Delta$, $t = N_h^k(s,x)$, is bounded by,
    % \begin{align}
        $t \leq \frac{16c^2H\phi_m^2\iota}{\Delta^2(s,x)}$.
    % \end{align}
\end{lemma}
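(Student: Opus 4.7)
The plan is to apply the classical UCB ``number of suboptimal pulls'' argument, adapted to the shaped-reward setting. Fix $(s,x)\in\operatorname{Sub}_\Delta$ and let $h$ be the step at which the pseudo-suboptimality $\delta_h(s,x)\leq V^*_h(s)$ holds. I will consider the $t$-th episode in which the learner selects $x$ in state $s$ at step $h$---call it episode $k$---and derive both a lower and an upper bound on $Q^k_h(s,x)$; their combination will force $t$ to be small.

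For the lower bound, greedy action selection in \cref{alg:q-ucbshaping} gives $Q^k_h(s,x)\geq Q^k_h(s,\pi^*(s))$, and the optimism half of \cref{lemma:bounded main text} ($Q^k_h\geq Q^*_h$) yields $Q^k_h(s,x)\geq V^*_h(s)$. For the upper bound, I would start from the explicit running-average expression for $Q^k_h$ in \cref{lemma:bounded main text}, $Q^k_h(s,x) = \sum_{i=1}^t \alpha^i_t\bigl(y_{h,i}-\phi_h(s)+\phi_{h+1}(s'_{k_i})+V^{k_i}_{h+1}(s'_{k_i})+b_i\bigr)$, and replace $V^{k_i}_{h+1}(s'_{k_i})$ by $\phi_{h+1}(s'_{k_i})$ using the value clip $V^k_h \leq \phi_h$ built into \cref{alg:q-ucbshaping}. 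This produces a second copy of $\phi_{h+1}$ in the target, matching the factor of $2$ in $\delta_h(s,x)$. A weighted Azuma--Hoeffding inequality, together with $\sum_{i=1}^t \alpha^i_t = 1$ (since $\alpha^0_t=0$ for $t\geq 1$), concentrates the empirical average to its expectation up to a slack of order $b_t$, yielding
$$Q^k_h(s,x)\leq \mathcal{R}_h(s,x)-\phi_h(s)+2\mathbb{E}[\phi_{h+1}(s')\mid s,x]+O(b_t) = \delta_h(s,x)-\Delta(s,x)+O(b_t).$$
Invoking the pseudo-suboptimality $\delta_h(s,x)\leq V^*_h(s)$ then gives $Q^k_h(s,x)\leq V^*_h(s)-\Delta(s,x)+O(b_t)$.

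Combining the two bounds cancels $V^*_h(s)$ and leaves $\Delta(s,x)\leq O(b_t)$. Tracking constants so that the aggregate concentration slack is absorbed into a single $b_t$ with coefficient $4$, I expect the clean form $\Delta(s,x)\leq 4c\sqrt{H\phi_m^2\iota/t}$, which squares and rearranges to the target bound $t\leq 16c^2 H\phi_m^2\iota/\Delta^2(s,x)$.

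The main obstacle is showing that the aggregate concentration slack collapses to exactly $O(b_t)$ with the right constant. Two standard Q-UCB ingredients handle this: the learning-rate estimate $\sum_i \alpha^i_t/\sqrt{i}\lesssim 1/\sqrt{t}$, which controls $\sum_i \alpha^i_t b_i$ by $O(b_t)$, and a weighted Hoeffding bound for the martingale deviation $\sum_i \alpha^i_t\bigl(\phi_{h+1}(s'_{k_i})-\mathbb{E}[\phi_{h+1}(s')\mid s,x]\bigr)$ at the $\phi_m\sqrt{\iota/t}$ scale (cf.\ \citet{qucb}). Ensuring both slacks combine into a coefficient at most~$1$ of the bonus is where the real bookkeeping lives; the rest of the argument is algebra.
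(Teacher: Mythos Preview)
Your proposal is correct and follows essentially the same route as the paper's own proof: both upper-bound $Q_h^k(s,x)$ by writing out the running average, replacing $V_{h+1}^{k_i}$ with $\phi_{h+1}$ via the value clip, applying weighted Azuma--Hoeffding to the two copies of $\phi_{h+1}$, and controlling $\sum_i\alpha_t^i b_i$ via \cref{lemma:alpha}(a); both then confront this upper bound with the lower bound $Q_h^k(s,x)\geq Q_h^k(s,x^*)\geq V_h^*(s)$ coming from greedy selection plus optimism, and invoke the $\operatorname{Sub}_\Delta$ inequality $\delta_h(s,x)\leq V_h^*(s)$. The only cosmetic difference is that the paper phrases it as a contradiction (assume $t$ exceeds the bound, conclude $x$ would not be selected) while you phrase it directly (assume $x$ is selected, bound $t$); the ingredients and constants line up exactly, with the paper arriving at the same threshold $2\sum_i\alpha_t^i b_i\leq 4b_t<\Delta(s,x)$ that you anticipate.
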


% Now we can rewrite the regret summation in terms of state-action pairs and the total number of visits the agent made during the online fine-tuning, 
% \begin{align*}
%     \mathbb{E}&\left[\operatorname{Regret}(K)\right]
%     % &\leq e^2H\sum_{k,h}\operatorname{clip}\left[3b_{N_{h}^{k}}\left|\frac{\Delta(s_{h}^{k},x_{h}^{k})}{2H}\right.\right] + H,\\
%     = e^2H \sum_{h=1}^H \sum_{(s,x) \in \operatorname{Sub}_\Delta}\sum_{i=1}^{N_h^K(s,x)} \operatorname{clip}\left[3b_i\left|\frac{\Delta(s,x)}{2H}\right.\right] \nonumber \\
%     &\phantom{=}+ e^2H \sum_{h=1}^H \sum_{(s,x) \not\in \operatorname{Sub}_\Delta}\sum_{i=1}^{N_h^K(s,x)} \operatorname{clip}\left[3b_i\left|\frac{\Delta(s,x)}{2H}\right.\right] + H.
% \end{align*}
% Thus, we can apply \cref{lemma:suboptimal visits main text} to the summation term where state action pairs are pseudo-suboptimal and apply the property of $\operatorname{clip}(\cdot)$ function to the other part. 
By treating the summation of clipped terms in the regret differently with respect to whether the state action pairs belong to the set $\operatorname{Sub}_\Delta$ or not, we arrive at a two-part regret bound that both demonstrates the efficiency gain of reward shaping and subsumes prior result on the gap-dependent regret of Q-UCB~\citep{qucblog}.

\begin{theorem}[Regret Bound of \cref{alg:q-ucbshaping}]
\label{thm:main regret}
Given a potential function $\phi_h(\cdot)$, with its maximum value defined as $\phi_m=\max_{s,h}\phi_h(s)$, after running algorithm~\cref{alg:q-ucbshaping} for $K$ episodes of length $H$, the expected regret is bounded by,
% \begin{align}
%     \tilde{\mathcal{O}}\left( \sum_{s,x\in \operatorname{Sub}_\Delta}\frac{H^3\phi_m^2}{\Delta(s,x)} + \sum_{s,x\notin \operatorname{Sub}_\Delta}\frac{H^4 \phi_m^2}{\Delta(s,x)} \right),
% \end{align}
    \begin{align}
        \mathcal{O}\left( \sum_{s,x\in \operatorname{Sub}_\Delta}\frac{H^5\log{(SAT)}}{\Delta(s,x)} + \sum_{s,x\notin \operatorname{Sub}_\Delta}\frac{H^6\log{(SAT)}}{\Delta(s,x)} \right).\notag
    \end{align}
where $T = KH$ is totally number of steps; $\operatorname{Sub}_\Delta$ is the set of pseudo-suboptimal state action pairs and $\Delta(s,x)=\min_{h}\Delta_h(s,x)$, for all $h \in [H]$.
% \footnote{Logarithmic factors are omitted under $\tilde{O}(\cdot)$.}.
\end{theorem}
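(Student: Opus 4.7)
The decomposition recalled in the paragraph preceding the theorem already reduces the task to bounding $\sum_{k=1}^{K}\sum_{h=1}^{H}(Q^k_h-Q^*_h)(s^k_h,x^k_h)$ on suboptimal actions, up to the additive term $2pTH$, which becomes negligible after choosing $p=1/T$. The starting point is the recursion stated just before the theorem,
\[
(Q^k_h-Q^*_h)(s^k_h,x^k_h) \;\le\; \operatorname{clip}\!\left[3b_t\,\middle|\,\tfrac{\Delta(s^k_h,x^k_h)}{2H}\right] + \bigl(1+\tfrac{1}{H}\bigr)\sum_{i=1}^{t}\alpha^i_t\,(Q^{k_i}_{h+1}-Q^*_{h+1})(s^{k_i}_{h+1},x^{k_i}_{h+1}),
\]
which I would unfold across the $H-h$ remaining horizon steps. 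Because $\sum_i \alpha^i_t=1$ and $(1+1/H)^H\le e$, the recursion collapses cleanly, giving, up to constant factors,
\[
\sum_{k,h}(Q^k_h-Q^*_h)(s^k_h,x^k_h) \;\le\; O(1)\cdot\sum_{k,h}\operatorname{clip}\!\left[3b_{N^k_h(s^k_h,x^k_h)}\,\middle|\,\tfrac{\Delta(s^k_h,x^k_h)}{2H}\right],
\]
where I use standard $\alpha^i_t$-weight manipulations (as in \citet{qucb}) to reduce weighted sums of clipped bonuses to plain sums at the cost of $O(1)$ constants in $H$.

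\textbf{Splitting by $\operatorname{Sub}_\Delta$.} Next I would reorganize the resulting sum by $(s,x,h)$ triples, using that for each such triple $N^k_h(s,x)$ cycles through $1,2,\dots,N_T(s,x,h)$ as $k$ ranges over the episodes visiting $(s,x)$ at step $h$. The sum is then split by whether $(s,x)\in\operatorname{Sub}_\Delta$ or not. For $(s,x)\in\operatorname{Sub}_\Delta$, \cref{lemma:suboptimal visits main text} gives $N_T(s,x,h)\le 16c^2 H\phi_m^2\iota/\Delta^2(s,x)$; I would then drop the clip, bound $\sum_{t=1}^{N}b_t\le 2c\sqrt{H\phi_m^2\iota\, N}$, and substitute the visit bound to get an $O(H\phi_m^2\iota/\Delta(s,x))$ contribution per $(s,x,h)$. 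Summing over $h$ and using the conservative-optimism bound $\phi_m\le H$ together with the remaining $H$ factors from the recursion unroll produces the first term $\sum_{(s,x)\in\operatorname{Sub}_\Delta} H^5\log(SAT)/\Delta(s,x)$.

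\textbf{Pairs outside $\operatorname{Sub}_\Delta$.} For $(s,x)\notin\operatorname{Sub}_\Delta$ the sharper visit bound of \cref{lemma:suboptimal visits main text} is not available, so I would invoke the clip-summation lemma \cref{lemma:bounded clip summation} (the standard Simchowitz--Jamieson device), which gives $\sum_{t\ge 1}\operatorname{clip}[c\sqrt{H\phi_m^2\iota/t}\,|\,\delta]=O(H\phi_m^2\iota/\delta)$. Instantiating $\delta=\Delta(s,x)/(2H)$ pays an extra factor of $H$, producing an $O(H^2\phi_m^2\iota/\Delta(s,x))$ contribution per step; summing over $h$ and using $\phi_m\le H$ yields the claimed $H^6\log(SAT)/\Delta(s,x)$ term. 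The saving of one full power of $H$ on the $\operatorname{Sub}_\Delta$ branch comes precisely from being able to substitute the visit bound and thereby avoid the $1/\delta$ inflation caused by the clip threshold $\Delta/(2H)$. I expect the main obstacle to be a careful accounting of the $H$ exponents through the recursion unroll and the $\alpha^i_t$-weight convolutions, and verifying that the $(1+1/H)^H$ blow-up together with the $\sum_i \alpha^i_t=1$ normalization propagate through both summation branches without silently costing more powers of $H$ than advertised.
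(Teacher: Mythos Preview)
Your proposal follows essentially the same route as the paper's proof: regret-to-gap decomposition, the clip recursion, unrolling via the $\alpha_t^i$ rearrangement and $(1+1/H)^{O(H)}\le e^{O(1)}$, then splitting the clipped-bonus sum over $\operatorname{Sub}_\Delta$ versus its complement and handling the two pieces exactly as you describe (drop the clip and substitute the visit bound on $\operatorname{Sub}_\Delta$; apply \cref{lemma:bounded clip summation} off $\operatorname{Sub}_\Delta$). One small correction to your bookkeeping: the displayed bound $\sum_{k,h}(Q^k_h-Q^*_h)\le O(1)\cdot\sum_{k,h}\operatorname{clip}[\ldots]$ should actually carry an extra factor of $H$ (the paper obtains $e^2H$), since unrolling for each fixed $h$ gives $\sum_k(Q^k_h-Q^*_h)\le e^2\sum_{k,\,h'\ge h}\operatorname{clip}[\ldots]$ and then summing over $h$ counts each $(k,h')$ term up to $H$ times---but you already account for this in your verbal power-counting (``remaining $H$ factors from the recursion unroll''), so the final $H^5$ and $H^6$ exponents come out correctly.
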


See \cref{sec:qucb proof} for the full version regret bound and proof details. 
% For easier comparison with the literature, we can further simplify the regret bound by the fact that $\phi(\cdot) \leq H$,
% \begin{corollary}
%     \cref{alg:q-ucbshaping} has an expected regret bound of,
%     \begin{align}
%         \tilde{\mathcal{O}}\left( \sum_{s,x\in \operatorname{Sub}_\Delta}\frac{H^5}{\Delta(s,x)} + \sum_{s,x\notin \operatorname{Sub}_\Delta}\frac{H^6}{\Delta(s,x)} \right).
%     \end{align}
% \end{corollary}
The first part of the regret is for the set of state-action pairs on which offline learned shaping functions would cut unnecessary over explorations resulting in an order of magnitude better dependence on the horizon factor $H$. This improved dependence on $H$ even matches with state-of-the-art Q-learning variant in the literature~\citep{DBLP:conf/colt/Xu0D21}. The second part corresponds to the set of state-action pairs that our shaping function cannot decide when to stop exploring during learning matching the gap-dependent regret bound of the vanilla Q-UCB~\citep{qucblog}.  
%To the best of our knowledge, this is the first formal analysis on when reward shaping can provably improve the model-free online learning efficiency.

% Now that we know a good potential function should upper bound the optimal state value while not being overly optimistic, we will next introduce a practical realization of such potential functions from heterogeneous confounded offline datasets justified by causal theories. 

\section{Experiments}
\label{sec:experiments}

\begin{figure*}[t]
\centering
    \begin{subfigure}[b]{.24\linewidth}
        \centering
        \includegraphics[width=.8\linewidth]{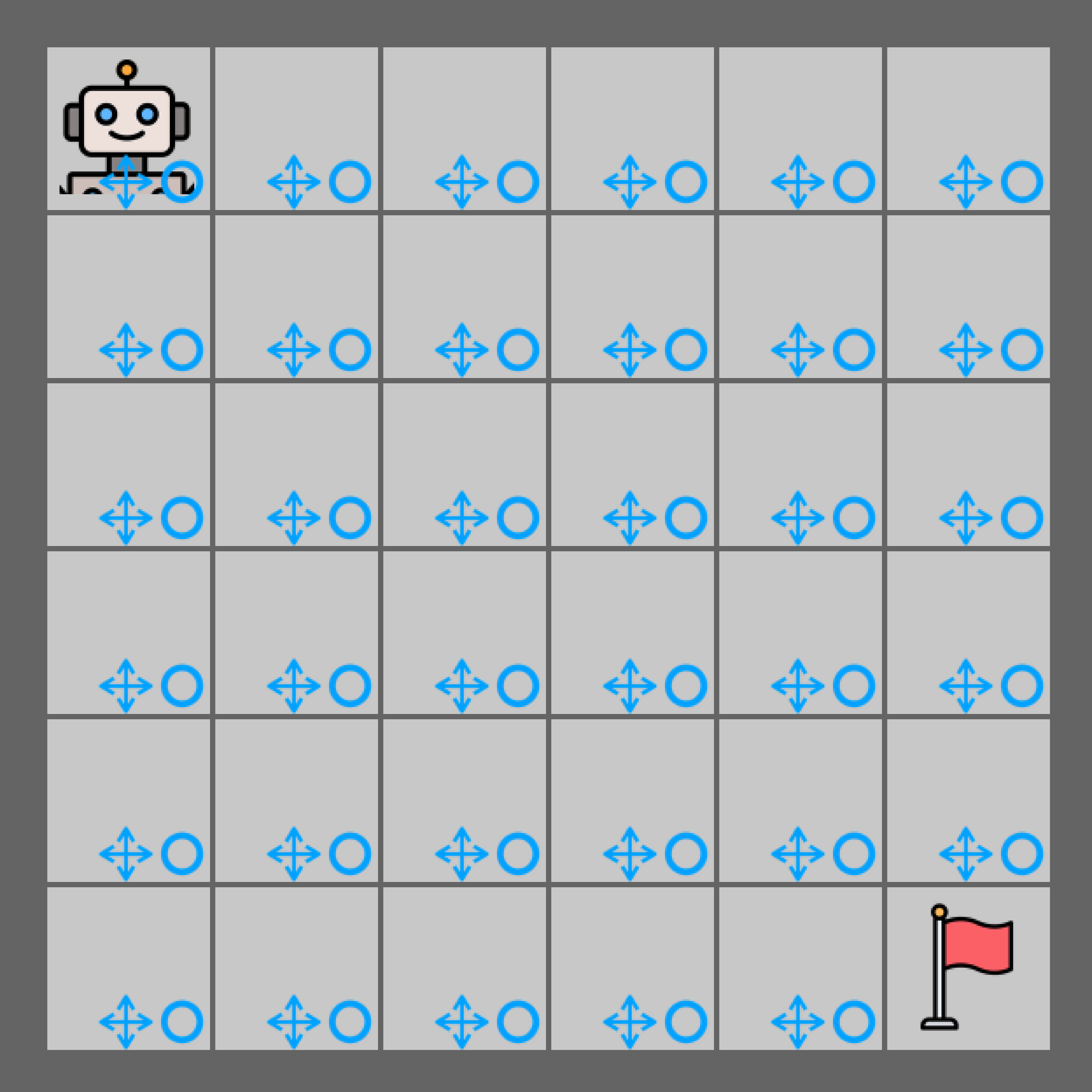}
        \caption{Windy Empty World.}
        \label{fig:windy empty world}
    \end{subfigure}
    \hfill
    \begin{subfigure}[b]{.24\linewidth}
        \centering
        \includegraphics[width=.8\linewidth]{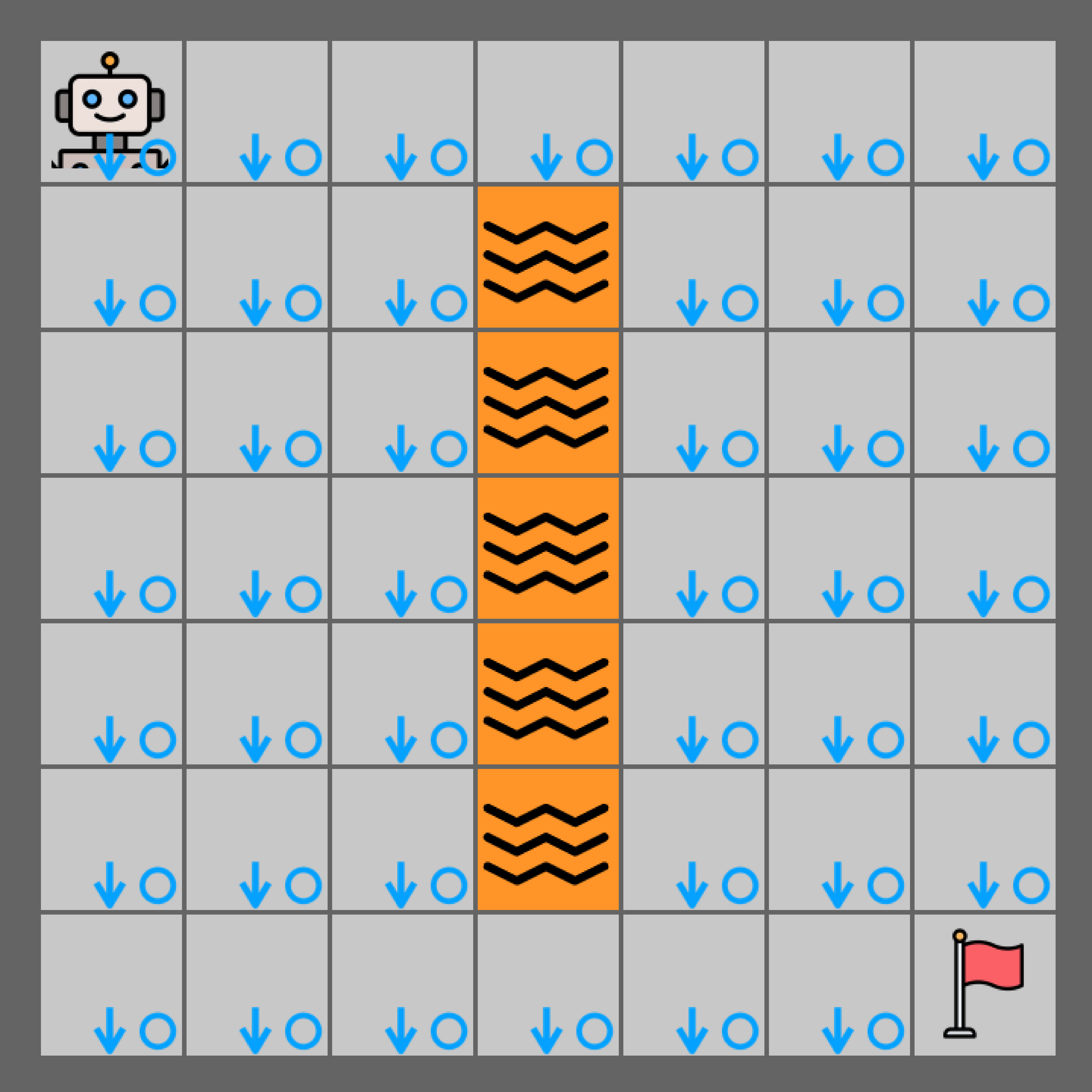}
        \caption{LavaCross (easy).}
        \label{fig:lavacross easy}
    \end{subfigure}
    % \hfill
    % \begin{subfigure}[b]{.18\linewidth}
    %     \centering
    %     \includegraphics[width=.8\linewidth]{pics/vis-wind-Custom-LavaCrossing-hard-v0.png.png}
    %     \caption{LavaCross (hard).}
    %     \label{fig:lavacross hard}
    % \end{subfigure}
    \hfill
    \begin{subfigure}[b]{.24\linewidth}
        \centering
        \includegraphics[width=.8\linewidth]{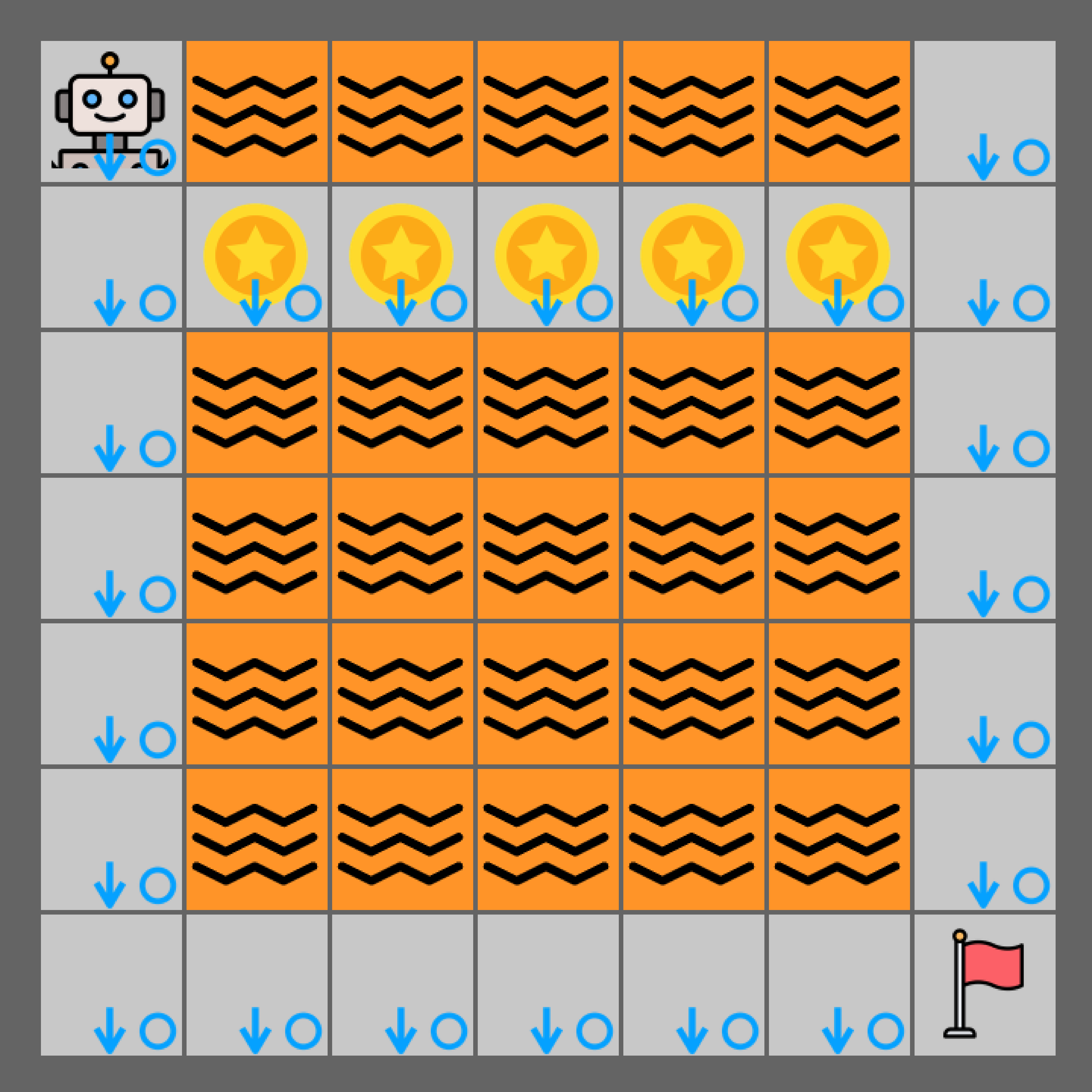}
        \caption{LavaCross (hard).}
        \label{fig:lavacross exe}
    \end{subfigure}
    \hfill
    \begin{subfigure}[b]{.24\linewidth}
        \centering
        \includegraphics[width=.8\linewidth]{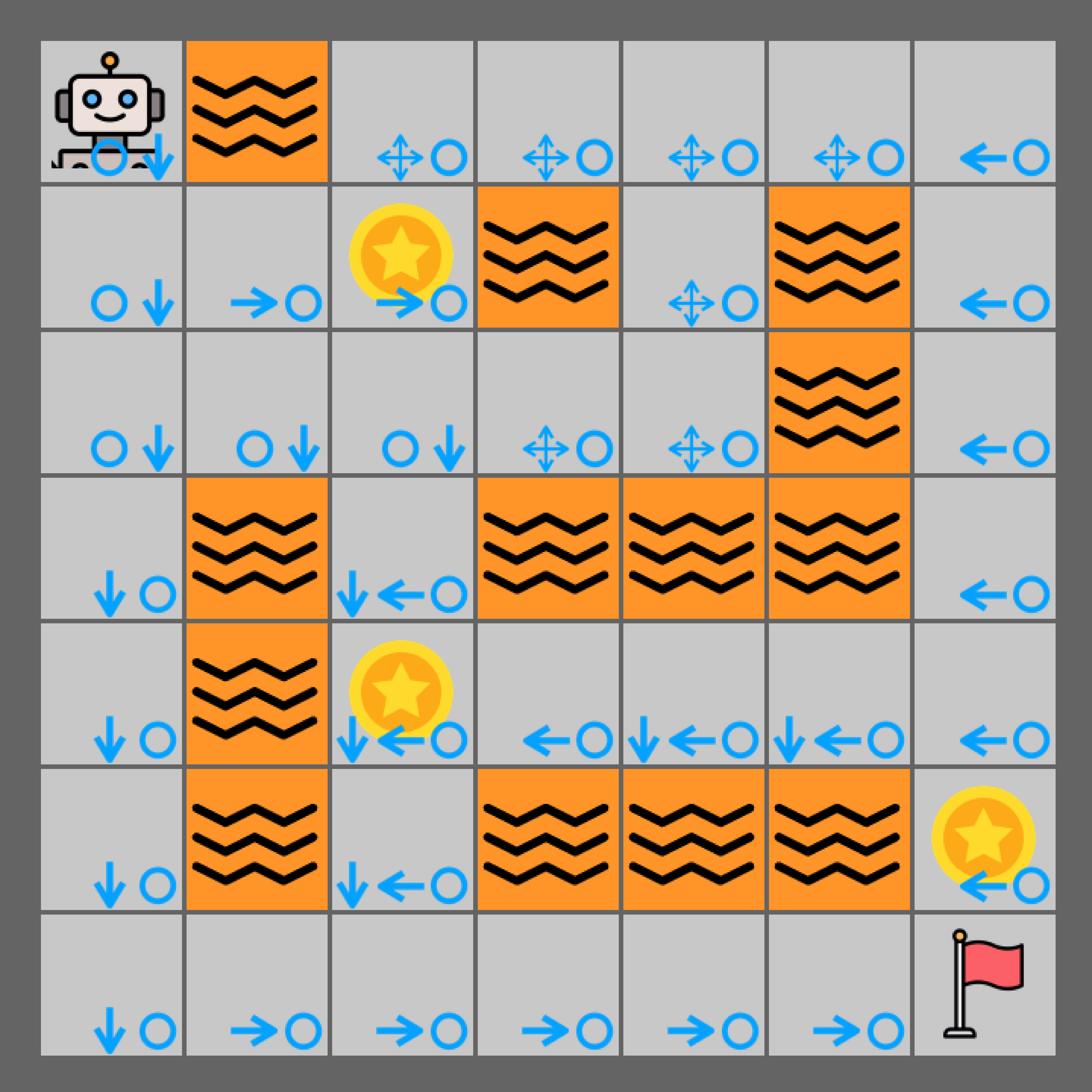}
        \caption{LavaCross (maze).}
        \label{fig:lavacross maze}
    \end{subfigure}
    \caption{Windy grid worlds. The blue arrows and circles in the lower right of each grid denote the possible wind directions. The flag is the goal and the orange tile is the lava. The agent's task is to reach the goal quickly without touching lava and collect coins if possible.}
    \label{fig:env rendering}
\end{figure*}

\begin{figure*}[t]
    \centering
    %     \centering
    %     \includegraphics[width=0.8\linewidth]{pics/exp-reg-Nowind-Empty-8x8-v0.png}
    %     \caption{Cumulative regret in original Empty World}
    %     \label{fig:exp empty world}
    % \end{figure}
    \begin{subfigure}[b]{.24\linewidth}
        \centering
        \includegraphics[width=\linewidth]{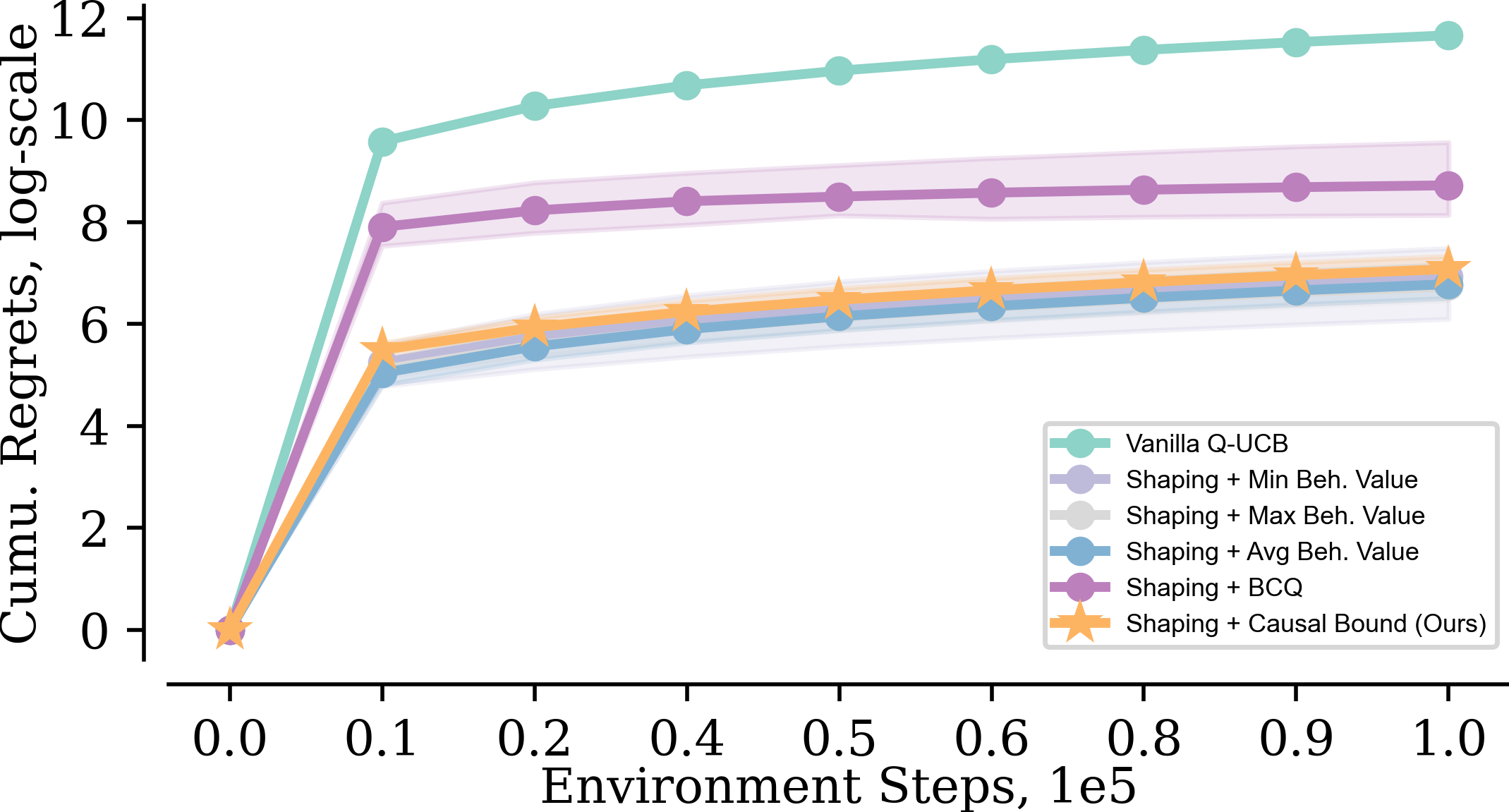}
        \caption{Windy Empty World}
        \label{fig:exp windy empty world}
    \end{subfigure}
    \hfill
    \begin{subfigure}[b]{.24\linewidth}
        \centering
        \includegraphics[width=\linewidth]{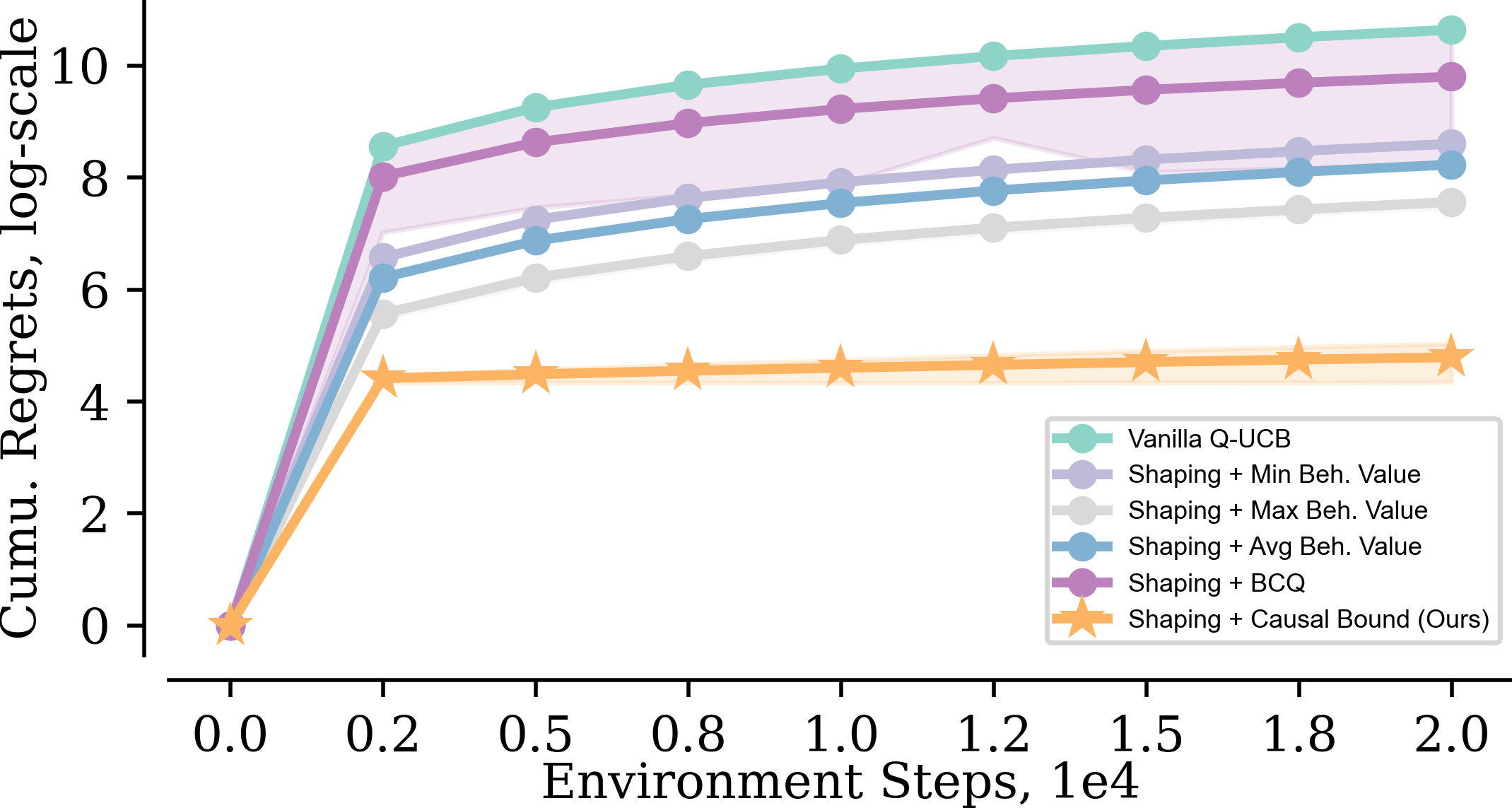}
        \caption{Windy LavaCross (easy)}
        \label{fig:exp windy lava easy}
    \end{subfigure}
    \hfill
    % \begin{figure}[h]
    %     \centering
    %     \includegraphics[width=0.8\linewidth]{pics/exp-reg-Custom-LavaCrossing-hard-v0.png}
    %     \caption{Cumulative regret in Windy LavaCross (hard)}
    %     \label{fig:exp windy lava hard}
    % \end{figure}
    \begin{subfigure}[b]{.24\linewidth}
        \centering
        \includegraphics[width=\linewidth]{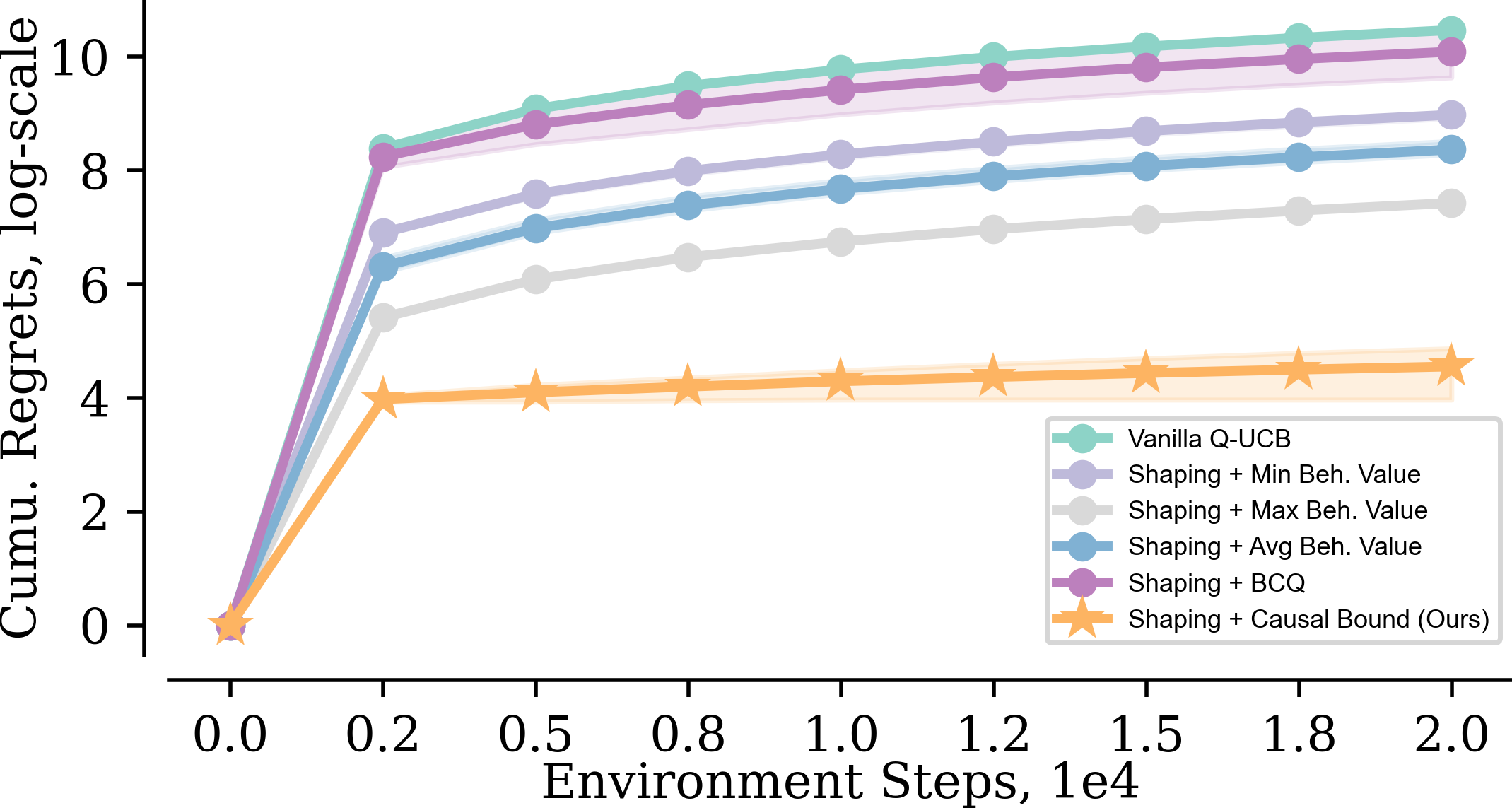}
        \caption{Windy LavaCross (hard)}
        \label{fig:exp windy lava exe}
    \end{subfigure}
    \hfill
    \begin{subfigure}[b]{.24\linewidth}
        \centering
        \includegraphics[width=\linewidth]{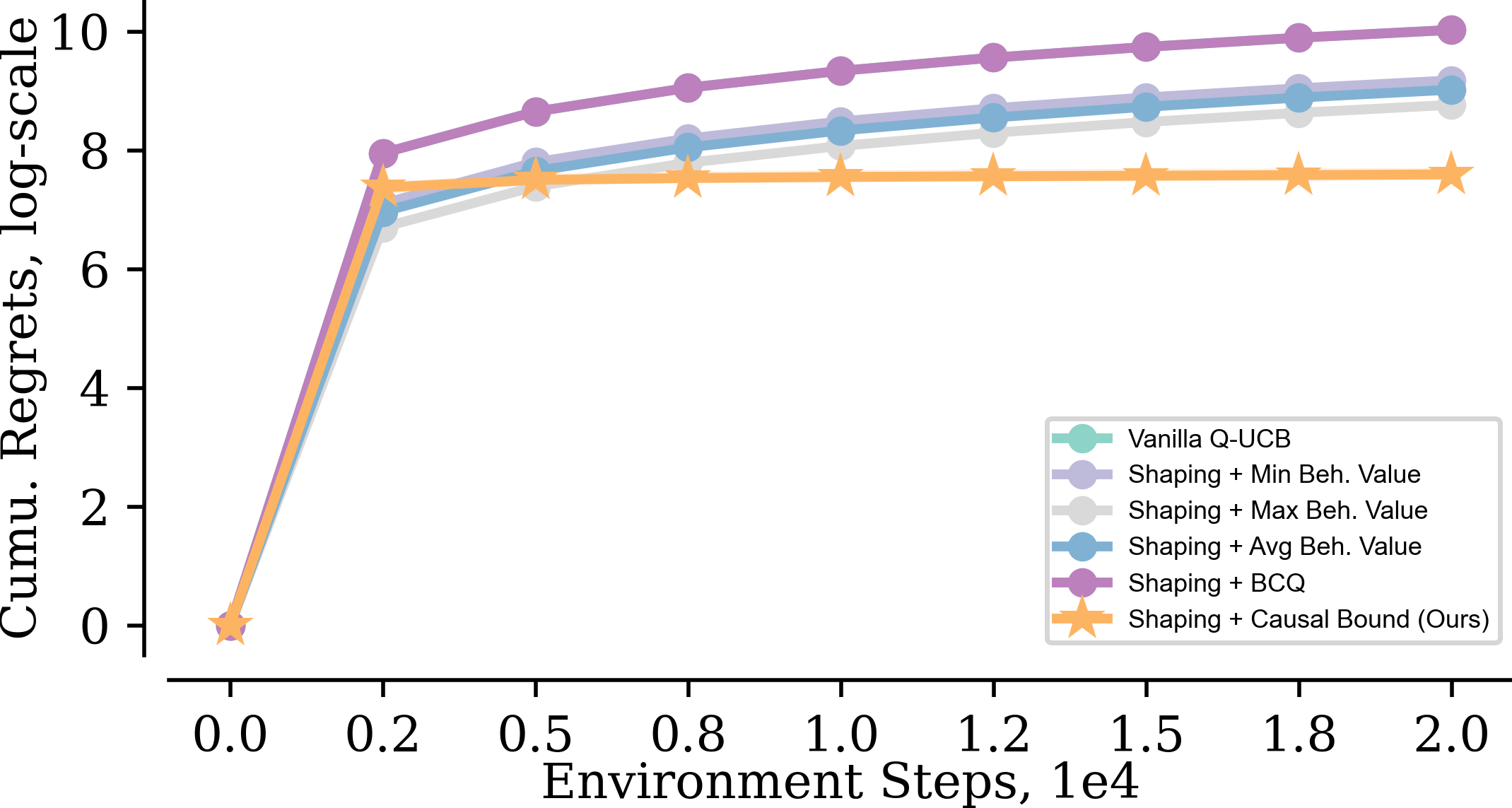}
        \caption{Windy LavaCross (maze)}
        \label{fig:exp windy lava maze}
    \end{subfigure}
    % \vspace{-5pt}
    \caption{Cumulative regrets in windy grid worlds. Lower is better. Orange curve is ours.}
    \label{fig:exp all regrets}
    \vspace{-5pt}
\end{figure*}

In this section, we show simulation results verifying that: \textbf{(1)} Q-UCB with our proposed shaping function enjoys better sample efficiency
% than vanilla Q-UCB and Q-UCB using directly the behavioral policy values as shaping functions
, and \textbf{(2)} the policy learned by our shaping pipeline at convergence is the optimal policy for an interventional agent. The baselines 
% we compare our proposed Q-UCB shaping with causal bound with 
include vanilla Q-UCB (No Shaping), Q-UCB shaping with minimum state values learned from offline datasets (Shaping + Min Beh. Value), shaping with maximum offline state values (Shaping + Max Beh. Value), and shaping with average offline state values (Shaping + Avg Beh. Value). We test those algorithms in a series of customized windy MiniGrid environments~\citep{zhang2024eligibility,gym_minigrid}.
% extended from Windy Gridworld~\citep{zhang2024eligibility} and MiniGrid~\citep{gym_minigrid}.
% as shown in \Cref{fig:env rendering}. 

We use tabular state-action representations and the rewards are all deterministic in those environments. There is a step penalty of $-0.1$, $+0.2$ for getting a coin, $0$ for reaching the goal, and $-1$ for touching the lava. The episode ends immediately when either a goal/lava grid is reached or the episode length hits the horizon limit.
% Unlike normal grid worlds where state transitions closely follow the agent's action 
% (up to slight chances of drifts under some slippery floor assumptions), 
In windy grid worlds, the state transition is determined by the resultant force of both agent's action and the wind direction if the agent tries to move. For example, if the agent wants to move right when there is a north wind (wind blowing from the north), the agent will end up being in the lower right grid concerning its original location instead of the right-hand side grid. However, the wind direction can only be observed by more capable behavioral agents. 
% serving as solely a learning source from which we train an interventional agent with less capable sensory abilities. 
In our collected offline datasets, wind direction is thus a hidden confounder forming a CMDP. See \cref{sec:experiments details} for detailed experiment setups and more results.

In Windy Empty World (\cref{fig:windy empty world}), there is a uniform wind from each direction and a big chance for no wind (denoted as a hollow blue circle). As an interventional agent, there isn't much extra information it can utilize from the shaping function.
% other than going toward the goal location regardless of the wind direction. 
Thus, in \cref{fig:exp windy empty world}, the cumulative regret of our method and other shaping baselines are on par with each other. 
Note that we do expect the vanilla Q-UCB not to perform well in all our windy grid worlds mainly because that it assumes reward being in $[0,1]$ and initializes overly optimistically as $H$. Yet, our Q-UCB Shaping (\cref{alg:q-ucbshaping}) lifts this restriction and learns well with simply zero initializations.

\begin{figure}[ht]
    \begin{subfigure}[b]{.31\linewidth}
        \centering
        \includegraphics[width=.8\linewidth]{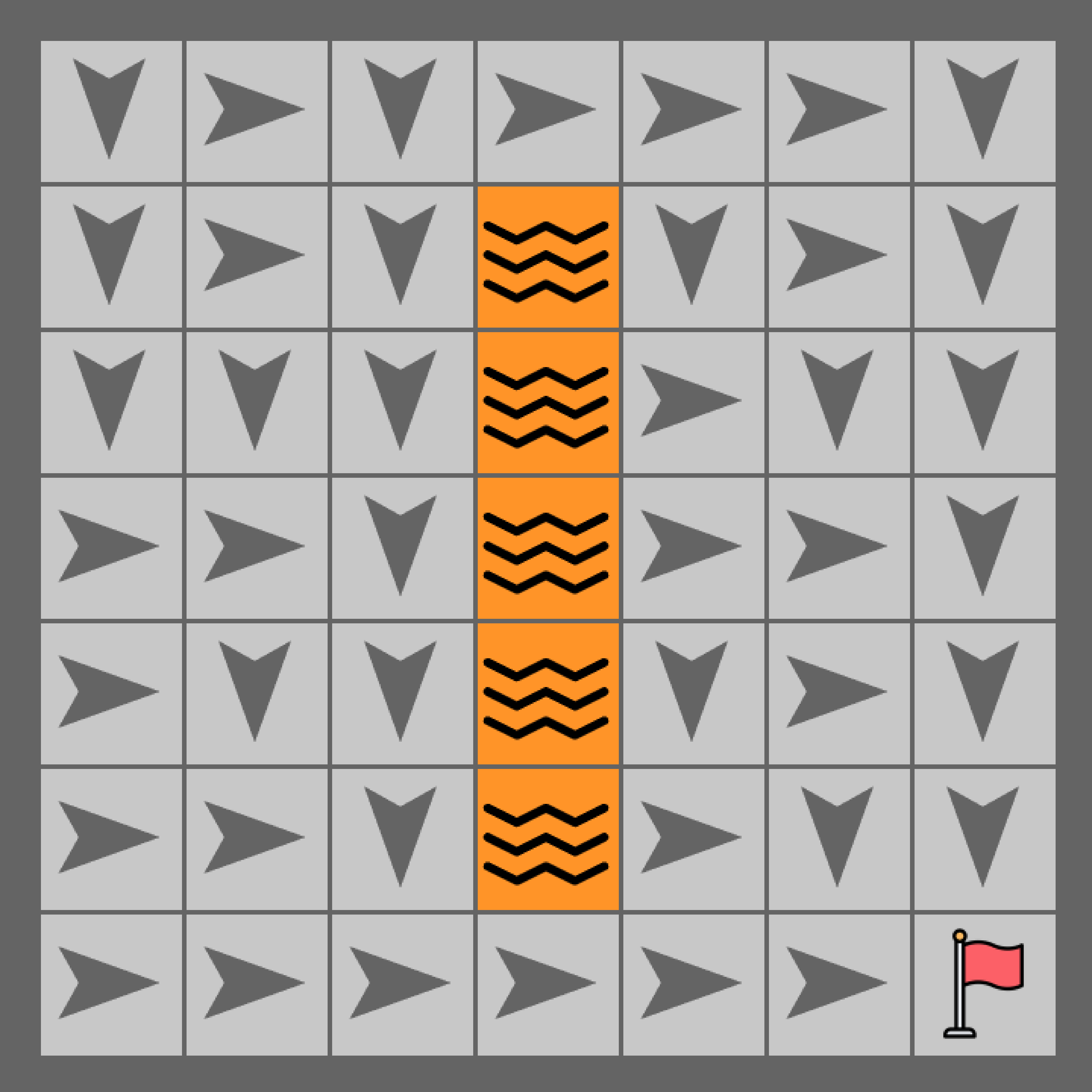}
        \caption{Easy mode}
        \label{fig:lava easy policy}
    \end{subfigure}
    \hfill
    \begin{subfigure}[b]{.31\linewidth}
        \centering
        \includegraphics[width=.8\linewidth]{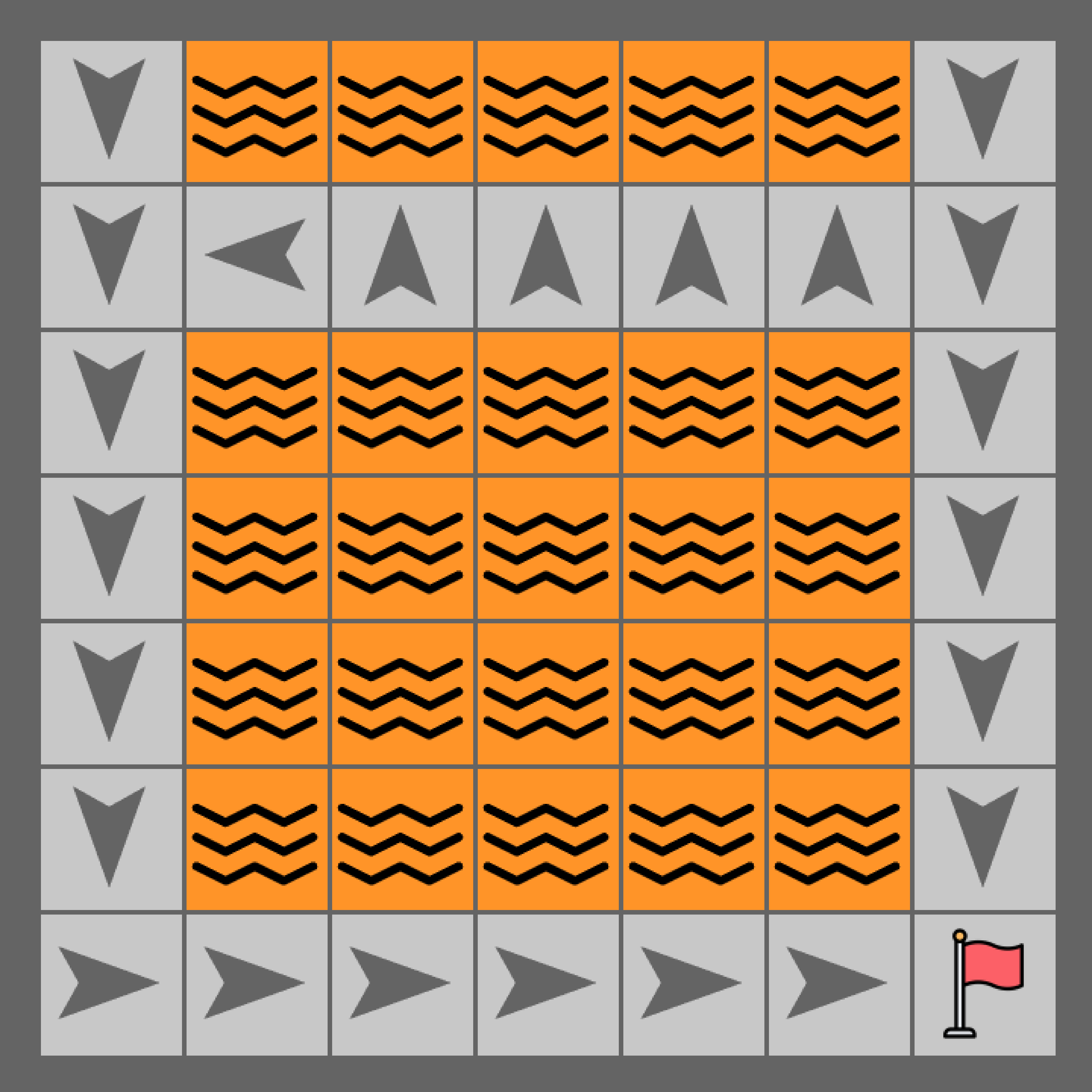}
        \caption{Hard mode}
        \label{fig:lava exeterme policy}
    \end{subfigure}
    \hfill
    \begin{subfigure}[b]{.31\linewidth}
        \centering
        \includegraphics[width=.8\linewidth]{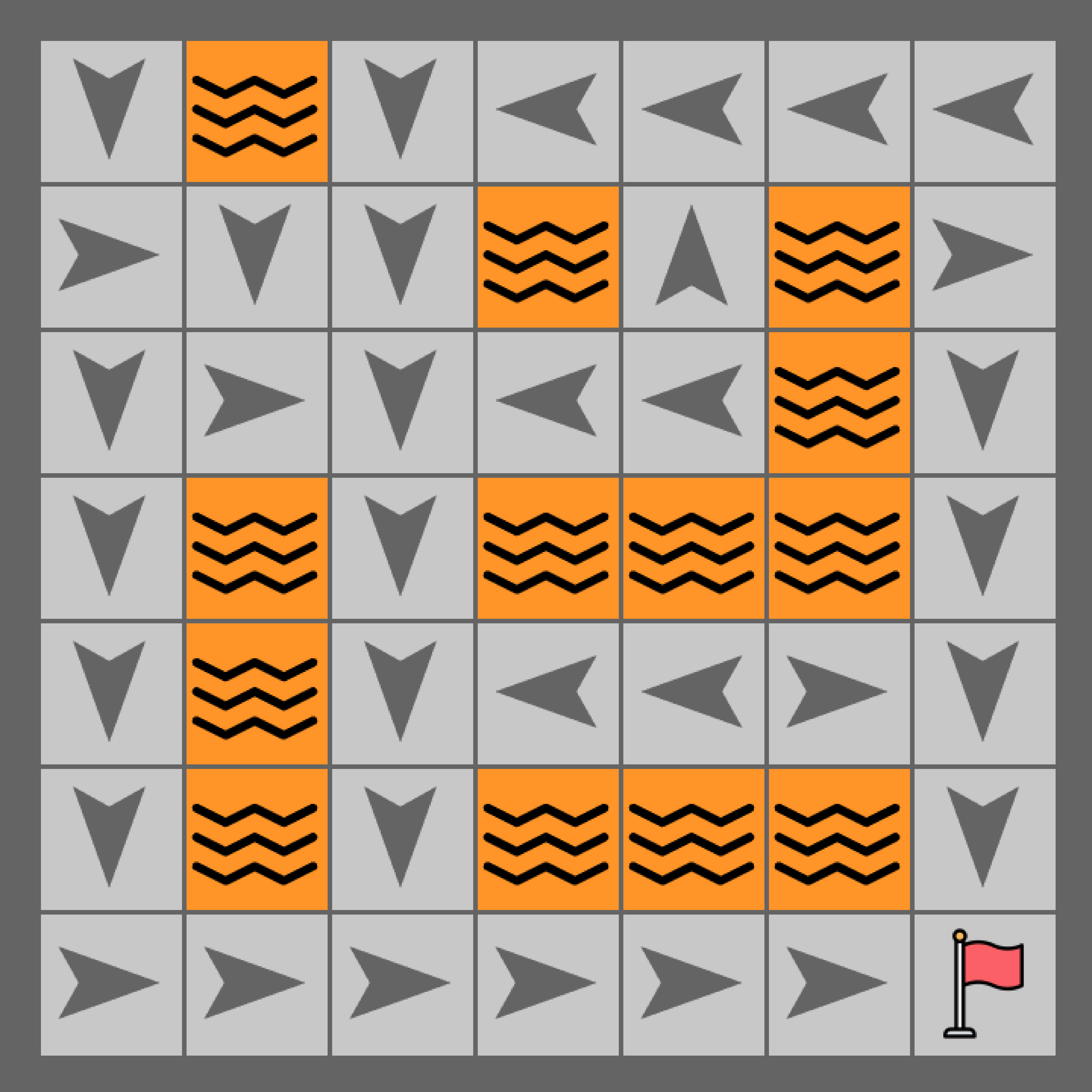}
        \caption{Maze mode}
        \label{fig:lava maze policy}
    \end{subfigure}
    \label{fig:policy mappings}
    \caption{Optimal polices learned by our proposed method.}
    \vspace{-5pt}
\end{figure}
In both Windy LavaCross easy mode and hard mode, there is a strong north wind and only a slight chance of being still across the map. We see from \cref{fig:lavacross easy} and \cref{fig:lavacross exe} that only the lower left L-shaped route is safe for an interventional agent. 
% It is infeasible for it to pass through the north gateway of the lava barrier in \cref{fig:lavacross easy} nor collect any coins in the north bank of the lava lake in \cref{fig:lavacross exe}. 
We provide offline datasets from both a conservative behavioral agent taking the safe route and an adventurous behavioral agent taking the risky northern route. The adventurous behavioral agent only moves when there is no wind. This fact is not reflected in the offline datasets though. When shaping with such values, the mixed learning signal from the adventurous agent slows down the learning process since the interventional agent cannot observe the wind and gets punished constantly by taking the risky northern route. Only our proposed shaping with causal offline upper bound provides the correct learning signal and helps the online interventional agent converge efficiently to the right optimal policy (\cref{fig:lava easy policy}, \cref{fig:lava exeterme policy}).

In a more challenging LavaCross maze (\cref{fig:lavacross maze}), 
% we have a similar setup with lava and wind as previously discussed. 
% the safe route for an interventional agent is still the lower left L-shaped route. 
there are three coins. Though the lower left L-shaped route is still a safe option, there is a good chance for the agent to get the middle coin given the wind distribution~\cref{fig:exp windy lava maze}. The other two coins are traps that only behavioral agents sensing the wind could get. If the interventional agent is shaped with such confounded values, there is a high chance that it will be pushed into lava by the wind. As a result, we see in \cref{fig:exp windy lava maze} that shaping with our causal upper bound helps the agent converge and find the right optimal policy (\cref{fig:lava maze policy}).

\comment{
visuals ref: https://arxiv.org/pdf/2303.06710
https://roamlab.github.io/hula/
}

\section{Conclusion}
In this work we study the problem of designing potential-based reward-shaping functions automatically with confounded offline datasets from a causal perspective. 
Though the optimal state value is a competitive candidate for state potentials, it's not easily accessible or identifiable from confounded offline datasets. 
We tackle this challenge by extending the Bellman Optimal Equation to confounded MDPs to robustly upper bound the optimal state values for online interventional agents. Then, we propose a modified model-free learner, Q-UCB Shaping, which has a better regret bound than the vanilla Q-UCB when using our proposed potentials that upper bound the optimal state values. The effectiveness of our method is also verified empirically.
% , our method outperforms other shaping baselines using confounded state values directly.
% This condition is satisfied by the proposed Causal Bellman Optimal Equation. 
%Future work includes extending the proposed approach to more realistic settings of higher dimensional and even continuous data.
% directly learned offline dataset state values are not a faithful representation of the online state values. When 

\comment{
% wrong experiment figure x-axis!!!
missing step indicator at eq ?
% move first half of example2 to section 3, connect with PBRS, explain why offline value directly is bad, move fig3 to section 3 and explain its bad if we learn with that shaping function, change legend to confounded value shaping

% figure too small

% fig3 make it single column no wrapfigure
% \todo{write the paper less incremental, highlight causal perspective, think about the curriculum story}

% \todo{how to make it work? shaping, we bring a new causal perspective for PBRS, connect with adaptive algo (qucb), maybe follow the story telling technique of unpacking reward shaping paper}

% \todo{example: offline dataset --human thinking--> design a shaping func --> wrong/biased func}

% \todo{experiments: bigger grid world, more walls, cutie demonstration, like the unpacking paper}

% \todo{CMDP is not new, sec 4 first?, split current section 3 into 2 contri = new algo + analysis }

% \todo{CMDP is not new, citing MDPUC and Confounded POMDP, don't make it like a contribution, thm 3.2 just cite previous results no need to highlight, check the irl paper intro}

% \todo{thm 3.3 -> proposition}

% \todo{3 - challenge 4 - how to learn shaping 5 - how to get rl algo work with shaping}

% \todo{fuse multiple datasets directly with instrumental variable, bound reward/transitions with multiple datasets, change algo 2 more compactly}
% \todo{how to inject clip, shaping in DQN, some hacky heuristic simplification?}

}

\clearpage
% \newpage
% Acknowledgements should only appear in the accepted version.
\section*{Acknowledgements}
This research was supported in part by the NSF, ONR, AFOSR, DoE, Amazon, JP Morgan, and the Alfred P. Sloan Foundation.

% \textbf{Do not} include acknowledgements in the initial version of
% the paper submitted for blind review.

% If a paper is accepted, the final camera-ready version can (and
% usually should) include acknowledgements.  Such acknowledgements
% should be placed at the end of the section, in an unnumbered section
% that does not count towards the paper page limit. Typically, this will 
% include thanks to reviewers who gave useful comments, to colleagues 
% who contributed to the ideas, and to funding agencies and corporate 
% sponsors that provided financial support.

\section*{Impact Statement}

This paper presents work whose goal is to advance the field of 
Machine Learning. There are many potential societal consequences 
of our work, none of which we feel must be specifically highlighted here. One major reason is that the potential-based reward-shaping framework we considered does not alter the optimal policy of the agent, which is an inherent property of the problem. Our proposed method solely serves as an acceleration for training such agents.

% In the unusual situation where you want a paper to appear in the
% references without citing it in the main text, use \nocite
% \nocite{langley00}

\bibliography{bibwithID,extra}
\bibliographystyle{icml2025}

%%%%%%%%%%%%%%%%%%%%%%%%%%%%%%%%%%%%%%%%%%%%%%%%%%%%%%%%%%%%%%%%%%%%%%%%%%%%%%%
%%%%%%%%%%%%%%%%%%%%%%%%%%%%%%%%%%%%%%%%%%%%%%%%%%%%%%%%%%%%%%%%%%%%%%%%%%%%%%%
% APPENDIX
%%%%%%%%%%%%%%%%%%%%%%%%%%%%%%%%%%%%%%%%%%%%%%%%%%%%%%%%%%%%%%%%%%%%%%%%%%%%%%%
%%%%%%%%%%%%%%%%%%%%%%%%%%%%%%%%%%%%%%%%%%%%%%%%%%%%%%%%%%%%%%%%%%%%%%%%%%%%%%%
\newpage
\appendix
\onecolumn
\section{Related Work}
\label{sec:related work}
\noindent\textbf{Reward Shaping.} Before \citet{pbrs} proposed the Potential-Based Reward Shaping (PBRS), the idea of transforming and modifying rewards to better facilitates learning has been studied in various problem settings~\citep{DBLP:journals/ras/SaksidaRT97,DBLP:conf/icml/RandlovA98}. \citet{pbrs} first formalized a shaping framework that guarantees the policy invariance under reward transformation. Though this policy invariance comes at a price that shaping functions are limited to certain forms of state potential functions. There are numerous successful applications of PBRS~\citep{shapingdemonstration,DBLP:conf/aaai/HarutyunyanDVN15} but there are also a growing numbers of papers that focus on using carefully designed biased shaping functions (not following the PBRS framework)~\citep{shapingsurvey2024}. Such shaping functions have shown effectiveness in robots playing rubic cubes~\citep{akkaya2019solvingcube}, in autonomous driving~\citep{DBLP:conf/iecon/WuZHMS23} and more. People are also not satisfied with designing shaping functions manually and tries to learn shaping functions automatically, either serving as an exploration bonus or a supplement to the task rewards~\citep{rsmeta,rewardagent,DBLP:conf/icml/PathakAED17,DBLP:conf/icml/YuanLJZ23,DBLP:conf/iclr/RaileanuR20,DBLP:conf/nips/DevidzeKS22}.

\noindent\textbf{Off-Policy Evaluation and Learning.} Off-policy learning has a long history in RL dating back to the classic algorithms of Q-learning \citep{watkins1989learning,watkins-qlearning}, importance sampling \citep{swaminathan2015counterfactual,jiang2015doubly}, and temporal difference \citep{precup2000eligibility,munos2016safe}. Recently, people also propose to utilize offline datasets to warm start the training~\citep{calql,DBLP:journals/corr/abs-2305-14550,CQL2020Kumar}, augmenting online training replay buffer~\citep{hybridrl,efficientoffon} or incorporating imitation loss with offline data~\citep{DBLP:conf/icml/KangJF18,DBLP:conf/rss/Zhu0MRECTKHFH18}.
However, these work rely on a critical assumption that there is no unobserved confounders in the environment. While this assumption is generally true when the off-policy data is collected by an interventional agent, offline datasets generated by potentially unknown sources can easily break this assumption~\citep{levine2020offline}. In recent years, there is also a growing interest in identifying policy values from confounded offline data~\citep{DBLP:conf/icml/ShiUHJ22,DBLP:conf/nips/MiaoQZ22,DBLP:conf/icml/GuoCZYW22,DBLP:journals/ior/BennettK24} or partially identifying the policy values via bounding~\citep{zhang2024eligibility,zhang2019near}. 

\noindent\textbf{Regret Analysis of Finite Horizon Episodic Tabular MDPs.} Many of the prior work on regret analysis focus on the model-based setting where the transitions and reward distributions are estimated from collected data and planning is then executed in the learned model to extract the optimal policy~\citep{DBLP:journals/ml/SinghY94,DBLP:conf/nips/KearnsS98a,kakade2003sample,DBLP:conf/nips/DannB15,DBLP:conf/nips/DannLB17,DBLP:conf/nips/SimchowitzJ19}. 
In this work, we mainly focus on the regret analysis for model-free learners~\citep{DBLP:conf/icml/StrehlLWLL06pacmodelfree,qucb,DBLP:conf/iclr/WangDCW20,DBLP:conf/colt/Xu0D21,DBLP:journals/ior/ChenMSS24}. More specifically, our analysis focuses on a Q-learning variant, Q-UCB~\citep{qucb} that incorporates the UCB bonus into Q-learning showing model-free learners can also enjoy $\sqrt{T}$ regret as the model-based learners do. Later, \citet{qucblog} provides a more fine-grained gap-dependent regret analysis on Q-UCB showing that it also enjoys a logarithmic dependency on total training steps $T$. 

\section{Walking Robot Example Details}
\label{sec:example details}
From the environment set up, there are two ways of walking optimally. 

If the agent can observe the step size required to stabilize itself ($U_h$), it should always act as $X_h = U_h$ when $F_h = 0$. Under this perfect behavioral policy, a walking process should be: (1) When the robot is stable, $F_h = 1$, it takes a big step $X_h = 1$ and transit into an unstable status, $F_{h+1}=0$. At this stage, the agent's location does not change, $L_{h+1} = L_{h}$; then (2) the agent takes an appropriate size step, $X_h = U_h$, transits back to stable status $F_{h+2}=1$ and move forward by one step, $L_{h+2} = L_{h+1} + 1$. The whole process resembles human walking as a ``controlled falling" process. This is the optimal behavioral policy. Formally, this is defined as $\beta^{(1)}_h: X_h \gets U_h \text{ if } F_h = 0 \text{ else } X_h \gets 1$. (In \cref{example:confounded values potential}, the incompetent behavioral policy is defined as $\beta^{(2)}_h: X_h \gets \neg U_h$.)

However, there is an alternative approach to walking steadily when the robot cannot determine the required step size $U_t$ to stabilize itself. By taking small step size every time, $X_t = 0$, the agent always stays in stable status $F_t = 1$ and moves forward steadily in each time step. This is the optimal interventional policy. Formally, this is written as $\pi(X_t|L_t, F_t) = 0$.

To see the quantitative difference between the optimal behavioral policy and optimal interventional policy, we calculated the optimal state values as shown in the below chart. Note that for each state with given $L_t, F_t$, the optimal behavioral state values are the same for all $U_t$. We can see from the table that while the optimal behavioral policy can achieve optimal rewards from any stability status, an interventional agent should only stay in the stable status due to observational mismatch.

\begin{table}[ht]
\centering
\caption{Optimal behavioral and interventional policy values.}
\begin{tabular}{c|c|c|c|c}
    \hline
              & \multicolumn{2}{c|}{\textbf{Behavioral}} & \multicolumn{2}{c}{\textbf{Interventional}}\\
    \hline
    $L_t$     &  $F_t=0$& $F_t=1$ &  $F_t=0$& $F_t=1$\\
    \hline
    \hline
    0  & 10.& 10.& 5.0 & 5.5\\
        \hline
    1  & 9. & 9. & 4.5 & 5.\\
        \hline
    2  & 8. & 8. & 4. & 4.5\\
        \hline
    3  & 7. & 7. & 3.5 & 4.\\
        \hline
    4  & 6. & 6. & 3. & 3.5\\
        \hline
    5  & 5. & 5. & 2.5 & 3.\\
        \hline
    6  & 4. & 4. & 2. & 2.5\\
        \hline
    7  & 3. & 3. & 1.5 & 2.\\
        \hline
    8  & 2. & 2. & 1. & 1.5\\
        \hline
    9  & 1. & 1. & 0. & 1.\\
        \hline
    10  & 0. & 0. & 0. & 0.\\
    \hline
\end{tabular}
\label{tab:opt value}
\end{table}

In the meantime, state values of the behavioral policies estimated via Monte-Carlo simulation during the offline dataset collection are shown in \cref{tab:confounded value}.
\begin{table}[ht]
\centering
\caption{Confounded values of behavioral policies.}
\begin{tabular}{c|c|c|c|c}
    \hline
              & \multicolumn{2}{c|}{\textbf{Beh. Policy $\pi_1$}} & \multicolumn{2}{c}{\textbf{Beh. Policy $\pi_2$}}\\
    \hline
    $L_t$     &  $F_t=0$& $F_t=1$ &  $F_t=0$& $F_t=1$\\
    \hline
    \hline
    0  & 10.& 10.& -22.5 & -41.7\\
        \hline
    1  & 9. & 9. & -21.8 & -40.1\\
        \hline
    2  & 8. & 8. & -21.0 & -38.4\\
        \hline
    3  & 7. & 7. & -20.25 & -36.9\\
        \hline
    4  & 6. & 6. & -19.5 & -34.5\\
        \hline
    5  & 5. & 5. & -18.75 & -33.6\\
        \hline
    6  & 4. & 4. & -18.0 & -32.5\\
        \hline
    7  & 3. & 3. & -17.3 & -32.1\\
        \hline
    8  & 2. & 2. & -16.5 & -31.5\\
        \hline
    9  & 1. & 0. &  0.& 0.\\
        \hline
    10  & 0. & 0. & 0. & 0.\\
    \hline
\end{tabular}
\label{tab:confounded value}
\end{table}
We see a clear trend that for the good behavioral policy $\pi_1$, it does not show any preference over the stability status while for a bad behavioral policy $\pi_2$, it even penalizes being stable. Thus, both confounded values are not a good approximation for our optimal interventional state value.
The final state potential learned by our method, on the other hand, is shown in \cref{tab:our bound}.
\begin{table}[ht]
\centering
\caption{Causal state value upper bound.}
\begin{tabular}{c|c|c}
    \hline
    $L_t$     &  $F_t=0$& $F_t=1$\\
    \hline
    0  & 7.6 & 9.3\\
        \hline
    1  & 7.5 & 9.3\\
        \hline
    2  & 7.5 & 9.3\\
        \hline
    3  & 7.6 & 9.3\\
        \hline
    4  & 7.6 & 9.3\\
        \hline
    5  & 7.9 & 9.1\\
        \hline
    6  & 8.0 & 8.9\\
        \hline
    7  & 7.9 & 8.6\\
        \hline
    8  & 7.6 & 7.6\\
        \hline
    9  & 5.3 & 5.3\\
        \hline
    10  & 0. & 0.\\
    \hline
\end{tabular}
\label{tab:our bound}
\end{table}
We see clearly that our proposed algorithm successfully learns an upper bound to the optimal interventional state values and also showing a clear tendency towards the stable status ($F_t = 1$), the same as the optimal interventional values do.

\section{Causal Upper Bound Algorithm}
\label{sec:bound algo}
We present the full pseudo-code for calculating causal upper bounds on optimal state values from multiple offline datasets in \cref{alg:potential}.

\begin{algorithm}[t]
    \caption{Causal Upper Bound Potential}
    \label{alg:potential}
    \begin{algorithmic}[1]
    \STATE {\bfseries Input:} Offline datasets, $\mathcal{D}^{(i)}, i=1,...,N$ and reward upper bound, $b$. %$\{\mathcal{D}_i\}_{i=1}^N$
    % \FOR{$i=1$ {\bfseries to} $N$}
        % \STATE Initialize $V^0(s) = 0,\ Q^0(s,x)=0\ ,\forall s,x$
        % % \WHILE{not converged}
        % \FOR{$t=1,2,...$}
        %     \FOR{ $(s,x,y,s')$ {\bfseries in} $\mathcal{D}$}
        %         \FOR{$x' \in \mathcal{X}$}
        %             \IF{$x=x'$} 
        %                 \STATE Calculate $\delta$ with \cref{eq:bound update rule}
        %             \ELSIF{$P(x'|s)=0$}
        %                 \STATE $\delta = b + \min\left\{\max_{s^*} {V}^{t-1}(s^*), (H-1)b\right\}$
        %             \ELSE
        %                 \STATE $\delta = Q^{t-1}(s,x)$
        %             \ENDIF
        %             \STATE $Q^{t}(s,x) = (1-\alpha_t)Q^{t}(s,x) +\alpha_t\delta$
        %             \STATE $V^t(s) = \max_{x''}Q^t(s,x'')$
        %         \ENDFOR
        %     \ENDFOR
        % \ENDFOR
    % \ENDFOR
    \FOR{$i=1$ {\bfseries to} $N$}
        \WHILE{Not Converged}
            \FOR{$h=H$ {\bfseries to} $1$}
                \FOR{state $s\in \mathcal{S}$}
                    \FOR{action $x \in \mathcal{X}$}
                        \IF{$s,x$ not visited in $\mathcal{D}^{(i)}_h$}
                            \STATE Continue
                        \ENDIF
                        \STATE Calculate $\overline{Q}^{(i)}_h(s,x)$ by \cref{eq:bound update rule}
                    \ENDFOR
                    \STATE Update $\overline{V}^{(i)}_h(s) = \max_{x \in \mathcal{D}^{(i)}_h} \overline{Q}^{(i)}_h(x|s)$
                \ENDFOR
            \ENDFOR
        \ENDWHILE
    \ENDFOR
    \STATE \textbf{return} $\overline{V}_h(s) = \min_{s\in \mathcal{D}^{(i)}_h}\overline{V}^{(i)}_h(s), \forall s,h\in \mathcal{S}\times [H]$
    \end{algorithmic}
\end{algorithm}

\section{Detailed Experiment Design and More Results}
\label{sec:experiments details}
In this section, we provide a detailed description of experiment setups and additional experiment results. 

\subsection{Experiment Setups}
All of our experiment results are obtained from a 2021 MacBook Pro with M1 chip and 32GB memory. The cumulative regret results are averaged over three random seeds. We implement the Q-UCB in a homogeneous way that there is no step indices for state spaces.

Regarding the environment parameters, for Windy Empty World, the episode length is set to $15$ while the LavaCross series has a horizon of $20$. To compensate for the hard exploration situation, we allow random initial starting states over the whole map walkable area. For training steps, we set a total of $100$K environment steps for Windy Empty World and $20$K for the LavaCross series. We represent the wind distribution in those windy Grid World as a five-tuple meaning the probability mass for (\textsc{west wind}, \textsc{north wind}, \textsc{east wind}, \textsc{south wind}, \textsc{no wind}). We list them as follows,
\begin{enumerate}[label=\textbullet, leftmargin=20pt, topsep=0pt, parsep=0pt]
    \item {Windy Empty World}: $(.1, .1, .1, .1, .6)$
    \item {Windy LavaCross (easy)}: $(0, .8, 0, 0, .2)$
    \item {Windy Lava Cross (hard)}: $(0, .7, 0, 0, .3)$
    \item {Windy Lava Cross (maze)}: 
        $\begin{cases}
            (0., .5, 0., 0., .5)\ \text{if } s[0] = 0\\
            (.5, 0., 0., 0., .5)\ \text{if } s[1] = 7\\
            (.15, .15, .15, .15, .4)\ \text{otherwise}\\
        \end{cases}$
\end{enumerate}

And for each environment, we have three different behavioral policies for collecting offline datasets. Here we list their brief descriptions as below,
\begin{enumerate}[label=\textbullet, leftmargin=20pt, topsep=0pt, parsep=0pt]
    \item {Windy Empty World}: The first one is a good behavioral policy that stands still when the wind is blowing the agent away from the goal and go towards the goal when the wind direction is in its favor. The second one is a bad behavioral policy that follows the good policy half of the time while being fully random the other half. And the last one is a fully random policy.
    \item {Windy LavaCross (easy)}: The first one is a good behavioral policy that takes the L-shaped safe route staying tight to the lower and left side walls. The second one is a bad behavioral policy that tries to cross the upper side gateway when there is no wind. And the last one is again a fully random policy.
    \item {Windy Lava Cross (hard)}: The first one is a good behavioral policy that takes the same safe route in the lower left side. The second one is a bad behavioral policy that collects those coins on the northern bank of the lava lake when there is no wind then reach the goal. And the last one is also a fully random policy.
    \item {Windy Lava Cross (maze)}: The first one is a good behavioral policy that goes directly to the goal location. The second one is less preferred that it takes a bit detour to get the extra coin near the goal. And the last behavioral policy goes directly to the coin on upper right corner without reaching the goal at all.
\end{enumerate}

\subsection{Extra Experiment Results}
Additionally, we have another LavaCross maze with fewer coins. The safe route is still the lower left L-shaped route. The behavioral agents generating offline datasets in this environment include a conservative one taking the safe route, an adventurous one taking the detour to get an extra coin before reaching the goal, and a radical one only looking for the top right corner coin without even reaching the goal. As shown in \cref{fig:maze complex map}, only shaping with our proposed causal upper bound helps the online interventional agent converge efficiently to the right policy (\cref{fig:maze complex policy},\cref{fig:maze complex regret}). We further provide the ratio of states where the learned policy is optimal in \Cref{tab:optimal ratio}. Note that in the table we included one more baseline which is a deep neural network based learner, BCQ \citep{DBLP:conf/icml/FujimotoMP19bcq}.

\begin{table}[ht]
\centering
\label{tab:optimal ratio}
\caption{Optimal ratio across different environments. Higher is better.}
\begin{tabular}{l
                S[table-format=1.2]
                S[table-format=1.2]
                S[table-format=1.2]
                S[table-format=1.2]}
\toprule
\textbf{Algo./Env.} &
\textbf{Windy Empty World} &
\textbf{LavaCross(easy)} &
\textbf{LavaCross(hard)} &
\textbf{LavaCross(maze)} \\
\midrule
No Shaping Q-UCB & 0.00 & 0.00 & 0.00 & 0.03 \\
Shaping + Causal Bound(Ours) & 0.76 & \bfseries 0.70 & \bfseries 0.81 & \bfseries 0.90 \\
Shaping + Min. Val. & 0.72 & 0.49 & 0.30 & 0.24 \\
Shaping + Max. Val. & 0.73 & 0.52 & 0.59 & 0.33 \\
Shaping + Avg. Val. & 0.74 & 0.50 & 0.49 & 0.24 \\
BCQ & 0.79 & 0.50 & 0.67 & 0.50 \\
Shaping + BCQ & \bfseries 0.88 & 0.31 & 0.22 & 0.03 \\
\bottomrule
\end{tabular}
\end{table}

\begin{figure}[ht]
    \centering
    \begin{subfigure}[b]{.33\linewidth}
        \centering    
        \includegraphics[width=0.6\linewidth]{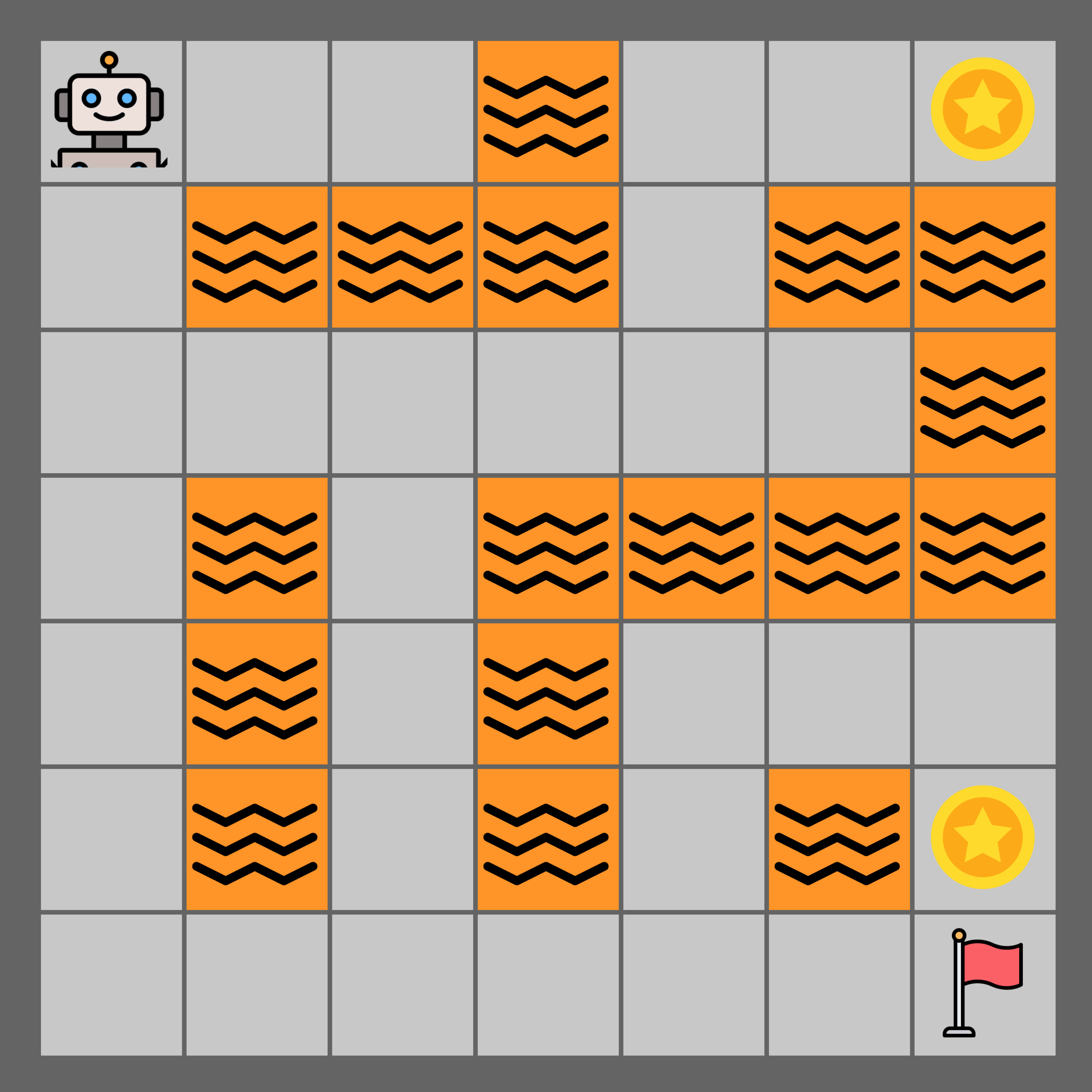}
        \caption{Map}
        \label{fig:maze complex map}    
    \end{subfigure}
    \begin{subfigure}[b]{.33\linewidth}
        \centering    
        \includegraphics[width=\linewidth]{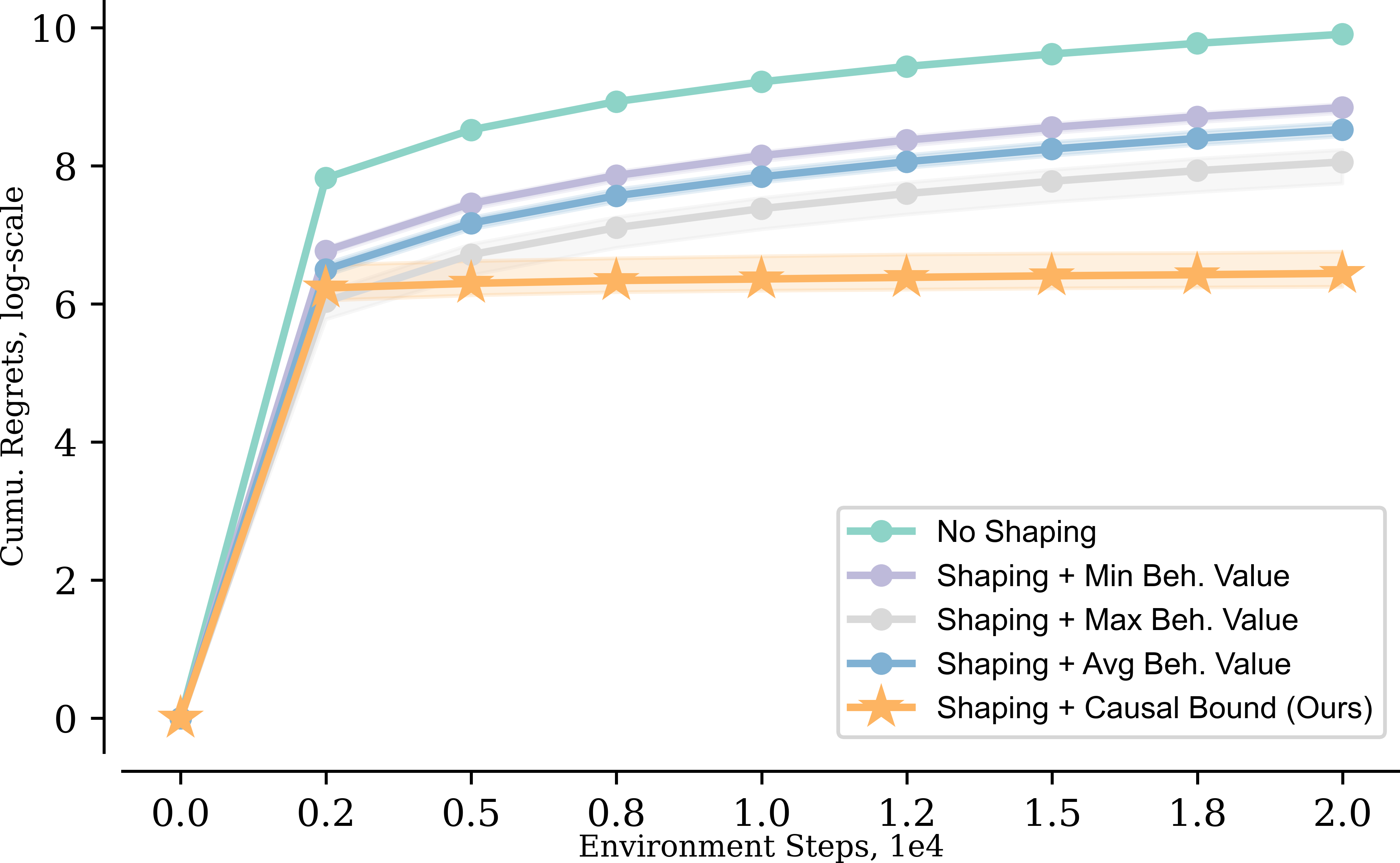}
        \caption{Regret}
        \label{fig:maze complex regret}    
    \end{subfigure}
    \begin{subfigure}[b]{.33\linewidth}
        \centering    
        \includegraphics[width=0.6\linewidth]{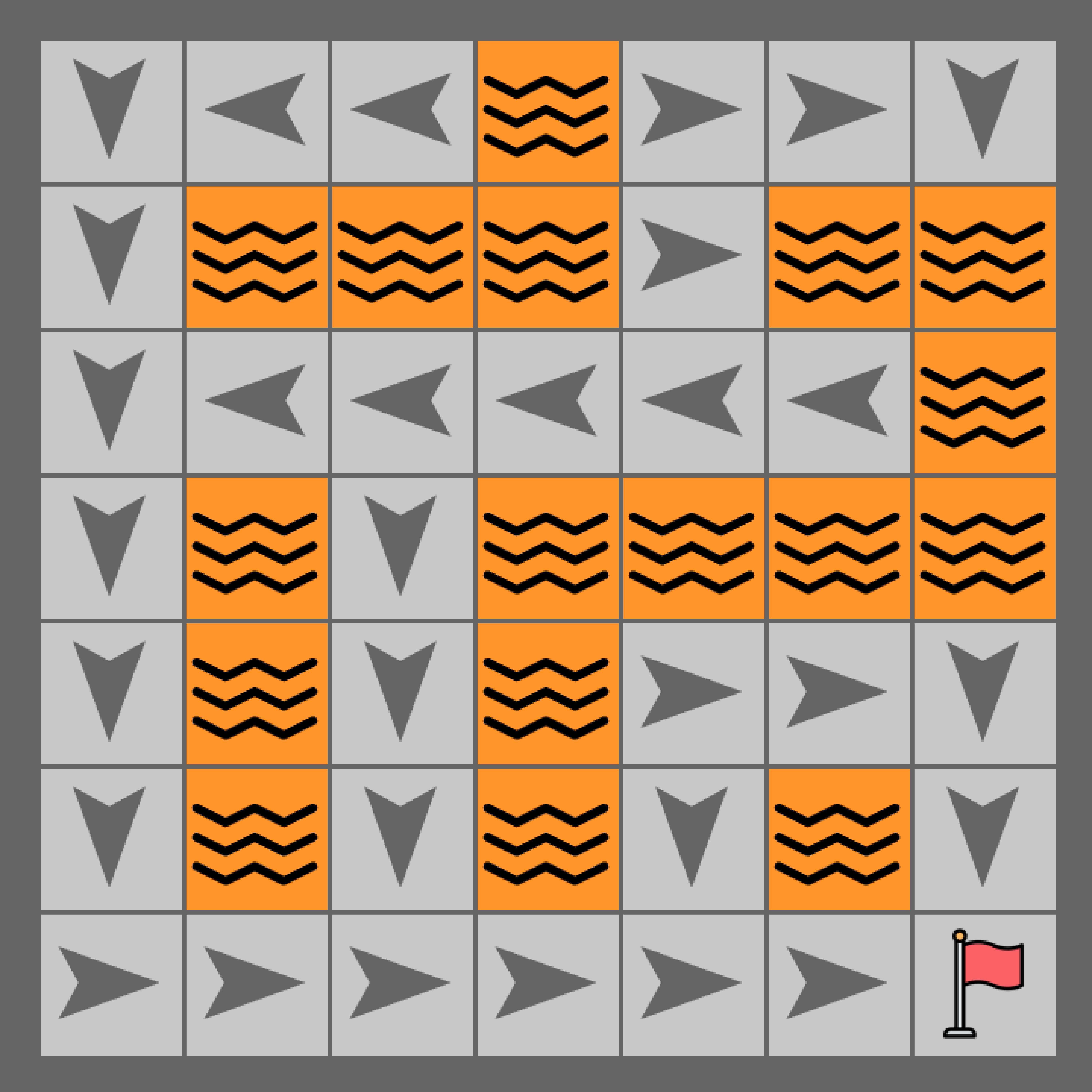}
        \caption{Learned Policy}
        \label{fig:maze complex policy}    
    \end{subfigure}
    \caption{LavaCross maze with fewer coins.}
    \label{fig:lavacross maze complex results}
\end{figure}

% In \cref{fig:maze complex map}, we can see there are three coins in this environment. Due to the wind distribution detailed in \cref{fig:wind dist complex}, an interventional agent oblivious to wind can only get the coin at location (3,5) (starting from top left corner, column index first). If an interventional agent ever tries to get the other two coins, there is a high chance that it will be pushed into lava by the wind. However, the behavioral agent we collect offline datasets from has the ability to sense the wind and can act accordingly when there is no wind or the wind is in its favor. Thus, those confounded values in the offline datasets mark those two ``trap" coins as high value states. We can see in \cref{fig:maze complex regret} that training under shaping with such confounded values impedes the agent's convergence while shaping with our causal upper bound helps the agent converge to the right optimal interventional policy depicted in \cref{fig:maze complex policy}.
\begin{figure}[ht]
    \centering
    \includegraphics[width=.3\linewidth]{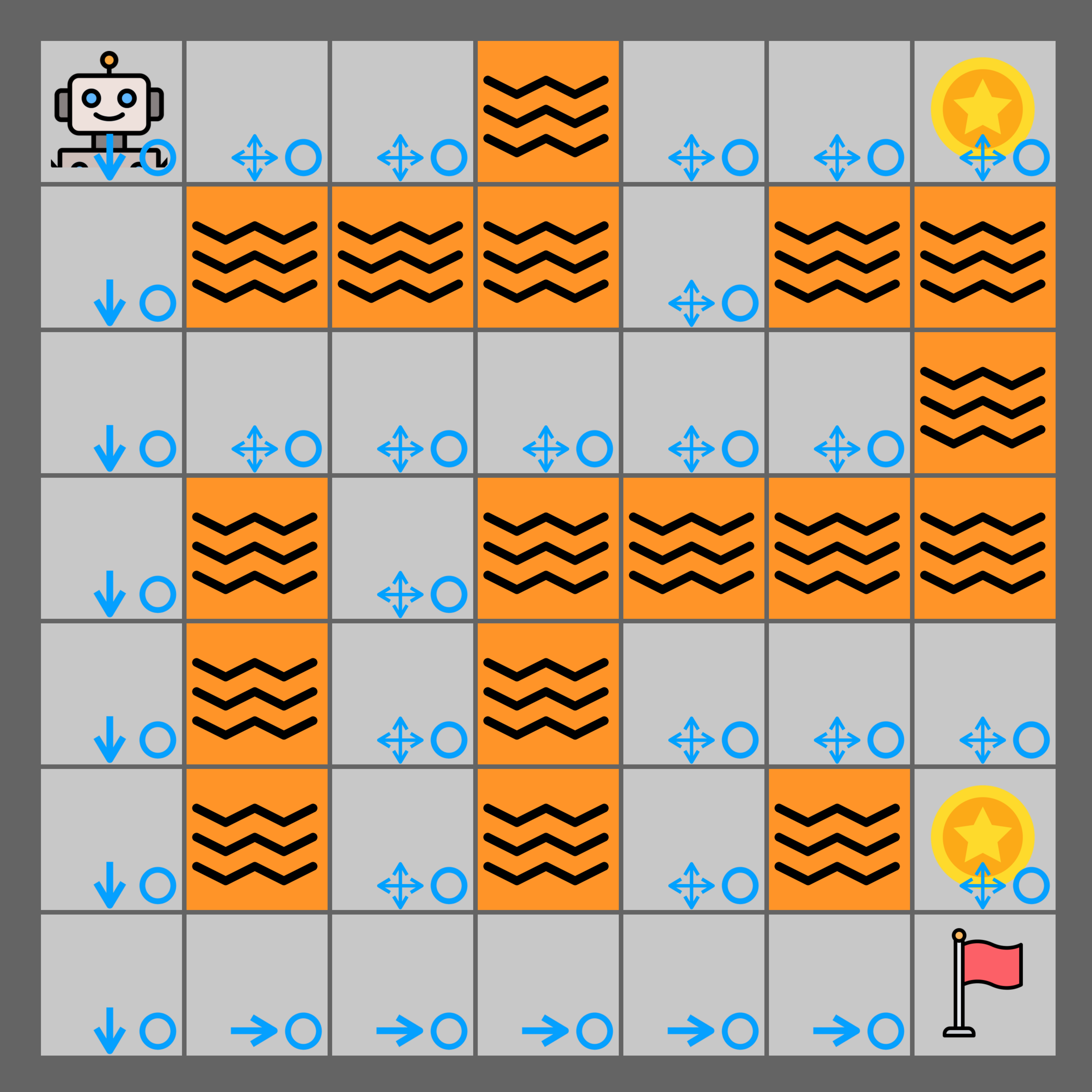}
    \caption{Wind distribution in the LavaCross maze with fewer coins.}
    \label{fig:wind dist complex}
\end{figure}

\section{Shaping and Initialization}
\label{sec:shaping and init}
If we have the optimal state value, warm-starting the online learning by initializing state values with such optimal ones seems more straightforward than shaping. But their effects on learning are only the same under unrealistically strict conditions~\citep{DBLP:journals/jair/Wiewiora03}. That is, given the same sequence of samples, the learned Q-values~\citep{watkins-qlearning} after shaping is always less than the one learned without shaping by the amount of the potential functions. A direct implication of this result is that the policy learned under shaping at each time step is indifferent from the one learned without shaping, thus, learning efficiency is unchanged.
% \begin{theorem}[Value Gap Invariance Under Shaping with Fixed Exploration]
% \label{thm:value gap}
%     Given a fixed exploration policy during online fine-tuning, the learned Q value after shaping is always less than the one learned without shaping by the amount of $\phi(\cdot)$.
% \end{theorem}
% A direct implication of \cref{thm:value gap} is that once we fix the exploration policy, shaping does not speed up learning but only affects the starting point of the learning, the same as what initialization does.
\begin{proposition}
    Given a fixed sequence of samples, the policy distribution learned under shaping is equivalent to that learned without shaping but initialized with $\phi(\cdot)$.
\end{proposition}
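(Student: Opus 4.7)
The plan is to prove the stated equivalence by establishing the invariant
\[
Q^{\text{init}}_h(s,x) \;=\; Q^{\phi}_h(s,x) + \phi_h(s)
\]
for every state-action pair $(s,x)$ and every step $h$ throughout the learning process, where $Q^{\phi}$ denotes the Q-values produced by Q-learning applied to the PBRS-shaped reward $r + \phi_{h+1}(s')-\phi_h(s)$ with zero (or any common) initialization, and $Q^{\text{init}}$ denotes the Q-values produced by vanilla Q-learning on the original reward but initialized as $Q^{\text{init}}_h(s,x)=\phi_h(s)$. Because the two Q-tables differ by a function that depends only on the state, the greedy argmax, the Boltzmann distribution, and any other action-selection rule that is invariant to per-state additive constants will produce identical policy distributions at each step.

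The argument is a straightforward induction on the length of the fixed sample sequence. The base case holds by construction: before any update, $Q^{\phi}_h(s,x)=0$ and $Q^{\text{init}}_h(s,x)=\phi_h(s)$, so the invariant is met. For the inductive step, suppose at some point before processing a transition $(s,x,r,s')$ at level $h$ the invariant holds everywhere. Since the invariant is independent of $x$, it immediately gives
\[
\max_{x'} Q^{\text{init}}_{h+1}(s',x') \;=\; \phi_{h+1}(s') + \max_{x'} Q^{\phi}_{h+1}(s',x').
\]
Applying the two Q-learning updates with the same learning rate $\alpha$ and subtracting, the $r$ and the bootstrap maxima cancel the shaping term $\phi_{h+1}(s')-\phi_h(s)$ exactly, leaving the new difference equal to $(1-\alpha)\phi_h(s)+\alpha\phi_h(s)=\phi_h(s)$. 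Entries of the Q-table that are not touched by this update retain their previous difference by the inductive hypothesis, so the invariant propagates.

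From the invariant, for every $(s,h)$ the function $x\mapsto Q^{\text{init}}_h(s,x)-Q^{\phi}_h(s,x)=\phi_h(s)$ is constant in $x$, and hence
\[
\arg\max_x Q^{\text{init}}_h(s,x) \;=\; \arg\max_x Q^{\phi}_h(s,x),
\]
with analogous equalities for softmax or $\varepsilon$-greedy derivations of the policy from $Q$. This is exactly the claimed equivalence between shaped learning and $\phi$-initialized learning along the same sample path.

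I do not expect a serious obstacle here; the argument is essentially the telescoping that makes PBRS policy-invariant in the first place, only applied at the level of learned Q-estimates rather than true values. The only subtlety worth flagging is the boundary: at $h=H$ the convention $\phi_H\equiv 0$ (or equivalently $\phi_{H+1}\equiv 0$ as used in Proposition on PBRS in CMDPs) makes the two learners coincide at the terminal level, which anchors the induction for the backwards propagation of the invariant across steps when the update rule references the bootstrap at $h+1$. The claim about ``policy distribution'' rather than greedy policy alone is handled by noting that any commonly used sampler from Q-values is invariant under adding a state-dependent constant to all actions.
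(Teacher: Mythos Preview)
Your proposal is correct and follows essentially the same approach the paper invokes: the paper does not give a self-contained proof but cites \citet{DBLP:journals/jair/Wiewiora03} and summarizes the result by saying that ``the learned Q-values after shaping [are] always less than the one learned without shaping by the amount of the potential functions,'' which is exactly your invariant $Q^{\text{init}}_h(s,x)=Q^{\phi}_h(s,x)+\phi_h(s)$, and the policy equivalence follows as you argue. Your inductive verification of the invariant under a single Q-learning update is the standard argument and matches Wiewiora's original proof.
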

There is also prior work showing that for the epsilon-greedy exploration strategy, potential-based reward shaping does not directly affect learning efficiency. Instead, how optimistic the initialization is compared with the potential functions affects the learning efficiency~\citep{epsilongreedyshaping}. While people also find that for model-based learners, certain forms of reward shaping can boost learning efficiency provably~\citep{unpacking}.

\section{Original Q-UCB Algorithm}
\label{sec:original qucb}
We restate the pseudo-code for origianl Q-UCB in \cref{alg:qucb}. For more details please see \citet{qucb}.

\begin{algorithm}[ht]
    \caption{Q-UCB}
    \label{alg:qucb}
\begin{algorithmic}[1]
    % \STATE {\bfseries Input:} Potential function from offline causal bound $\phi(s)$, for all $s\in \mathcal{S}$.
    \STATE Initialize $Q_h^1(s,x)\gets H$, $N_h(s,x)=0$, for all $(s,x,h)\in \mathcal{S}\times \mathcal{X} \times [H]$.
    % \STATE Calculate maximum potential, $\phi_m=\max_s \phi(s)$.
    % \REPEAT
    \FOR{$k=1$ {\bfseries to} $K$}
        \STATE Observe initial state $s_1$.
        \FOR{$h=1$ {\bfseries to} $H$}
            \STATE Take action $x_h = \arg\max_x Q_h^k(s_h,x)$ and observe $s_{h+1}$, $y_h$.
            \STATE Update visitation counter and calculate UCB bonus, $t=N_h(s_h,x_h)\gets N_h(s_h, x_h)+1$, $b_t = c\sqrt{{H^3\iota}/{t}}$
            % \STATE Calculate shaped reward, $y_h' = y_h + \phi(s_{h+1}) -\phi(s_h)$.
            \STATE Update Q-value, $Q^{k+1}_h(s_h,x_h)=(1-\alpha_t) Q_h^k(s_h, x_h) + \alpha_t (y_h + V_{h+1}^k(s_{h+1}) + b_t)$.
            \STATE Update value function, $V_h^{k+1}(s_h) = \min \{ H, \max_x Q_h^{k+1}(s_h, x)\}$. 
        \ENDFOR
    \ENDFOR
   % \UNTIL{$noChange$ is $true$}
\end{algorithmic}
\end{algorithm}

\section{Useful Results for Proofs}
We will first restate some useful claims in the literature to be used later in our proofs for completeness.
\begin{lemma}[Properties of Cumulative Learning Rates]
\label{lemma:alpha}
    Let $\alpha_t = \frac{H+1}{H+t}$. We define $\alpha_t^0 = \Pi_{j=1}^t (1-\alpha_j)$ and $\alpha_t^i = \alpha_i \Pi_{j=i+1}^t(1-\alpha_j)$. The following properties hold for $\alpha_t^i$:
    \begin{enumerate}[label=(\alph*), leftmargin=20pt, topsep=0pt, parsep=0pt]
        \item $\frac{1}{\sqrt{t}} \leq \sum_{i=1}^t \frac{\alpha^i_t}{\sqrt{i}} \leq \frac{2}{\sqrt{t}}$ for every $t\geq 1$.
        \item $\max_{i\in [t]} \alpha_t^i \leq \frac{2H}{t}$ and $\sum_{i=1}^t (\alpha_t^i)^2 \leq \frac{2H}{t}$ for every $t\geq 1$.
        \item $\sum_{t=i}^\infty \alpha_t^i = 1 + \frac{1}{H}$ for every $i > 1$.
    \end{enumerate}
    In addition, we have (1)$\sum_{i=1}^t \alpha_t^i = 1$ and $\alpha_t^0 = 0$ for $t\geq 1$; (2) $\sum_{i=1}^t \alpha_t^i = 0$ and $\alpha_t^0 = 1$ for $t = 0$.
\end{lemma}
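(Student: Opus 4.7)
The plan is to start by writing down a closed form for $\alpha_t^i$. Since $\alpha_j = (H+1)/(H+j)$ gives $1-\alpha_j = (j-1)/(H+j)$, direct substitution into the definition yields
\[
\alpha_t^i \;=\; \frac{H+1}{H+i}\prod_{j=i+1}^t\frac{j-1}{H+j} \;=\; (H+1)\cdot\frac{(t-1)!\,(H+i-1)!}{(i-1)!\,(H+t)!}.
\]
The auxiliary claims ($\sum_{i=1}^t\alpha_t^i=1$ with $\alpha_t^0=0$ for $t\ge1$, and the $t=0$ boundary case) then follow from a one-line induction using the recurrence $\alpha_t^i=(1-\alpha_t)\alpha_{t-1}^i$ for $i<t$ and $\alpha_t^t=\alpha_t$: assuming the $t-1$ case, $\sum_{i=1}^t\alpha_t^i=(1-\alpha_t)\cdot 1+\alpha_t=1$.

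For part (b), I would use the trivial bound $j/(H+j)\le 1$ together with the telescoping identity $\prod_{j=i+1}^t(j-1)/j = i/t$ to get $\alpha_t^i\le (H+1)\,i/[t(H+i)]\le (H+1)/t\le 2H/t$ for $H\ge 1$. The sum-of-squares bound is immediate from $\sum_i(\alpha_t^i)^2 \le (\max_i\alpha_t^i)\sum_i\alpha_t^i \le (H+1)/t\le 2H/t$, using the normalization just established.

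For part (c), the key tool is the standard partial-fraction telescoping identity
\[
\sum_{t=i}^{\infty}\frac{1}{t(t+1)\cdots(t+H)} \;=\; \frac{1}{H\cdot i(i+1)\cdots(i+H-1)},
\]
obtained by writing $1/[t\cdots(t+H)] = (1/H)\bigl[1/(t\cdots(t+H-1)) - 1/((t+1)\cdots(t+H))\bigr]$ and telescoping. Observing that $(t-1)!/(H+t)! = 1/[t(t+1)\cdots(t+H)]$ and $(H+i-1)!/(i-1)! = i(i+1)\cdots(i+H-1)$, substitution into the closed form for $\alpha_t^i$ collapses to $\sum_{t=i}^{\infty}\alpha_t^i = (H+1)/H = 1+1/H$.

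For part (a), I would work from the one-step recurrence
\[
S_t \;:=\; \sum_{i=1}^t \frac{\alpha_t^i}{\sqrt{i}} \;=\; (1-\alpha_t)\,S_{t-1} + \frac{\alpha_t}{\sqrt{t}},
\]
obtained by separating the $i=t$ term and using $\alpha_t^i=(1-\alpha_t)\alpha_{t-1}^i$ for $i<t$. Both bounds then follow by induction with base case $S_1=1$. The lower bound is immediate because $1/\sqrt{t-1}\ge 1/\sqrt{t}$, so $S_t\ge (1-\alpha_t)/\sqrt{t}+\alpha_t/\sqrt{t}=1/\sqrt{t}$. The upper bound, after substituting $S_{t-1}\le 2/\sqrt{t-1}$ and clearing denominators with $\alpha_t=(H+1)/(H+t)$, reduces to verifying $2\sqrt{t(t-1)}+(H+1)\le 2(H+t)$; squaring this rearranges to $(H-1)^2+4Ht\ge 0$, which is trivially true. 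The only real obstacle is this algebraic verification in the inductive step of (a); all other claims are routine consequences of the closed form and the telescoping identity above.
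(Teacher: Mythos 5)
Your proof is correct, but it is worth noting that the paper does not actually prove this lemma at all: its ``proof'' is a one-line deferral to \citet{qucb}, Appendix~B, where these properties were originally established. So your write-up is a genuinely self-contained alternative. The core of your argument for part (a) --- the one-step recurrence $S_t=(1-\alpha_t)S_{t-1}+\alpha_t/\sqrt{t}$ plus induction, with the inductive step reduced to $2\sqrt{t(t-1)}+(H+1)\le 2(H+t)$, equivalently $(H-1)^2+4Ht\ge 0$ --- coincides with the standard argument in the cited reference, and your normalization identity $\sum_{i=1}^t\alpha_t^i=1$ via $\alpha_t^i=(1-\alpha_t)\alpha_{t-1}^i$ is the usual route. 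Where you genuinely diverge is in deriving the explicit closed form $\alpha_t^i=(H+1)\,\frac{(t-1)!\,(H+i-1)!}{(i-1)!\,(H+t)!}$ and exploiting it: part (c) collapses to the partial-fraction telescoping $\sum_{t\ge i}1/[t(t+1)\cdots(t+H)]=\frac{1}{H}\cdot 1/[i(i+1)\cdots(i+H-1)]$, giving the exact value $(H+1)/H$ in two lines, and part (b) follows from $\prod_{j=i+1}^t\frac{j-1}{H+j}\le\prod_{j=i+1}^t\frac{j-1}{j}=\frac{i}{t}$, which in fact yields the slightly sharper constant $(H+1)/t\le 2H/t$, with the sum-of-squares bound then免 immediate from $\sum_i(\alpha_t^i)^2\le(\max_i\alpha_t^i)\sum_i\alpha_t^i$. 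The closed-form route buys exactness and brevity for (c) and tighter constants for (b); the reference's recursive style avoids factorial bookkeeping but obscures why the constant $1+1/H$ appears. All steps in your argument check out, including the boundary cases $t=0$ and $t=1$, so there is no gap.
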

\begin{proof}
    See \citet{qucb} App. B for details.
\end{proof}

\begin{lemma}[Property of the Clip Function]
\label{lemma:clip}
Let $\operatorname{clip}\left[x|\delta\right] = x\cdot \mathbbm{1}[x\geq \delta]$ \citep{DBLP:conf/nips/SimchowitzJ19}. For any three positive numbers $a, b, c$ satisfying $a + b \geq c$, and for any $x \in (0,1)$, the following holds,
\begin{align}
    a + b \leq \operatorname{clip}\left[a\left|\frac{ac}{2}\right.\right] + (1+x)b
\end{align}
\end{lemma}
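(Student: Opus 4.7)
The plan is a straightforward two-case argument on whether the argument $a$ activates the clip. Write $\delta$ for the clip threshold appearing inside $\operatorname{clip}[a\,|\,\delta]$. If $a \geq \delta$, then by definition $\operatorname{clip}[a\,|\,\delta] = a$, and the target inequality reduces to $a + b \leq a + (1+x)b$, i.e., $xb \geq 0$, which is immediate from positivity of $x$ and $b$. This disposes of the ``easy'' case with no use of the hypothesis $a + b \geq c$.

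If $a < \delta$, then $\operatorname{clip}[a\,|\,\delta] = 0$ and the target collapses to $a + b \leq (1+x)b$, equivalently $a \leq xb$. This is the substantive case. I would combine the hypothesis $a + b \geq c$ (which rearranges to $b \geq c - a$) with the case assumption $a < \delta$ to extract $a \leq xb$. Concretely, once the scaling of $\delta$ against $xc$ is such that $a < \delta$ forces $a(1+x) \leq xc$, the inequality rearranges to $a \leq x(c - a) \leq xb$, closing the case. In the downstream application the relevant parameters are $x = 1/H$ and $c$ equal to the optimality gap $\Delta$, producing the threshold $\Delta/(2H)$ that is used inside the clipped UCB bonus of the Q-UCB Shaping recursion.

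The main obstacle, and essentially the only non-trivial piece, sits in Case~2: pinning down that the interaction between the clip threshold and the constraint $a + b \geq c$ is tight enough to force $a$ to be dominated by $xb$, with the $(1+x)$ slack on $b$ doing the absorption. Once both cases are assembled, the lemma plugs directly into the recursion for $Q^k_h - Q^*_h$ used in the proof of \cref{thm:main regret}, with $a$ playing the role of the UCB bonus term $3b_t$, $b$ playing the role of $\sum_i \alpha_t^i (Q_{h+1}^{k_i} - Q_{h+1}^\ast)$, and the premise $a + b \geq c$ coming from the lower bound $Q^k_h - Q^\ast_h \geq \Delta(s,x)$ on suboptimal actions (\cref{lemma:deltabound}).
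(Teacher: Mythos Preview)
Your two-case split is correct and is the standard argument; the paper does not give its own proof here but simply defers to Claim~A.8 of \citet{DBLP:conf/colt/Xu0D21}, which proceeds by the same case analysis. One point worth making explicit: the threshold $\tfrac{ac}{2}$ printed in the lemma statement is a typo for $\tfrac{xc}{2}$, and your Case~2 only closes with the latter --- with $\delta = \tfrac{xc}{2}$ the case assumption $a < \delta$ yields $a(1+x) < \tfrac{xc}{2}(1+x) < xc$ (using $x<1$), which is precisely the implication you need, and this matches the threshold $\Delta/(2H) = \tfrac{xc}{2}$ you correctly back out from the downstream application in the proof of \cref{thm:main regret}.
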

\begin{proof}
    See Claim A.8 in App. A from \citet{DBLP:conf/colt/Xu0D21} for details.
\end{proof}

\begin{lemma}[Bounded Summation for Clipped Function]
\label{lemma:bounded clip summation}
    The summation of a clipped function which scales proportionally to the inverse of the square root of the variable $n$ has the following bound:
    \begin{align}
        \sum_{n=1}^\infty \operatorname{clip}\left[\left. \frac{c}{\sqrt{n}}\right | \epsilon \right] \leq \frac{4c^2}{\epsilon}
    \end{align}
\end{lemma}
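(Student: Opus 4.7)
The plan is a direct two-step computation. The first step is to reduce the infinite sum to a finite one by identifying the support of the clip. Since $\operatorname{clip}[c/\sqrt{n}\,|\,\epsilon] = (c/\sqrt{n})\,\mathbbm{1}[c/\sqrt{n} \geq \epsilon]$, the summand vanishes exactly when $c/\sqrt{n} < \epsilon$, equivalently when $n > c^2/\epsilon^2$. Setting $N := \lfloor c^2/\epsilon^2 \rfloor$, the task reduces to bounding the finite sum $\sum_{n=1}^{N} c/\sqrt{n}$ from above.

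The second step is a standard integral-comparison bound on this partial harmonic-type sum. Since $x \mapsto 1/\sqrt{x}$ is positive, decreasing, and integrable on $(0,N]$, the upper Riemann-sum estimate yields $\sum_{n=1}^{N} 1/\sqrt{n} \leq \int_{0}^{N} dx/\sqrt{x} = 2\sqrt{N}$. Because $\sqrt{N} \leq c/\epsilon$ by construction, multiplying through by the outer factor $c$ gives $\sum_{n=1}^{N} c/\sqrt{n} \leq 2c^2/\epsilon$, which is strictly within the stated bound of $4c^2/\epsilon$.

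There is no real obstacle here; the only subtlety is to start the integral comparison at $0$ rather than at $1$ so that the integral dominates the term $1/\sqrt{1}$ as well (this is valid because $1/\sqrt{x}$ is integrable at the origin). The remaining factor of two between $2c^2/\epsilon$ and the stated $4c^2/\epsilon$ is slack that can either be absorbed for clean constants in the downstream regret bound or tightened if needed.
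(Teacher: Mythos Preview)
Your proof is correct and is the natural direct argument: truncate the sum at $N=\lfloor c^2/\epsilon^2\rfloor$ using the indicator in $\operatorname{clip}$, then bound $\sum_{n=1}^N 1/\sqrt{n}$ by $\int_0^N x^{-1/2}\,dx = 2\sqrt{N} \le 2c/\epsilon$. The paper itself does not give a proof but simply cites Claim~A.13 of \citet{DBLP:conf/colt/Xu0D21}, so your self-contained computation is in fact more informative than what the paper provides; the underlying argument is the same standard one, and your observation that the bound actually holds with constant $2$ rather than $4$ is correct slack.
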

\begin{proof}
    See Claim A.13 in App. A from \citet{DBLP:conf/colt/Xu0D21} for details.
\end{proof}

\section{Proof Details}
\label{sec:proof details}

\subsection{Potential-Based Reward Shaping in CMDP Proof}
\label{sec:pbrs proof}
This is for \cref{thm:pbrs cmdp}.
\begin{proof}
    Because CMDP also enjoys the Markov property, the overall proof procedure highly resembles the original one in \citet{pbrs}. Only to note that the optimal policy invariance is proved in the online learning sense, which is between $\mathcal{M}'_{\pi}$ and $\mathcal{M}_{\pi}$.
\end{proof}

\subsection{Causal Bellman Optimal Equation Proof}
\label{sec:causal bellman proof}
In this section, we derive the Causal Bellman Optimal Equation from the original Bellman Optimal Equation of MDPs. For stationary CMDPs, we also prove that our Causal Bellman Optimal Equation has a unique fixed point and the optimality of this unique fixed point. 
\begin{theorem}[Causal Bellman Optimal Equation (\cref{thm:causal bellman eq})]
\label{thm:causal bellman eq appendix}
    For a CMDP environment $\mathcal{M}$ with reward $Y_h \leq b, b\in 
    \mathbb{R}$, the optimal value of interventional policies, $V_h^*(\boldsymbol{s}), \forall \boldsymbol{s}$, is upper bounded by $V^*(s) \leq \overline{V}_h(s)$ satisfying the Causal Bellman Optimality Equation,
    \begin{align}
        \overline{V}_h(s) = &\max_x \left[ P_h(x|s)  \left( \widetilde{\mathcal{R}}_h(s,x) +  \mathbb{E}_{\widetilde{\mathcal{T}}_h} [\overline{V}_{h+1}(s')] \right) + P_h(\neg x|s) \left( b + \max_{s'}\overline{V}_{h+1}(s') \right) \right]
    \end{align}
\end{theorem}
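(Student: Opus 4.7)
The plan is to prove the upper bound by backward induction on the horizon index $h$, with the trivial base case $V^*_{H+1}(s) = \overline{V}_{H+1}(s) = 0$ supplied directly by the boundary condition. The analytical engine is the Manski-style partial-identification decomposition already recorded in the paper: for each $(s,x,s',h)$ the true interventional reward and transition satisfy $\mathcal{R}_h(s,x) \leq P_h(x\mid s)\widetilde{R}_h(s,x) + bP_h(\neg x\mid s)$ and $\mathcal{T}_h(s,x,s') \leq P_h(x\mid s)\widetilde{T}_h(s,x,s') + P_h(\neg x\mid s)$. These emerge from decomposing $\mathbb{E}[r_h(s,x,U_h)\mid S_h=s]$ into the observed branch (the behavior policy did pick action $x$, so $r_h(s,x,U_h)$ equals the observed $Y_h$ and is identified by $\widetilde{R}_h$) plus an unobservable branch (the behavior policy picked some other action, so we only know $r_h(s,x,U_h) \leq b$); the same bookkeeping applied to $\mathbb{I}[\tau_h(s,x,U_h)=s']$ yields the transition bound.

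For the inductive step I would start from the usual interventional Bellman optimality equation,
\begin{equation*}
V^*_h(s) = \max_x \Bigl\{\mathcal{R}_h(s,x) + \sum_{s'} \mathcal{T}_h(s,x,s') V^*_{h+1}(s')\Bigr\},
\end{equation*}
and substitute the two partial-identification bounds. Here lies the one subtlety: if I apply the pointwise transition bound inside the sum over $s'$ literally, the unobserved mass $P_h(\neg x\mid s)$ gets double-counted, once per next state. The correct step recognizes that across all $s'$ this hidden mass must total at most $P_h(\neg x\mid s)$, and an adversarial allocation assigns it all to $\arg\max_{s'} V^*_{h+1}(s')$. This produces
\begin{equation*}
\sum_{s'}\mathcal{T}_h(s,x,s') V^*_{h+1}(s') \leq P_h(x\mid s)\mathbb{E}_{\widetilde{T}_h}\!\bigl[V^*_{h+1}(s')\bigr] + P_h(\neg x\mid s)\max_{s'} V^*_{h+1}(s'),
\end{equation*}
which, combined with the reward bound, reproduces the right-hand side of the Causal Bellman Optimality Equation with $V^*_{h+1}$ temporarily in place of $\overline{V}_{h+1}$.

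To close the induction I would invoke monotonicity: the operator $f \mapsto \max_x\{P_h(x\mid s)(\widetilde{R}_h(s,x)+\mathbb{E}_{\widetilde{T}_h}[f(s')]) + P_h(\neg x\mid s)(b+\max_{s'} f(s'))\}$ is monotone in $f$ because both the expectation and the maximum are. The inductive hypothesis $V^*_{h+1}(s)\leq\overline{V}_{h+1}(s)$ for every $s$ then upgrades the bound from one involving $V^*_{h+1}$ to one involving $\overline{V}_{h+1}$, giving exactly $V^*_h(s)\leq\overline{V}_h(s)$.

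The main obstacle I expect is the transition step: handling the hidden probability mass as a single adversarially allocated quantity rather than mechanically applying the pointwise bound inside the $\sum_{s'}$. A secondary care point is justifying the reward/transition decompositions rigorously from the structural CMDP definitions — specifically that $\widetilde{R}_h(s,x)=\mathbb{E}[r_h(s,x,U_h)\mid S_h=s, X_h=x]$ in the observed branch (because $Y_h=r_h(S_h,X_h,U_h)$ when $X_h=x$) but breaks down in the unobserved branch, where one must fall back on the reward ceiling $b$; analogously for indicators of next states under the transition $\tau_h$.
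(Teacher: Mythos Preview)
Your proposal is correct and follows essentially the same route as the paper: both start from the interventional Bellman optimality equation, insert the Manski-style bounds on $\mathcal{R}_h$ and $\mathcal{T}_h$, and handle the hidden transition mass by allocating all of $P_h(\neg x\mid s)$ to the best next state rather than summing the pointwise bound over $s'$.

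The one difference worth flagging is in how the argument is closed. The paper derives the inequality
\[
V^*_h(s)\leq \max_x\Bigl[P_h(x\mid s)\bigl(\widetilde{R}_h(s,x)+\textstyle\sum_{s'}\widetilde{T}_h(s,x,s')V^*_{h+1}(s')\bigr)+P_h(\neg x\mid s)\bigl(b+\max_{s''}V^*_{h+1}(s'')\bigr)\Bigr]
\]
and then simply states that ``optimizing the value function with respect to this inequality'' yields $V^*_h\leq\overline{V}_h$, without making explicit why replacing $V^*_{h+1}$ by $\overline{V}_{h+1}$ on the right-hand side is permissible. Your explicit backward induction plus monotonicity of the operator is exactly the missing justification and makes the finite-horizon argument self-contained. (The paper supplies a contraction-mapping argument only separately, for the stationary infinite-horizon case.) So your version is, if anything, a tightening of the same proof rather than a different one.
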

\begin{proof}
    Starting from the Bellman Optimal Equation, the optimal state value function is given by,
    \begin{align}
        V_h^*(s) &= \max_x R_h(s,x) + \sum_{s'} T_h(s,x,s') V_{h+1}^*(s')
    \end{align}
    Note that the actions here are done by an interventional agent, which is actually $\interv{x}$ in the context of a CMDP. We swap in the causal bounds for interventional reward and transition distribution, 
    \begin{align}
        V^*_h(s) &\leq \max_x \widetilde{R}_h(s,x)P_h(x|s) + bP_h(\neg x|s) + \sum_{s'} \widetilde{T}_h(s,x,s')P_h(x|s)V^*_{h+1}(s') + P_h(\neg x|s)\max_{s''}V^*_{h+1}(s'')
    \end{align}
    where $\widetilde{\mathcal{R}}_h(s,x) = \mathbb{E}[Y_h|S_h=s, X_h=x]$, $\widetilde{\mathcal{T}}_h$ is shorthand for $\widetilde{\mathcal{T}}_h(s,x,s') = P(S_{h+1}=s'|S_h=s, X_h=x)$ and $P(x|s) = P_h(X_h = x|S_h = s)$ are estimated from the offline dataset. And $b$ is a known upper bound on the reward signal, $Y_h \leq b$. In this step, we upper bound the next state transition by assuming the best case that for the action not taken with probability $P_h(\neg x|s)$, the agent transits with probability $1$ the best possible next state, $\max_{s''}V^*_{h+1}(s'')$. 
    
    Then after rearranging terms, we have,
    \begin{align}
        V^*_h(s) &\leq \max_x \left[ P_h(x|s) \left( \widetilde{\mathcal{R}}_h(s,x) +  \sum_{s'} \widetilde{T}_h(s,x,s'){V}^*_{h+1}(s') \right) + P_h(\neg x|s) \left( b + \max_{s''}{V}_{h+1}^*(s'') \right) \right]    
    \end{align}
    And optimizing the value function w.r.t this inequality gives us an upper bound on the optimal state value,
    \begin{align}
        \overline{V}_h(s) &\leq \max_x \left[ P_h(x|s)  \left( \widetilde{\mathcal{R}}_h(s,x) +  \sum_{s'} \widetilde{T}_h(s,x,s')\overline{V}_{h+1}(s') \right) + P_h(\neg x|s) \left( b + \max_{s''}\overline{V}_{h+1}(s'') \right) \right] \label{eq:bellman update inequality}   
    \end{align}
\end{proof}

More interestingly, we will show in \cref{thm:causal bellman convergence appendix} that this will also converge to a unique fixed point in stationary CMDPs.

\begin{proposition}[Convergence of Causal Bellman Optimal Equation in Stationary CMDPs]%(\cref{thm:causal bellman convergence})
\label{thm:causal bellman convergence appendix}
The Causal Bellman Optimality Equation converges to a unique fixed point which is also an upper bound on the optimal interventional state values under the assumption that $P(s,x)>0, \forall s,x$ in stationary CMDPs.
\end{proposition}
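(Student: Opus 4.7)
The plan is to recast the recursion as a fixed-point equation for an operator on the Banach space of bounded functions on $\mathcal{S}$ and then apply the Banach fixed-point theorem. In the stationary setting, $P(x\mid s)$, $\widetilde{\mathcal{R}}(s,x)$, and $\widetilde{\mathcal{T}}(s,x,s')$ no longer depend on the time index $h$, so the right-hand side of \cref{eq:causal bellman} defines a single operator $\mathcal{B}$ on bounded $V:\mathcal{S}\to\mathbb{R}$,
\begin{equation*}
(\mathcal{B}V)(s) = \max_x \left[P(x\mid s)\left(\widetilde{\mathcal{R}}(s,x) + \gamma\sum_{s'}\widetilde{\mathcal{T}}(s,x,s')V(s')\right) + P(\neg x\mid s)\left(b + \gamma\max_{s'}V(s')\right)\right],
\end{equation*}
where I take an implicit discount $\gamma\in(0,1)$ so that value functions are bounded in the infinite-horizon stationary regime; the positivity assumption $P(s,x)>0$ guarantees $P(x\mid s)>0$ for every action, which is what makes the empirical estimates $\widetilde{\mathcal{R}}$ and $\widetilde{\mathcal{T}}$ well-defined on the entire state-action grid.

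First I would show that $\mathcal{B}$ is a $\gamma$-contraction in the sup norm. For any two bounded $V_1,V_2$ and any state $s$, I use $|\max_x f(x)-\max_x g(x)|\le \max_x|f(x)-g(x)|$ to peel off the outer maximization, and then bound the per-action difference by a convex combination of two terms that are each $1$-Lipschitz in sup norm: the expectation $\sum_{s'}\widetilde{\mathcal{T}}(s,x,s')V(s')$ and the pointwise maximum $\max_{s'}V(s')$. Since the weights $P(x\mid s)$ and $P(\neg x\mid s)$ sum to $1$ and the discount $\gamma$ factors out, this yields $\|\mathcal{B}V_1-\mathcal{B}V_2\|_\infty\le \gamma\|V_1-V_2\|_\infty$. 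Applying Banach's fixed-point theorem on the complete space of bounded functions then delivers the unique fixed point $\overline{V}$ and convergence of the iteration from any starting function.

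For the upper-bound claim my strategy is monotonicity combined with operator domination. Let $\mathcal{T}^\ast$ denote the true stationary Bellman optimality operator built from the unknown $\mathcal{R}$ and $\mathcal{T}$. Exactly the step-by-step bounding used to derive \cref{eq:bellman update inequality} in the proof of \cref{thm:causal bellman eq} gives a pointwise domination $(\mathcal{B}V)(s)\ge(\mathcal{T}^\ast V)(s)$ for every bounded $V$. Since $\mathcal{B}$ is monotone in $V$ (all coefficients $P(x\mid s)\widetilde{\mathcal{T}}(s,x,s')$ and $P(\neg x\mid s)$ are non-negative and the $\max$ is monotone), an easy induction on $n$ gives $\mathcal{B}^n V^\ast\ge (\mathcal{T}^\ast)^n V^\ast=V^\ast$, and letting $n\to\infty$ and using the contraction convergence from the previous step yields $\overline{V}\ge V^\ast$.

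The main obstacle I expect is the contraction step itself when no discount factor is assumed: the raw operator is only $1$-Lipschitz because $P(x\mid s)+P(\neg x\mid s)$ equals exactly one, so Banach's theorem does not apply without further structure. If the paper really intends the undiscounted setting, then the positivity assumption $P(s,x)>0$ must be doing real work, most likely to embed the iteration into a stochastic-shortest-path style argument where a uniform lower bound on absorption (or a weighted-norm contraction with the stationary distribution of the behavior policy) supplies the missing contraction factor. Establishing this carefully, rather than the clean discounted contraction above, is the delicate part of the argument.
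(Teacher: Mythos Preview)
Your proposal has a genuine gap at exactly the point you flag: you insert a discount $\gamma\in(0,1)$ that is not part of the stated setting, and your contraction argument depends on it. Without the discount the operator is only $1$-Lipschitz in the unweighted sup norm, so Banach's theorem does not apply to the argument you actually wrote. You correctly diagnose this obstacle in the last paragraph and correctly guess that a weighted-norm contraction is the fix, but identifying the direction is not the same as carrying out the proof; the construction of the weights and the verification of the strict contraction inequality is precisely the content of the result.

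The paper's argument is the undiscounted stochastic-shortest-path contraction from Bertsekas (\emph{Parallel and Distributed Computation}, Sec.~4.3.2), exploiting the episodic structure rather than a discount. One augments the state space with a sink state $s_\emptyset$ reached at the horizon and partitions $\mathcal{S}$ into layers $\mathcal{S}_1=\{s_\emptyset\},\mathcal{S}_2,\ldots,\mathcal{S}_m$, where $\mathcal{S}_k$ consists of states that transition under every action with positive probability into $\bigcup_{k'<k}\mathcal{S}_{k'}$. Weights $1=y_1<y_2<\cdots<y_m$ are then built recursively so that the operator is a contraction in the weighted max norm $\|V\|_\infty^w=\max_s |V(s)|/w_s$. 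The positivity hypothesis enters as $\delta=\min_{s,x}P(x\mid s)>0$, which together with $\epsilon$, the minimal one-step probability of moving to a lower layer, yields the key inequality $(1-\epsilon\delta)\,y_m/y_k+\epsilon\delta\,y_{k-1}/y_k<1$. So the weights encode distance to the sink, not (as you speculated) the stationary distribution of the behavior policy. Your upper-bound step via the pointwise domination $\mathcal{B}V\ge\mathcal{T}^\ast V$, monotonicity of $\mathcal{B}$, and iteration from $V^\ast$ is correct and in fact cleaner than the paper's version, which asserts the stronger $V\le TV$ for all $V$.
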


\begin{proof}
    We will first show that the following Causal Bellman Optimal operator is a contraction mapping with respect to a weighted max norm. The major proof technique is from \citet{Bertsekas1989ParallelAD} (Sec 4.3.2). Then by Banach's fixed-point theorem~\citep{BanachSurLO}, this Causal Bellman Optimal operator has a unique fixed point and updating any initial point iteratively will converge to it. Then we will show that this unique fixed point is indeed an upper bound on the optimal interventional state value.
    
    Let the operator $T$ be,
    \begin{align}
        T\overline{V}(s) = \max_x \left[P(x|s)  \left( \widetilde{\mathcal{R}}(s,x) +  \sum_{s'} \widetilde{T}(s,x,s')\overline{V}(s') \right) +P(\neg x|s) \left( b + \max_{s''}\overline{V}(s'') \right) \right].
    \end{align}
    For arbitrary value bound, $\overline{V}_1, \overline{V}_2$, we can bound their difference under one step update by,
    \begin{align}
        \left|T\overline{V}_1(s) - T\overline{V}_1(s)\right| &\leq \max_x\left|P(x|s) \sum_{s'} \widetilde{T}(s,x,s')\left(\overline{V}_1(s')-\overline{V}_2(s')\right)  + P(\neg x|s) \max_{s''}\left| \overline{V}_1(s'') - \overline{V}_2(s'')\right| \right|.
    \end{align}
    Note that we combine the terms together and extract $\max$ operator out because of its non-expansive property. Then we partition the state space as follows. Let $\mathcal{S}_1 = \{s_\emptyset\}$ be the set of a sink state (in episodic CMDP, we always transit into the sink state after we reach the horizon limit). For $k = 2,3,...$, we define,
    \begin{align}
        \mathcal{S}_k = \{i|i\notin\bigcup_{k' \in[1,k-1]} \mathcal{S}_{k'} \wedge \min_x\max_{j\in\bigcup_{k' \in[1,k-1]} \mathcal{S}_{k'}} \mathcal{T}(i, x, j)>0\}.
    \end{align}
    The set $\mathcal{S}_k$ intuitively represents the set of states that can transit to states closer to the sink state even with adversarial action picks. Let $\mathcal{S}_m$ be the last of these sets that is nonempty. We will show that all those nonempty sets cover the whole state space. Suppose the set $\mathcal{S}_\infty = \{i|i\notin \cup_{k=1}^m \mathcal{S}_k\}$ is nonempty. Then by definition, for any state $i \in \mathcal{S}_\infty$, there exists an action $x$ such that $\mathcal{T}(i, x, j) = 0, \forall j\in \cup_{k=1}^m \mathcal{S}_k$. This means that state $i$ cannot transit into the sink state with certain actions, which contradicts with our episodic setting that starting from any state-action pair, the episode will end at the horizon limit.

    Then we define the max norm with respect to which we will later show that the difference between value update is a contraction mapping. We set the weight vector $w>0$ in a way that each of the entry $w_i, i\in \mathcal{S}$ corresponds to a set $\mathcal{S}_k$ and all states inside the same set shares the same weight, that is, $w_i = y_k$ if $i\in \mathcal{S}_k$. 

    We will for now assume the following properties of sequence $\{y_k\}_{k=1}^m$ hold, prove $T$ is a contraction mapping, and then come back to show we can find such sequences.
    \begin{align}
        1 = y_1<y_2 &< \cdots <y_m\label{eq:y property 1}\\
        \frac{y_m}{y_k}(1-\epsilon\delta) + \frac{y_{k-1}}{y_k}\epsilon\delta &\leq \gamma < 1, k=2,...,m\label{eq:y property 2}
    \end{align}
    where $\epsilon = \min_{k\in[2,m]}\min_x\min_{i\in\mathcal{S}_k} \sum_{j\in\bigcup_{k' \in[1,k-1]} \mathcal{S}_{k'}} \mathcal{T}(i, x, j)$ is the minimal one-step transition probability that a state from later set $\mathcal{S}_k$ transfer to a state that is closer to the sink state and $\delta = \min_{s,x} P(s,x)>0$ from the behavioral dataset,. From previous discussion and probability distribution validity property, we have $\epsilon \in (0,1]$.

    Let the initial difference between the value vectors be $||\overline{V}_1-\overline{V}_2||_\infty^w\leq c$ where $c>0$ is a constant. Then we have, for any state $s\in \mathcal{S}_{k(s)}$ where $k(s)$ means the index of the state set that $s$ belongs to,
    \begin{align}
        \overline{V}_1(s) - \overline{V}_2(s) \leq cy_{k(s)}.
    \end{align}
    
    Now we can continue to write the value update difference as,
    \begin{align}
        \frac{|T\overline{V}_1(s) - T\overline{V}_2(s)|}{cy_{k(s)}} &\leq \frac{1}{y_{k(s)}}\max_x\left|P(x|s) \sum_{s'} \widetilde{T}(s,x,s')y_{k(s')}  + P(\neg x|s) \max_{s''} y_{k(s'')} \right|.\\
        &= \frac{1}{y_{k(s)}}\max_x \left( P(x|s) \sum_{s'} \widetilde{T}(s,x,s')y_{k(s')}  + P(\neg x|s) \max_{s''} y_{k(s'')}\right).
    \end{align}
    Given the fact that $y_k \leq y_m, \forall k\in [1,m]$,
    \begin{align}
        \frac{|T\overline{V}_1(s) - T\overline{V}_2(s)|}{cy_{k(s)}} &\leq  \frac{1}{y_{k(s)}}\max_x \left(P(x|s) \sum_{s'} \widetilde{T}(s,x,s')y_{k(s')}  + P(\neg x|s) y_{m}\right).
    \end{align}
    We can split the sum over next state $s'$ by differentiating whether it belongs to the sets closer to sink state than $s$ or the sets further away from the sink state than $s$,w2
    \begin{align}
        \frac{|T\overline{V}_1(s) - T\overline{V}_2(s)|}{cy_{k(s)}} &\leq  \frac{1}{y_{k(s)}}\max_x \left(P(x|s) \sum_{s'\in\bigcup_{k' \in[1,k(s)-1]} \mathcal{S}_{k'}} \widetilde{T}(s,x,s')y_{k(s')}\right. \nonumber\\
        &\phantom{ssssssssssssss}+ \left .P(x|s) \sum_{s'\in\bigcup_{k' \in[k(s), m]} \mathcal{S}_{k'}} \widetilde{T}(s,x,s')y_{k(s')}  + P(\neg x|s) y_{m}\right).
    \end{align}
    And by the property of $\{y_k\}_{k=1}^m$, $1 = y_1<y_2 < \cdots <y_m$ and the fact that $\sum_{s'}\widetilde{T}(s,x,s') = 1$, we have,
    \begin{align}
        \frac{|T\overline{V}_1(s) - T\overline{V}_2(s)|}{cy_{k(s)}} &\leq  \frac{1}{y_{k(s)}}\max_x \left((y_{k(s)-1}-y_m) P(x|s) \sum_{s'\in\bigcup_{k' \in[1,k(s)-1]} \mathcal{S}_{k'}} \widetilde{T}(s,x,s') + y_{m}P(x|s) + P(\neg x|s) y_{m}\right)\\
         &\leq  \frac{1}{y_{k(s)}}\max_x \left((y_{k(s)-1}-y_m) P(x|s) \sum_{s'\in\bigcup_{k' \in[1,k(s)-1]} \mathcal{S}_{k'}} \widetilde{T}(s,x,s') \right) + \frac{y_m}{y_{k(s)}}.
    \end{align}
    By definition, $\epsilon$ lower bounds the transition probability of $\sum_{s'\in\bigcup_{k' \in[1,k(s)-1]} \mathcal{S}_{k'}} \widetilde{T}(s,x,s')$, we have,
    \begin{align}
        \frac{|T\overline{V}_1(s) - T\overline{V}_2(s)|}{cy_{k(s)}} &\leq  \frac{1}{y_{k(s)}}\max_x P(x|s)(y_{k(s)-1}-y_m)  \epsilon  + \frac{y_m}{y_{k(s)}}\\
        &\leq\frac{y_m}{y_k}(1-\epsilon\delta) + \frac{y_{k-1}}{y_k}\epsilon\delta\\
        &\leq  \gamma
    \end{align}
    Thus, for the state value bound vectors, we have,
    \begin{align}        
        ||T\overline{V}_1 - T\overline{V}_2||_\infty^w \leq \gamma c
    \end{align}
    for all $\overline{V}_1, \overline{V}_2$ satisfying $||\overline{V}_1-\overline{V}_2||\leq c$. Thus, $T$ is a contraction mapping with respect to a weighted max norm.

    Lastly, we will show that a sequence $\{y_k\}_{k=1}^m$ satisfying \cref{eq:y property 1} and \cref{eq:y property 2} is realizable. Let $y_0 = 0, y_1 = 1$, and suppose that $y_1, y_2, ..., y_k$ have been chosen. If $\epsilon\delta = 1$, we set $y_{k+1}=y_k+1$. If $\epsilon\delta < 1$, we set 
    \begin{align}
        y_{k+1} = 1/2(y_k + z_k)
    \end{align}
    where 
    \begin{align}
        z_k = \min_{i\in [1,k]}\left[ y_i + \frac{\epsilon\delta}{1-\epsilon\delta}(y_i -y_{i-1})\right].
    \end{align}
    And we can write $z_k$ recursively as,
    \begin{align}
        z_k = \min \{z_{k-1}, y_k + \frac{\epsilon\delta}{1-\epsilon\delta}(y_k - y_{k-1}) \}.
    \end{align}
    We will first show by induction that this sequence satisfies \cref{eq:y property 1}.
    
    (1) Base Case: By definition, $y_0 < y_1$.
    
    (2) Induction Hypothesis: Assume $y_{k-1}<y_k$ holds.
    
    (3) Inductive Step:  Our goal is to show that $y_k<y_{k+1}$. By definition, $y_{k+1} = \frac{1}{2}(y_k + z_k)$, the problem is reduced to whether $y_{k+1} - y_k =  z_k - y_k > 0$. 
    Since $y_k = \frac{1}{2}(y_{k-1}+z_{k-1})$ and we have $y_{k-1}<y_k$. Thus,
    we have $z_{k-1} > y_k$. And clearly we have $y_k + \frac{\epsilon}{1-\epsilon}(y_k - y_{k-1}) > y_k$. By the recursive update rule, we have,
    \begin{align}
        z_{k} = \min \{z_{k-1}, y_k + \frac{\epsilon}{1-\epsilon}(y_k - y_{k-1})\}
    \end{align}
    As we have shown that both term are bigger than $y_k$, we have $z_{k} > y_k$. Thus, the sequence $\{y_k\}_{k=1}^m$ satisfies \cref{eq:y property 1}. 
    
    For \cref{eq:y property 2}, we first notice that since $z_{m-1} - y_m = y_m - y_{m-1} > 0$, we have, 
    \begin{align}
        z_{m} = \min \{z_{m-1}, y_m + \frac{\epsilon}{1-\epsilon}(y_m - y_{m-1})\} > y_m \label{eq:zm inequality}.
    \end{align}
    By definition, $z_m$ is also calculated as,
    \begin{align}
        z_m = \min_{i\in [1,m]}\left[ y_i + \frac{\epsilon\delta}{1-\epsilon\delta}(y_i -y_{i-1})\right].
    \end{align}
    Swapping this definition into \cref{eq:zm inequality}, we have,
    \begin{align}
        y_k + \frac{\epsilon\delta}{1-\epsilon\delta}(y_k - y_{k-1}) &> y_m\\
        \frac{y_m}{y_k} &< 1 + \frac{\epsilon\delta}{1-\epsilon\delta}(1 - \frac{y_{k-1}}{y_k})\\
        (1-\epsilon\delta)\frac{y_m}{y_k} + \epsilon\delta\frac{y_{k-1}}{y_k} &< 1.
    \end{align}
    Thus, we have shown that a weight sequence $\{y_k\}_{k=1}^m$ satisfying \cref{eq:y property 1} and \cref{eq:y property 2} is indeed realizable.

    Thus, $T$ is a contraction mapping with respect to a realizable max norm. There exists a unique fixed point $\overline{V}^*$ when we optimize $\overline{V}$ with $T$ iteratively till convergence. For the optimal state value vector $V^*$, we can also apply $T$ iteratively until it converges to the fixed point $\overline{V}^*$. By the update rule of $T$ (\cref{eq:bellman update inequality}), $\forall V, V\leq TV$. Thus, we have $V^* \leq \lim_{k\to \infty}T^kV^* = \overline{V}^*$ where $T^k$ denotes applying $T$ iteratively for $k$ times. The fixed point of the Causal Bellman Optimal Equation is indeed an upper bound of the optimal state value vector.
\end{proof}

\subsection{Q-UCB with Shaping Regret Analysis Details}
\label{sec:qucb proof}
% \todo{maybe use zero clip directly?}

We define the adaptive learning rate as $\alpha_t = \frac{H+1}{H+t}$ and a shorthand notion $\iota = \log{(|\mathcal{S}||\mathcal{X}|T/p)}$ where $T =KH$. We also have $\phi(s_{H+1}) = 0$ according to \cref{thm:pbrs cmdp} and assume deterministic reward functions. Throughout the proof, we also assume conservative optimism condition is satisfied (\cref{def:conservative optimism}).

We first present lemmas used in proving the regret bound of \cref{alg:q-ucbshaping}.

\begin{lemma}[Concentration of Transition Weighted Bounded Functions]
\label{lemma:concentration}
    Let $f$ be a function mapping from state space to a convex set of real values, $\mathcal{S} \mapsto [0, \phi_m], \phi_m \in \mathbb{R}$. With high probability, the following sum is bounded for all $(s, x, h, k) \in \mathcal{S}\times \mathcal{X} \times [H] \times [K]$,
    \begin{align}
        \forall t \in [K], \sum_{i=1}^t \alpha_t^i \cdot \mathbbm{1}[k_i < K] \cdot (\hat{\mathbb{P}}_h^{k_i}f-\mathbb{P}_hf)(s,x) \leq \frac{b_t}{2},
    \end{align}
    where $t=N_h^k(s,x)$ is the visitation count at the beginning of $k$-th episode and we define the index of the episode when state-action pair $(s,x)$ is visited for the $i$-th time to be,
    \begin{align}
        k_i = \min\left(\{k\in [K]|k>k_{i-1}\wedge (s_h^k, x_h^k)=(s,x) \}\cup\{K+1\}\right).
    \end{align}
\end{lemma}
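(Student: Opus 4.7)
The plan is to recognise the displayed sum as a bounded weighted martingale in the number of visits to $(s,x)$, apply Azuma--Hoeffding for a single tuple $(s,x,h,k,t)$, and then take a union bound to get the uniform statement.

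First, I would fix $(s,x,h)$ and introduce the filtration $\{\mathcal{F}_\ell\}_{\ell\geq 0}$ generated by the trajectories of the first $\ell$ episodes. By definition $k_i$, the episode in which $(s,x)$ is visited for the $i$-th time at step $h$, is a stopping time with respect to $\{\mathcal{F}_\ell\}$, with the convention $k_i = K+1$ if no such episode exists. For a fixed $t \in [K]$, let
\[
Z_i \;=\; \mathbbm{1}[k_i < K]\,\bigl(f(s_{h+1}^{k_i}) - (\mathbb{P}_h f)(s,x)\bigr),\qquad i=1,\dots,t.
\]
On the event $\{k_i < K\}$, the Markov structure of the CMDP $\mathcal{M}'_\pi$ gives $s_{h+1}^{k_i} \sim \mathbb{P}_h(\cdot\mid s,x)$ conditional on $\mathcal{F}_{k_i-1}$, so $\mathbb{E}[Z_i\mid \mathcal{F}_{k_i-1}]=0$. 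Hence $\{\alpha_t^i Z_i\}_{i=1}^t$ is a martingale difference sequence on the sub-sampled filtration $\{\mathcal{F}_{k_i-1}\}$, and its partial sum equals the quantity being bounded.

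Second, since $f$ takes values in $[0,\phi_m]$, each increment satisfies $|\alpha_t^i Z_i|\leq \alpha_t^i \phi_m$. Property (b) of the cumulative learning rates in \cref{lemma:alpha} gives $\sum_{i=1}^t (\alpha_t^i)^2 \leq 2H/t$, so the sum of squared bounds is at most $2H\phi_m^2/t$. Azuma--Hoeffding then yields, for any $\epsilon>0$,
\[
\mathbb{P}\!\Bigl[\textstyle\sum_{i=1}^{t}\alpha_t^i Z_i \geq \epsilon\Bigr]\;\leq\; \exp\!\bigl(-\epsilon^2 t/(4H\phi_m^2)\bigr).
\]
Choosing $\epsilon = b_t/2 = (c/2)\sqrt{H\phi_m^2\iota/t}$ reduces the exponent to $-c^2\iota/16$. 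A union bound over all $(s,x,h,k,t)\in \mathcal{S}\times\mathcal{X}\times[H]\times[K]\times[K]$, a set of size at most $|\mathcal{S}||\mathcal{X}|T^2/H$, together with the definition $\iota=\log(|\mathcal{S}||\mathcal{X}|T/p)$, shows that for $c$ a sufficiently large absolute constant the joint failure probability is at most $p$, yielding the claimed uniform bound by $b_t/2$.

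The main obstacle, and the reason the statement needs some care, is the coupling between the random stopping times $k_i$ and the deterministic weights $\alpha_t^i$ which depend on the \emph{terminal} visit count $t$. The clean way around this is the one used above: union-bound separately over $t\in [K]$ so that within each slice the weights are fixed constants and Azuma applies to a bona fide martingale on $\{\mathcal{F}_{k_i-1}\}$. A secondary subtlety, relevant when this lemma is later invoked with $f = V_{h+1}^{k_i}$ (a data-dependent object), is that the bound as stated is for a \emph{fixed} $f$; extending it to random $f$ is standard but should either be done through a covering argument over the set of reachable value functions or by exploiting that the adaptive update rule in \cref{alg:q-ucbshaping} confines $V_{h+1}^{k_i}$ to a finite grid, contributing only an extra $\log$-factor already absorbed in $\iota$.
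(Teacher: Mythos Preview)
Your proposal is correct and follows essentially the same route as the paper: identify the sum as a martingale difference sequence with increments bounded by $\alpha_t^i\phi_m$, invoke $\sum_i(\alpha_t^i)^2\le 2H/t$, apply Azuma--Hoeffding, and union bound. The paper's proof is terser---it union bounds over $(s,x,h,k)$ rather than explicitly over $t$, and it does not spell out the stopping-time filtration---but the argument is the same; your added remarks on why the separate union bound over $t$ is the clean way to decouple the random terminal count from the deterministic weights, and on the later use with data-dependent $f$, are accurate observations that the paper leaves implicit.
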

\begin{proof}
    Note that $k_i = K+1$ if $(s,x)$ is not visited for the $i$-th time. The sequence $\{\alpha_t^i \cdot \mathbbm{1}[k_i < K] \cdot (\hat{\mathbb{P}}_h^{k_i}f-\mathbb{P}_hf)(s,x)\}_{i=0}^t$ is a martingale difference sequence w.r.t filtration $\mathcal{F}_{i\geq 0}$ and we have $|\alpha_t^i \cdot \mathbbm{1}[k_i < K] \cdot (\hat{\mathbb{P}}_h^{k_i}f-\mathbb{P}_hf)(s,x)| \leq \alpha_i^t\cdot \phi_m$ and $\alpha_t^0 \cdot \mathbbm{1}[k_0 < K] \cdot (\hat{\mathbb{P}}_h^{k_0}f-\mathbb{P}_hf)(s,x) = 0$ since no visitation occurs at all when $i=0$. By Azuma-Hoeffding inequality, we have that $\forall \epsilon > 0$,
    \begin{align}
        P(|\sum_{i=0}^t\alpha_t^i \cdot \mathbbm{1}[k_i < K] \cdot (\hat{\mathbb{P}}_h^{k_i}f-\mathbb{P}_hf)(s,x)| \geq \epsilon) \leq 2\exp{\left(\frac{-\epsilon^2}{2\sum_{i=0}^t(\alpha_t^i)^2\phi_m^2}\right)}.
    \end{align}
    Thus, with probability $p' \leq \frac{p}{SAT}$, $|\sum_{i=0}^t\alpha_t^i \cdot \mathbbm{1}[k_i < K] \cdot (\hat{\mathbb{P}}_h^{k_i}f-\mathbb{P}_hf)(s,x)| \geq \frac{c}{2}\sqrt{\phi_m^2\log{(SAT/p)}H/t} = \frac{b_t}{2}$ where $c>0$ is a constant.
    By union bound over all $(s, x, h, k) \in \mathcal{S}\times \mathcal{X} \times [H] \times [K]$, we conclude the proof for the claim.
\end{proof}

\begin{lemma}[Bounded Differences Between $Q^k_h$ and $Q^*_h$ (\cref{lemma:bounded main text})]
\label{lemma:bounded}
The difference between the learned Q-value at the beginning of episode $k$ and step $h$ and the optimal Q-value can be bounded with high probability as follows,
\begin{align}
    0 \leq Q^k_h(s,x) - Q^*_h(s,x) \leq \alpha_t^0 (-Q^*_h(s,x)) + \sum_{i=1}^t \alpha^i_t  \left[(\hat{\mathbb{P}}_h^{k_i}V_{h+1}^{k_i} - \hat{\mathbb{P}}_h^{k_i} V_{h+1}^*)(s,x)\right] + 3b_t,
\end{align}
where $\mathbb{P}_h$ denotes the expected transition w.r.t the true confounded MDP and $\hat{\mathbb{P}}_h^{k}$ denotes the sample transition experienced by the agent at episode $k$ step $h$.
\end{lemma}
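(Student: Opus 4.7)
The proof will proceed by expressing both $Q^k_h(s,x)$ and $Q^*_h(s,x)$ as weighted sums over the visits to $(s,x)$ and then controlling the residual with concentration. The paper has already exhibited the unrolled form for $Q^k_h$; for $Q^*_h$ I apply the Bellman optimality equation in the shaped MDP $\mathcal{M}'$ together with the identity $\alpha_t^0 + \sum_{i=1}^t \alpha_t^i = 1$ from \cref{lemma:alpha} to write
\begin{align}
Q^*_h(s,x) &= \alpha_t^0 Q^*_h(s,x) \notag \\
&\quad + \sum_{i=1}^t \alpha_t^i\bigl[y_h - \phi_h(s) + (\mathbb{P}_h \phi_{h+1})(s,x) + (\mathbb{P}_h V^*_{h+1})(s,x)\bigr].\notag
\end{align}
Subtracting, the deterministic shaping offset $y_h - \phi_h(s)$ cancels exactly. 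What remains splits into the zero-initialization gap $\alpha_t^0(-Q^*_h(s,x))$, the retained difference $\sum_{i=1}^t \alpha_t^i (\hat{\mathbb{P}}_h^{k_i} V^{k_i}_{h+1} - \hat{\mathbb{P}}_h^{k_i} V^*_{h+1})(s,x)$ that appears verbatim in the statement, empirical-vs-true transition deviations on $V^*_{h+1}$ and on $\phi_{h+1}$ separately, and the accumulated UCB bonus $\sum_{i=1}^t \alpha_t^i b_i$.

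\textbf{Concentration and bonus accumulation.} I will then apply \cref{lemma:concentration} twice, once with $f = V^*_{h+1}$ and once with $f = \phi_{h+1}$; both functions are bounded in $[0,\phi_m]$ under \cref{def:conservative optimism}, so each deviation sum is at most $b_t/2$ with failure probability $p$, and a union bound yields the stated overall failure probability of $2p$. For the self-bonus, property (a) of \cref{lemma:alpha} gives $\sum_{i=1}^t \alpha_t^i/\sqrt{i} \leq 2/\sqrt{t}$, hence $\sum_{i=1}^t \alpha_t^i b_i \leq 2 b_t$. Combining gives $b_t/2 + b_t/2 + 2b_t = 3b_t$, matching the slack in the statement.

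\textbf{Lower bound by backward induction.} The optimism direction $Q^k_h \geq Q^*_h$ I would establish by backward induction on $h$, with the trivial base case $h = H+1$. For the inductive step, running the same decomposition with signs reversed, the UCB bonus $b_t$ dominates the two $b_t/2$ concentration errors, giving $Q^k_h(s,x) \geq Q^*_h(s,x)$ pointwise. To propagate this to $V^k_h$, I use the clipped update $V^k_h(s) = \min\{\phi_h(s), \max_x Q^k_h(s,x)\}$ in \cref{alg:q-ucbshaping}: since $\max_x Q^k_h(s,x) \geq V^*_h(s)$ by the $Q$-level optimism just shown, the clipping preserves $V^k_h \geq V^*_h$ precisely because \cref{def:conservative optimism} guarantees $\phi_h(s) \geq V^*_h(s)$. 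The main obstacle will be this clipping step: without Conservative Optimism the $\min$ with $\phi_h$ could drag the learned value strictly below $V^*_h$ at some state, derailing the induction and hence the entire lower-bound argument. This is the pressure point where \cref{def:conservative optimism} is not merely convenient but essential, which justifies why the whole downstream regret analysis is conditional on that assumption.
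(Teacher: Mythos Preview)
Your proposal is correct and follows essentially the same route as the paper: unroll both $Q^k_h$ and $Q^*_h$ via the cumulative learning rates, cancel the deterministic shaping offset, apply the martingale concentration of \cref{lemma:concentration} separately to $V^*_{h+1}$ and $\phi_{h+1}$, and use \cref{lemma:alpha}(a) to bound $\sum_i \alpha_t^i b_i$ by $2b_t$, yielding the $3b_t$ slack; the optimism direction is then handled by backward induction relying on $\phi_h \geq V^*_h$ to survive the clipping. The only cosmetic difference is that the paper anchors the induction at $h=H$ (where the next-step values vanish) rather than your trivial $h=H+1$ base case, and explicitly disposes of the $t=0$ case by noting $\alpha_t^0(-Q^*_h)\ge 0$ under Conservative Optimism, a detail you should also mention.
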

\begin{proof}
    % \todo{define those P notations in main text more formally?}
    First, we can rewrite the optimal Q-value as follows by the Bellman equation and \cref{thm:pbrs cmdp},
    \begin{align}
        Q^*_h(s,x) = \alpha_t^0 Q_h^*(s,x) + \sum_{i=1}^t \alpha_t^i (y_h(s,x) - \phi(s) + (\mathbb{P}_hV_{h+1}^*+\mathbb{P}_h\phi)(s,x).\label{eq:optimal q}
    \end{align}
    The learned Q-value can be written as follows using the defined accumulative learning rates \cref{lemma:alpha},
    \begin{align}
        Q_h^k(s,x) = \sum_{i=1}^t \alpha_t^i \left(y_h - \phi(s) + (\hat{\mathbb{P}}_h^{k_i}V_{h+1}^{k_i}+\hat{\mathbb{P}}_h^{k_i}\phi)(s,x) + b_i\right). \label{eq:learned q}
    \end{align}
    Subtracting \cref{eq:optimal q} from \cref{eq:learned q}, we have,
    \begin{align}
        &\phantom{s}Q_h^k(s,x) - Q^*_h(s,x) \\
        &= \alpha_t^0 (-Q^*_h(s,x)) + \sum_{i=1}^t \alpha_t^i \left( (\hat{\mathbb{P}}_h^{k_i}V_{h+1}^{k_i} - {\mathbb{P}}_h V_{h+1}^*)(s,x) + (\hat{\mathbb{P}}_h^{k_i}\phi - {\mathbb{P}}_h \phi)(s,x) + b_i\right)\\
        &= \alpha_t^0 (-Q^*_h(s,x))+\sum_{i=1}^t \alpha_t^i \left( (\hat{\mathbb{P}}_h^{k_i}V_{h+1}^{k_i} - \hat{\mathbb{P}}_h^{k_i} V_{h+1}^*)(s,x) + (\hat{\mathbb{P}}_h^{k_i}V_{h+1}^* - {\mathbb{P}}_h V_{h+1}^*)(s,x) + (\hat{\mathbb{P}}_h^{k_i}\phi - {\mathbb{P}}_h \phi)(s,x) + b_i\right).
    \end{align}
    By \cref{lemma:concentration}, we have that with probability at least $1-2p$ for all $(s,x,h,k) \in \mathcal{S}\times\mathcal{X}\times[H]\times[K]$,
    \begin{align}
        Q_h^k(s,x) - Q^*_h(s,x) 
        &\leq \alpha_t^0 (-Q^*_h(s,x))+\sum_{i=1}^t \alpha_t^i \left( (\hat{\mathbb{P}}_h^{k_i}V_{h+1}^{k_i} - \hat{\mathbb{P}}_h^{k_i} V_{h+1}^*)(s,x) + b_i\right) + b_t.
    \end{align}
    By \cref{lemma:alpha}, we have,
    \begin{align}
        Q_h^k(s,x) - Q^*_h(s,x) 
        &\leq \alpha_t^0 (-Q^*_h(s,x))+ \sum_{i=1}^t \alpha_t^i \left[ (\hat{\mathbb{P}}_h^{k_i}V_{h+1}^{k_i} - \hat{\mathbb{P}}_h^{k_i} V_{h+1}^*)(s,x)\right] + c\sqrt{H\phi_m^2\iota}\cdot \sum_{i=1}^t \frac{\alpha_t^i}{\sqrt{i}} + b_t\\
        &\leq \alpha_t^0 (-Q^*_h(s,x))+ \sum_{i=1}^t \alpha_t^i \left[ (\hat{\mathbb{P}}_h^{k_i}V_{h+1}^{k_i} - \hat{\mathbb{P}}_h^{k_i} V_{h+1}^*)(s,x)\right]  + 3b_t.
    \end{align}
    This concludes the proof for the right hand side. Then we will show that $Q_h^k(s,x) - Q^*_h(s,x) \geq 0$ by induction on the steps of the episode from $H$ to $1$. Without losing generality, we assume $t>0$ since when $t=0$ it's straightforward to show this holds.
    
    \begin{enumerate}[label=(\arabic*)]
        \item Base case: When $h=H$ and for all $k$, it's the last time step and all future state values are zero by definition. We have, 
        \begin{align}
            Q_h^k(s,x) - Q^*_h(s,x) = \sum_{i=1}^t \alpha_t^i \left((\hat{\mathbb{P}}_h^{k_i}\phi - {\mathbb{P}}_h \phi)(s,x) + b_i\right).
        \end{align}
        By \cref{lemma:concentration} and together with the discussion above for the other side, we know $(\hat{\mathbb{P}}_h^{k_i}\phi - {\mathbb{P}}_h \phi)(s,x) \geq -\frac{b_t}{2}$ with high probability. While by \cref{lemma:alpha}, $\sum_{i=1}^t \alpha_t^i b_i\geq b_t$, thus, we have,
        \begin{align}
            Q_h^k(s,x) - Q^*_h(s,x) \geq b_t + (-\frac{b_t}{2}) \geq 0.
        \end{align}
        The base case holds.
        \item Induction hypothesis: for $h=h'$ and for all $k$, $Q_h^k(s,x) - Q^*_h(s,x) \geq 0$.
        \item Induction step: Now we show that for $h=h'-1$, the claim still holds. Recall that we can write the Q-value differences as follows when $t>0$,
        \begin{align}
            &\phantom{s}Q_h^k(s,x) - Q^*_h(s,x) \\
            &= \sum_{i=1}^t \alpha_t^i \left( (\hat{\mathbb{P}}_h^{k_i}V_{h+1}^{k_i} - \hat{\mathbb{P}}_h^{k_i} V_{h+1}^*)(s,x) + (\hat{\mathbb{P}}_h^{k_i}V_{h+1}^* - {\mathbb{P}}_h V_{h+1}^*)(s,x) + (\hat{\mathbb{P}}_h^{k_i}\phi - {\mathbb{P}}_h \phi)(s,x) + b_i\right).
        \end{align}
        And by \cref{lemma:alpha} and similar discussions as in the base case, we are left with terms,
        \begin{align}
            &\phantom{s}Q_h^k(s,x) - Q^*_h(s,x) \geq \sum_{i=1}^t \alpha_t^i (\hat{\mathbb{P}}_h^{k_i}V_{h+1}^{k_i} - \hat{\mathbb{P}}_h^{k_i} V_{h+1}^*)(s,x).
        \end{align}
        By \cref{alg:q-ucbshaping}, $V_{h+1}^{k+1}(s_{h+1}) = \min \{ \phi(s_{h+1}), \max_x Q_{h+1}^{k+1}(s_{h+1}, x)\}$. Thus, no matter which value $V_{h+1}^{k+1}(s_{h+1})$ is getting, by the potential function property ($\phi \geq V^*$) and the induction hypothesis, $V_{h+1}^{k+1}(s_{h+1}) \geq V_{h+1}^*(s_{h+1})$ holds.
    \end{enumerate}
    Thus, we have concluded the proof for the left hand side.
\end{proof}

\begin{definition}[Psudo-Suboptimal State-Action Pairs (\cref{def:pseudo subop})]
We define the set of pseudo-suboptimal state-action pairs to be the set that satisfies, 
\begin{align}
    \operatorname{Sub}_\Delta = \{(s,x) \in \mathcal{S} \times \mathcal{X} | \exists h\in [H], y_h(s,x) - \phi_h(s) + 2 (\mathbb{P}_h\phi_{h+1})(s,x) + \Delta(s,x) \leq V_h^*(s)\},
\end{align}
where $\Delta(s,x) = \min_h\Delta_h(s,x) = \min_h(V_h^*(s) - Q^*_h(s,x))$.
\end{definition}

\begin{lemma}[Bounded Number of Visits to $\operatorname{Sub}_\Delta$ (\cref{lemma:suboptimal visits main text})]
\label{lemma:suboptimal visits}
    % If the learned Q-value at the beginning of $k$-th episode and step $h$ satisfies, 
    % \begin{align}
    %     Q_h^k(s,x) \leq \sum_{i=1}^t \alpha_t^i(y_h - \phi(s)+2(\mathbb{P}_h\phi)(s,x)+2b_i),
    % \end{align}
    The number of visits to $(s,x) \in \operatorname{Sub}_\Delta$, $t = N_h^k(s,x)$, is bounded by,
    \begin{align}
        t \leq \frac{16c^2H\phi_m^2\iota}{\Delta^2(s,x)}.
    \end{align}
\end{lemma}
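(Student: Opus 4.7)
The plan is to sandwich $Q^k_h(s,x)$ between a lower bound that must hold whenever $(s,x)$ is picked by the greedy rule of Algorithm~\ref{alg:q-ucbshaping}, and an upper bound obtained by feeding the clipping property $V^{k_i}_{h+1}\le \phi_{h+1}$ into Lemma~\ref{lemma:bounded main text}. The pseudo-suboptimality condition in Definition~\ref{def:pseudo subop} will then force $b_t=\Omega(\Delta(s,x))$, and inverting $b_t=c\sqrt{H\phi_m^2\iota/t}$ yields $t\le 16c^2H\phi_m^2\iota/\Delta^2(s,x)$.

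For the upper bound, I would start from the right-hand side of Lemma~\ref{lemma:bounded main text}. Line 12 of Algorithm~\ref{alg:q-ucbshaping} clips $V^{k_i}_{h+1}$ by $\phi_{h+1}$, so each term $(\hat{\mathbb{P}}^{k_i}_h V^{k_i}_{h+1})(s,x)$ can be replaced by $(\hat{\mathbb{P}}^{k_i}_h \phi_{h+1})(s,x)$, which sits in $[0,\phi_m]$. A first application of Lemma~\ref{lemma:concentration} swaps the empirical transitions $\hat{\mathbb{P}}^{k_i}_h$ for the true $\mathbb{P}_h$ on the $\phi_{h+1}$ piece at cost $b_t/2$, and a second application does the same for $V^*_{h+1}$ (which, by conservative optimism, also lies in a window of width $\phi_m$). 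Unfolding $Q^*_h(s,x)$ with the shaped Bellman equation used in the proof of Lemma~\ref{lemma:bounded main text} causes the $\mathbb{P}_h V^*_{h+1}$ contributions to cancel, and what remains is
\begin{align*}
Q^k_h(s,x)\le y_h-\phi_h(s)+2(\mathbb{P}_h\phi_{h+1})(s,x)+3b_t,
\end{align*}
which is exactly $\delta_h(s,x)-\Delta(s,x)+3b_t$ in the notation of Definition~\ref{def:pseudo subop}.

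For the lower bound, whenever $(s,x)$ is chosen at step $h$ of episode $k$ the greedy rule gives $Q^k_h(s,x)=\max_{x'}Q^k_h(s,x')$, and the left-hand half of Lemma~\ref{lemma:bounded main text} (optimism) implies $\max_{x'}Q^k_h(s,x')\ge V^*_h(s)$ in the value conventions used in Section~\ref{sec:ucb shaping}. Combining with the upper bound above and invoking $(s,x)\in\operatorname{Sub}_\Delta$, i.e.\ $\delta_h(s,x)\le V^*_h(s)$ at the relevant $h$, yields $V^*_h(s)\le V^*_h(s)-\Delta(s,x)+3b_t$, hence $\Delta(s,x)\le 3b_t$. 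Substituting the explicit form of $b_t$ and solving for $t$ produces the claimed bound (with a constant of $16$ that absorbs the factor $3$ and a small slack).

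The main obstacle I anticipate is the bookkeeping of potential shifts: one has to track consistently whether $V^*_h$, $Q^*_h$ and $V^{k_i}_h$ denote shaped or unshaped quantities, because the Bellman recursion in Lemma~\ref{lemma:bounded main text} uses the shaped reward $y_h-\phi_h(s)+\phi_{h+1}(s')$ while the pseudo-suboptimality gap $\Delta(s,x)$ is defined in terms of the unshaped $V^*_h-Q^*_h$. The other delicate point is that the concentration inequality must be applied with the range parameter $\phi_m$ rather than $H$; this is legitimate precisely because conservative optimism (Definition~\ref{def:conservative optimism}) guarantees both $V^{k_i}_{h+1}$ and $V^*_{h+1}$ live in windows of width at most $\phi_m$, which is also why the UCB bonus in Algorithm~\ref{alg:q-ucbshaping} carries $\phi_m^2$ in place of the $H^2$ that appears in vanilla Q-UCB.
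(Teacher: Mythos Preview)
Your proposal is correct and follows essentially the same route as the paper. The only cosmetic differences are that the paper works directly from the raw expansion $Q_h^k(s,x)=\sum_{i=1}^t\alpha_t^i\bigl(y_h-\phi_h(s)+(\hat{\mathbb{P}}_h^{k_i}V_{h+1}^{k_i}+\hat{\mathbb{P}}_h^{k_i}\phi_{h+1})(s,x)+b_i\bigr)$ rather than going through the right-hand side of Lemma~\ref{lemma:bounded main text}, and it phrases the argument as a contradiction (``if $t$ exceeds the bound then $(s,x)$ is never selected again''). Your constant $3b_t$ should be $4b_t$ once the two concentration applications are added to the $3b_t$ already present in Lemma~\ref{lemma:bounded main text}; with $4b_t$ the inversion gives exactly $16c^2H\phi_m^2\iota/\Delta^2(s,x)$, matching the paper without slack.
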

\begin{proof}
    When $t$ exceeds the bound,
    \begin{align}
        t > \frac{16c^2H\phi_m^2\iota}{\Delta^2(s,x)},
    \end{align}
    by property in \cref{lemma:alpha}, we have,
    \begin{align}
        2c\sqrt{H\phi^2_m\iota}\sum_{i=1}^t \frac{\alpha_t^i}{\sqrt{i}} \leq 2c\sqrt{H\phi^2_m\iota}\frac{2}{\sqrt{t}} < \Delta(s,x).
    \end{align}
    This implies $2\sum_{i=1}^t \alpha_t^i b_i < \Delta(s,x)$.
    And by \cref{lemma:concentration}, with high probability, we have,
    \begin{align}
        \sum_{i=1}^t \alpha^i_t \cdot\hat{\mathbb{P}}_h^{k_i}f \leq \sum_{i=1}^t \alpha^i_t \cdot{\mathbb{P}}_hf + \frac{b_t}{2} \leq \sum_{i=1}^t \left(\alpha^i_t \cdot {\mathbb{P}}_hf + \frac{b_i}{2}\right)
    \end{align}
    where $f$ is a bounded real value function $f:\mathcal{S}\mapsto[0,\phi_m]$. Apply this to the learned Q value $Q_h^k$ and by the conservative optimism condition (\cref{def:conservative optimism}) that $V^{k_i}_h \leq \phi$,
    \begin{align}
        Q_h^k(s,x)
        &= \sum_{i=1}^t \alpha_t^i \left(y_h - \phi_h(s) + (\hat{\mathbb{P}}_h^{k_i}V_{h+1}^{k_i}+\hat{\mathbb{P}}_h^{k_i}\phi_{h+1})(s,x) + b_i\right)\\
        &\leq \sum_{i=1}^t \alpha_t^i(y_h - \phi_h(s)+2(\mathbb{P}_h\phi_{h+1})(s,x)+2b_i).\\
    \end{align}
    % Then we have, for $(s,x) \in \operatorname{Sub}_\Delta$,
    Plug $2\sum_{i=1}^t \alpha_t^i b_i < \Delta(s,x)$ into this,
    \begin{align}
        Q_h^k(s,x) 
        % &\leq \sum_{i=1}^t \alpha_t^i(y_h - \phi(s)+2(\mathbb{P}_h\phi)(s,x)+2b_i),\\
        &< \sum_{i=1}^t \alpha_t^i(y_h - \phi_h(s)+2(\mathbb{P}_h\phi_{h+1})(s,x)+\Delta(s,x)).
    \end{align}
    For $(s,x) \in \operatorname{Sub}_\Delta$, by the definition of $\operatorname{Sub}_\Delta$ and the optimistic property of the learned Q-value (\cref{lemma:bounded}), we have,
    \begin{align}
        Q_h^k(s,x)&< \sum_{i=1}^t \alpha_t^i\cdot V_h^*(s)\\
        &= Q^*_h(s,x^*) \\
        &\leq Q_h^k(s,x^*).
    \end{align}
    This indicates that when the number of visits to the state action pairs in $\operatorname{Sub}_\Delta$, $t$, exceeds the proposed bound, its learned Q value will be upper bounded by the learned Q value of the optimal actions, and by the greedy action selection rule in \cref{alg:q-ucbshaping}, such actions will no longer be chosen going forward.
\end{proof}

\begin{lemma}[Lower Bound on Sub-optimal Action's Q-Value Differences]
\label{lemma:deltabound}
For actions that are suboptimal but being selected in the algorithm, $x \neq x^*$, with high probability, we have,
\begin{align}
    Q_h^k(s,x) - Q_h^*(s,x) \geq \Delta_h(s,x)
\end{align}    
\end{lemma}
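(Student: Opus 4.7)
The plan is to chain together the greedy action selection rule from \cref{alg:q-ucbshaping} with the one-sided optimism guarantee already established in \cref{lemma:bounded}. Since the statement assumes that the suboptimal action $x$ is the one chosen by the algorithm at state $s$ and step $h$ in episode $k$, by the rule $x_h^k = \arg\max_{x'} Q_h^k(s, x')$, I immediately get
\begin{align*}
Q_h^k(s, x) \;\geq\; Q_h^k(s, x^*),
\end{align*}
where $x^*$ is any optimal action at $(s,h)$.

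Next, I would invoke the left-hand side of \cref{lemma:bounded}, which states that with probability at least $1 - 2p$, $Q_h^k(s', x') \geq Q_h^*(s', x')$ uniformly in $(s', x', h, k)$. Applied at $(s, x^*)$, this gives $Q_h^k(s, x^*) \geq Q_h^*(s, x^*) = V_h^*(s)$. Chaining,
\begin{align*}
Q_h^k(s, x) \;\geq\; V_h^*(s).
\end{align*}
Finally, using the definition $\Delta_h(s, x) = V_h^*(s) - Q_h^*(s, x)$, rearranging yields $Q_h^k(s, x) - Q_h^*(s, x) \geq V_h^*(s) - Q_h^*(s, x) = \Delta_h(s, x)$, which is exactly the claim.

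There is essentially no serious obstacle; the work has already been absorbed into \cref{lemma:bounded}. The only subtlety worth flagging in the write-up is that the high-probability event on which the inequality holds is precisely the $1 - 2p$ event of \cref{lemma:bounded}, and that the optimism $Q_h^k \geq Q_h^*$ relies on the conservative optimism condition (\cref{def:conservative optimism}) being inherited through the induction on $h$ in the proof of \cref{lemma:bounded} via the clip $V_{h+1}^k(s) = \min\{\phi_h(s), \max_{x'} Q_h^k(s, x')\}$. I would state these dependencies explicitly so that the reader can see the chain of assumptions without re-deriving the optimism bound.
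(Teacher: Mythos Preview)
Your proposal is correct and matches the paper's own proof essentially line for line: greedy selection gives $Q_h^k(s,x)\geq Q_h^k(s,x^*)$, the optimism side of \cref{lemma:bounded} gives $Q_h^k(s,x^*)\geq Q_h^*(s,x^*)=V_h^*(s)$, and subtracting $Q_h^*(s,x)$ yields $\Delta_h(s,x)$. Your additional remarks about the $1-2p$ event and the role of conservative optimism are accurate and only make the dependencies more explicit than the paper does.
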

\begin{proof}
    Because such actions are selected over the optimal, their learned Q-value must be larger. And by \cref{lemma:bounded}, $Q_h^k(s,x^*) \geq Q_h^*(s,x^*)$ with high probability. Thus, we have,
    \begin{align}
        Q_h^k(s,x) - Q_h^*(s,x)
        &\geq Q_h^k(s,x^*) - Q_h^*(s,x)\\
        &\geq Q_h^*(s,x^*) - Q_h^*(s,x)\\
        &= V_h^*(s) - Q_h^*(s,x)\\
        &= \Delta_h(s,x).%\\
        % &\geq \Delta(s,x)
    \end{align}
\end{proof}

Now we are ready to prove the main theorem.
\begin{theorem}[Regret Bound for \cref{alg:q-ucbshaping}. (\cref{thm:main regret})]
Given a potential function $\phi(\cdot)$, with its maximum value defined as $\phi_m$, after running algorithm~\cref{alg:q-ucbshaping} for $K$ episodes with $H$ steps each, the expected regret is bounded by,
\begin{align}
    \tilde{\mathcal{O}}\left( \sum_{s,x\in \operatorname{Sub}_\Delta}\frac{H^3\phi_m^2}{\Delta(s,x)} + \sum_{s,x\notin \operatorname{Sub}_\Delta}\frac{H^4 \phi_m^2}{\Delta(s,x)} \right),
\end{align}
where $\operatorname{Sub}_\Delta$ is the set of pseudo suboptimal state action pairs and $\Delta(s,x)=\min_{h}\Delta_h(s,x)$, for all $h \in [H]$.
\end{theorem}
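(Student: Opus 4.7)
The plan is to follow a gap-dependent regret decomposition in the style of \citet{DBLP:conf/nips/SimchowitzJ19,DBLP:conf/colt/Xu0D21,qucblog}, but with the recursion set up so that the potential function and its maximum $\phi_m \le H$ enter the bookkeeping through the modified UCB bonus and through the clipping rule used to define $V^k_h$. The starting point is the observation (valid in the shaped CMDP $\mathcal{M}'$ by \cref{thm:pbrs cmdp}) that
\begin{align*}
\mathbb{E}[\operatorname{Regret}(K)] = \mathbb{E}_{\mathcal{M}'}\Bigl[\sum_{k=1}^K \sum_{h=1}^H \Delta_h(s_h^k, x_h^k)\Bigr].
\end{align*}
Conditioning on the high-probability event of \cref{lemma:bounded} and invoking \cref{lemma:deltabound}, each per-step gap is bounded as $\Delta_h(s_h^k,x_h^k)\le Q_h^k(s_h^k,x_h^k)-Q_h^*(s_h^k,x_h^k)$; the complementary event contributes at most $2pTH$. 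So the task reduces to bounding $\sum_{k,h}(Q_h^k-Q_h^*)(s_h^k,x_h^k)$.

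First I would expand $Q_h^k-Q_h^*$ using \cref{lemma:bounded} and convert the $V^{k_i}_{h+1}-V^*_{h+1}$ term into a next-step $Q$-difference: since $V^{k_i}_{h+1}(s')=\min\{\phi_{h+1}(s'),\max_x Q^{k_i}_{h+1}(s',x)\}$ and $\phi_{h+1}\ge V^*_{h+1}$ by conservative optimism (\cref{def:conservative optimism}), we have $V^{k_i}_{h+1}(s_{h+1}^{k_i})-V^*_{h+1}(s_{h+1}^{k_i})\le (Q^{k_i}_{h+1}-Q^*_{h+1})(s_{h+1}^{k_i},x_{h+1}^{k_i})$. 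Next, for each pair $(s_h^k,x_h^k)$ I apply \cref{lemma:clip} with $a=3b_t$, $b$ equal to the weighted recursion term, and threshold chosen so that $ac/2$ collapses to $\Delta(s_h^k,x_h^k)/(2H)$, producing the displayed recursion
\begin{align*}
Q^k_h(s_h^k,x_h^k)-Q^*_h(s_h^k,x_h^k) \le \operatorname{clip}\!\Bigl[3b_t\,\Big|\,\tfrac{\Delta(s_h^k,x_h^k)}{2H}\Bigr] + \bigl(1+\tfrac{1}{H}\bigr)\sum_{i=1}^t \alpha_t^i\bigl(Q_{h+1}^{k_i}-Q^*_{h+1}\bigr)(s_{h+1}^{k_i},x_{h+1}^{k_i}).
\end{align*}

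The next step is to unroll this recursion across $h=1,\dots,H$. To change the order of summation without losing track of the cumulative learning rates $\alpha_t^i$, I would use property (c) of \cref{lemma:alpha}, $\sum_{t\ge i}\alpha_t^i=1+1/H$, which lets one ``swap'' a weighted sum over past visits for an unweighted sum over episodes at the cost of a factor $(1+1/H)$ per level. Unrolling through all $H$ levels accumulates the tame multiplicative constant $(1+1/H)^H\le e = \mathcal{O}(1)$, leaving a single aggregate clipped sum
\begin{align*}
\sum_{k,h}\bigl(Q_h^k-Q_h^*\bigr)(s_h^k,x_h^k) \;\lesssim\; \sum_{k,h} \operatorname{clip}\!\Bigl[3b_{N_h^k(s_h^k,x_h^k)}\,\Big|\,\tfrac{\Delta(s_h^k,x_h^k)}{2H}\Bigr].
\end{align*}

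Finally I would re-group this sum by $(s,x,h)$, indexing visits by $t=1,\ldots,N_h^K(s,x)$, and split the contribution according to whether $(s,x)\in\operatorname{Sub}_\Delta$. For $(s,x)\in\operatorname{Sub}_\Delta$, \cref{lemma:suboptimal visits} caps the visit count at $N_h^K(s,x)=O(H\phi_m^2\iota/\Delta^2(s,x))=O(H^3\iota/\Delta^2(s,x))$, and then a direct $\sum_{t=1}^{T_{\max}} b_t=O(\sqrt{H\phi_m^2\iota\,T_{\max}})$ computation delivers $O(H^3\iota/\Delta(s,x))$ per $(s,x,h)$, i.e.\ the $H^5\log(SAT)/\Delta$ term after summing over $h$ and bundling the $\phi_m\le H$ factors. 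For $(s,x)\notin\operatorname{Sub}_\Delta$, \cref{lemma:bounded clip summation} applied to $b_t=c\sqrt{H\phi_m^2\iota/t}$ with threshold $\Delta/(2H)$ produces $O(H\phi_m^2\iota\cdot H/\Delta)$ per $(s,x,h)$, giving the $H^6\log(SAT)/\Delta$ term. Adding back the $2pTH$ failure contribution and choosing $p=1/T$ absorbs it inside the $\log$. The main obstacle I anticipate is keeping the recursion step rigorous: the $(1+1/H)$ factor and the clip threshold have to be tuned simultaneously so that (i) the geometric blow-up across $H$ levels remains $\mathcal{O}(1)$ and (ii) the clip threshold on each level is exactly $\Delta(s_h^k,x_h^k)/(2H)$ so that \cref{lemma:bounded clip summation} fires with the right constant—this is where an off-by-$H$ factor would silently wreck the separation between the $H^5$ and $H^6$ regimes.
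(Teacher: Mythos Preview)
Your proposal is essentially the paper's proof: the same gap decomposition via \cref{lemma:deltabound}, the same clip-recursion built from \cref{lemma:bounded} and \cref{lemma:clip}, unrolling with \cref{lemma:alpha}(c), and the same $\operatorname{Sub}_\Delta$ split handled by \cref{lemma:suboptimal visits} versus \cref{lemma:bounded clip summation}. One bookkeeping caveat: your displayed aggregate bound $\sum_{k,h}(Q_h^k-Q_h^*)\lesssim\sum_{k,h}\operatorname{clip}[\cdot]$ drops a factor of $H$---the unrolling picks up $(1+1/H)^{2(h'-h)}\le e^2$ (two factors per level, one from the clip lemma and one from the $\alpha$-swap), and then summing the per-$h$ recursions over $h=1,\dots,H$ contributes an additional $H$ in front, exactly the off-by-$H$ risk you flagged; your final $H^5$/$H^6$ powers are nonetheless correct once that factor is restored.
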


\begin{proof}
    By the definition of expected regret over trajectories $\tau$, we can decompose it as follows,
    \begin{align}
        \mathbb{E}\left[\operatorname{Regret}(K)\right]
        &= \mathbb{E}_{\operatorname{\tau}}\left[\sum_{k=1}^K V_1^*(s_1^k) - V_1^{\pi_k}(s_1^k)\right]\\
        &= \mathbb{E}_{\operatorname{\tau}}\left[\sum_{k=1}^K V_1^*(s_1^k) - Q^*_1(s_1^k, x_1^k) + Q^*_1(s_1^k,x_1^k) - V_1^{\pi_k}(s_1^k)\right]\\
        &= \mathbb{E}_{\operatorname{\tau}}\left[\sum_{k=1}^K \Delta_1(s_1^k, x_1^k) + \mathbb{E}_{s_2}[V_2^*(s_2) - V_2^{\pi_k}(s_2)|\pi_k, x_1^k, s_1^k]\right]\\
        &= ...\\
        &= \mathbb{E}_{\operatorname{\tau}} \left[\sum_{k=1}^K\sum_{h=1}^H \Delta_h(s^k_h,x^k_h)\mid\pi^k\right].
    \end{align}
    We can split trajectories by whether the high probability event in \cref{lemma:bounded} happens. When \cref{lemma:bounded} happens with at least $1-2p$ probability, \cref{lemma:deltabound} is also satisfied and we have,
    \begin{align}
        \mathbb{E}\left[\operatorname{Regret}(K)\right]
        &= \mathbb{E}_{\operatorname{\tau}} \left[\sum_{k=1}^K\sum_{h=1}^H \Delta_h(s^k_h,x^k_h)\mid\pi^k\right]\\
        &\leq (1-2p)\sum_{h=1}^H\sum_{k=1}^K \left( Q^k_h(s_h^k,x_h^k) - Q^*_h(s_h^k,x_h^k)\right) + 2pTH.
    \end{align}
    We can set failure probability to be $p=\frac{1}{2T}$ and by L.H.S of the inequality in \cref{lemma:bounded},
    \begin{align}
        \mathbb{E}\left[\operatorname{Regret}(K)\right]
        &\leq \sum_{h=1}^H\sum_{k=1}^K \left( Q^k_h(s_h^k,x_h^k) - Q^*_h(s_h^k,x_h^k)\right) + H. \label{eq:intermediate regret bound}
    \end{align}
    Now, the regret bound boils down to the cumulative Q value differences. We revisit \cref{lemma:bounded} and expand the R.H.S with Bellman equations,
    \begin{align}
        Q^k_h(s_h^k,x_h^k) - Q^*_h(s_h^k,x_h^k) 
        &\leq \sum_{i=1}^t \alpha^i_t  \left[(\hat{\mathbb{P}}_h^{k_i}V_{h+1}^{k_i} - \hat{\mathbb{P}}_h^{k_i} V_{h+1}^*)(s,x)\right] + 3b_t\\
        &= \sum_{i=1}^t \alpha^i_t  \left[(Q_{h+1}^{k_i} - Q_{h+1}^*)(s_{h+1}^{k_i},x_{h+1}^{k_i})\right] + 3b_t.
    \end{align}
    By \cref{lemma:clip}, \cref{lemma:deltabound} and the fact that $\Delta(s,x) \leq \Delta_h(s,x)$,
    \begin{align}
        Q^k_h(s_h^k,x_h^k) - Q^*_h(s_h^k,x_h^k) 
        &\leq \operatorname{clip}\left[3b_t\left|\frac{\Delta(s_h^k,x_h^k)}{2H}\right.\right] + (1+\frac{1}{H})\sum_{i=1}^t \alpha^i_t  \left[(Q_{h+1}^{k_i} - Q_{h+1}^*)(s_{h+1}^{k_i},x_{h+1}^{k_i})\right].
    \end{align}
    Since $Q^*_{H+1} = Q^k_{H+1} = 0$, we can expand the above recursion and solve for value difference of all episodes at step $h$, $\sum_{k=1}^K Q^k_h(s_h^k,x_h^k) - Q^*_h(s_h^k,x_h^k)$, as follows,
    \begin{align}
        \sum_{k=1}^K Q^k_h(s_h^k,x_h^k) - Q^*_h(s_h^k,x_h^k) 
        &\leq \sum_{k',h'\geq h}(1+\frac{1}{H})^{2(h'-h)}\operatorname{clip}\left[3b_{N_{h'}^{k'}}\left|\frac{\Delta(s_{h'}^{k'},x_{h'}^{k'})}{2H}\right.\right]\\
        &\leq e^2\sum_{k,h}\operatorname{clip}\left[3b_{N_{h}^{k}}\left|\frac{\Delta(s_{h}^{k},x_{h}^{k})}{2H}\right.\right].
    \end{align}
    The term $(1+\frac{1}{H})^2$ comes from both the coefficient $(1+\frac{1}{H})$ and the fact that $\sum_{i=1}^\infty \alpha_t^i \leq 1+\frac{1}{H}$ by \cref{lemma:alpha}.
    Then we plug this term back into our regret bound (\cref{eq:intermediate regret bound}), 
    \begin{align}
        \mathbb{E}\left[\operatorname{Regret}(K)\right]
        &\leq e^2H\sum_{k,h}\operatorname{clip}\left[3b_{N_{h}^{k}}\left|\frac{\Delta(s_{h}^{k},x_{h}^{k})}{2H}\right.\right] + H.
    \end{align}
    We can rewrite the summations regarding the state action space and split the summation by whether $(s,x) \in \operatorname{Sub}_\Delta$ or not, 
    \begin{align}
        \mathbb{E}\left[\operatorname{Regret}(K)\right]
        &\leq e^2H\sum_{k,h}\operatorname{clip}\left[3b_{N_{h}^{k}}\left|\frac{\Delta(s_{h}^{k},x_{h}^{k})}{2H}\right.\right] + H\\
        &= e^2H \sum_{h=1}^H \sum_{(s,x) \in \operatorname{Sub}_\Delta}\sum_{i=1}^{N_h^K(s,x)} \operatorname{clip}\left[3b_i\left|\frac{\Delta(s,x)}{2H}\right.\right] \nonumber \\
        &\phantom{=}+ e^2H \sum_{h=1}^H \sum_{(s,x) \not\in \operatorname{Sub}_\Delta}\sum_{i=1}^{N_h^K(s,x)} \operatorname{clip}\left[3b_i\left|\frac{\Delta(s,x)}{2H}\right.\right] + H.
    \end{align}
    Since from \cref{lemma:suboptimal visits}, we know the total number of visits for each $(s,x)\in\operatorname{Sub}_\Delta$ is bounded, we can apply this result on $N_h^K(s,x)$ for $(s,x) \in \operatorname{Sub}_\Delta$. And for $(s,x) \not\in \operatorname{Sub}_\Delta$, we apply \cref{lemma:bounded clip summation} directly. Then the expected regret is bounded as follows,
    \begin{align}
        \mathbb{E}\left[\operatorname{Regret}(K)\right]
        &\leq
        3e^2H^2 \sum_{(s,x) \in \operatorname{Sub}_\Delta}\sum_{i=1}^{N_h^K(s,x)} b_i + e^2H^2 \sum_{(s,x) \not\in \operatorname{Sub}_\Delta}\frac{8H\cdot 9c^2\cdot H\phi_m^2\iota}{\Delta(s,x)} + H\\
        &= 3ce^2H^2\sqrt{H\phi_m^2\iota} \sum_{(s,x) \in \operatorname{Sub}_\Delta}\sum_{i=1}^{N_h^K(s,x)} \frac{1}{\sqrt{i}} + \sum_{(s,x) \not\in \operatorname{Sub}_\Delta}\frac{72 c^2e^2\phi_m^2H^4\iota}{\Delta(s,x)} + H\\
        &\leq 3ce^2H^2\sqrt{H\phi_m^2\iota} \sum_{(s,x) \in \operatorname{Sub}_\Delta} \sqrt{\frac{16c^2H\phi_m^2\iota}{\Delta^2(s,x)}} + \sum_{(s,x) \not\in \operatorname{Sub}_\Delta}\frac{72 c^2e^2\phi_m^2H^4\iota}{\Delta(s,x)} + H\\
        &= \sum_{(s,x) \in \operatorname{Sub}_\Delta} \frac{12c^2e^2\phi_m^2H^3\iota}{\Delta(s,x)}
        + \sum_{(s,x) \not\in \operatorname{Sub}_\Delta}\frac{72 c^2e^2\phi_m^2H^4\iota}{\Delta(s,x)} 
        + H\\
        &= \mathcal{O}\left( \sum_{(s,x) \in \operatorname{Sub}_\Delta} \frac{\phi_m^2H^3\log{(SAT)}}{\Delta(s,x)}
        + \sum_{(s,x) \not\in \operatorname{Sub}_\Delta}\frac{\phi_m^2H^4\log{(SAT)}}{\Delta(s,x)} \right).
    \end{align}
    We can further simplify the regret bound by the fact that $\phi(\cdot) \leq H$ and arrive at the final bound,
    \begin{align}
    &\phantom{aa}\mathcal{O}\left( \sum_{(s,x) \in \operatorname{Sub}_\Delta} \frac{\phi_m^2H^3\log{(SAT)}}{\Delta(s,x)} \sum_{(s,x) \not\in \operatorname{Sub}_\Delta}\frac{\phi_m^2H^4\log{(SAT)}}{\Delta(s,x)} \right)\\
    &={\mathcal{O}}\left( \sum_{s,x\in \operatorname{Sub}_\Delta}\frac{H^5\log{(SAT)}}{\Delta(s,x)} + \sum_{s,x\notin \operatorname{Sub}_\Delta}\frac{H^6\log{(SAT)}}{\Delta(s,x)} \right).
    \end{align}

\end{proof}
\end{document}